\theoremstyle{plain}
\newtheorem{theorem}{Theorem}[section]
\newtheorem{proposition}[theorem]{Proposition}
\newtheorem{lemma}[theorem]{Lemma}
\newtheorem*{lemma*}{Lemma}
\newtheorem{corollary}[theorem]{Corollary}
\theoremstyle{definition}
\theoremstyle{remark}
\def\d{\mathbf{d}}
\def\u{\mathbf{u}}
\def\x{\mathbf{x}}
\def\z{\mathbf{z}}
\def\bpi{\boldsymbol{\pi}}
\def\bsigma{\boldsymbol{\sigma}}
\def\0{\mathbf{0}}
\def\1{\mathbf{1}}
\newcommand{\tomt}[1]{\textcolor{black}{#1}}
\icmltitlerunning{On Temperature Scaling and Conformal Prediction of Deep Classifiers}
\begin{document}

\twocolumn[
\icmltitle{On Temperature Scaling and Conformal Prediction of Deep Classifiers }

\begin{icmlauthorlist}
\icmlauthor{Lahav Dabah}{yyy}
\icmlauthor{Tom Tirer}{yyy}
\end{icmlauthorlist}

\icmlaffiliation{yyy}{Faculty of Engineering, Bar-Ilan University, Ramat Gan, Israel}

\icmlcorrespondingauthor{Lahav Dabah}{lahavdabah@gmail.com}

\icmlkeywords{Machine Learning, ICML}

\vskip 0.3in
]

\printAffiliationsAndNotice{\icmlEqualContribution} %

\begin{abstract}
In many classification applications, the prediction of a deep neural network (DNN) based classifier needs to be accompanied by some confidence indication. Two popular approaches for that aim are: 1) {\em Calibration}: modifies the classifier's softmax values such that the maximal value better estimates the correctness probability; and 2) {\em Conformal Prediction} (CP): produces a prediction set of candidate labels that contains the true label with a user-specified probability, guaranteeing marginal coverage but not, e.g., per class coverage.  In practice, both types of indications are desirable, yet, so far the interplay between them has not been investigated. Focusing on the ubiquitous {\em Temperature Scaling} (TS) calibration, we start this paper with an extensive empirical study of its effect on prominent CP methods. We show that while TS calibration improves the class-conditional coverage of adaptive CP methods, surprisingly, it negatively affects their prediction set sizes. Motivated by this behavior, we explore the effect of TS on CP {\em beyond its calibration application} and reveal an intriguing trend under which it allows to trade prediction set size and conditional coverage of adaptive CP methods. Then, we establish a mathematical theory that explains the entire non-monotonic trend.
Finally, based on our experiments and theory, we offer guidelines for practitioners to effectively combine adaptive CP with calibration, aligned with user-defined goals.

\end{abstract}

\section{Introduction}
\label{sec:INTRO}

Modern classification systems are typically based on deep neural networks (DNNs) \citep{krizhevsky2012imagenet,he2016deep,huang2017densely}.
In many applications, it is necessary to quantify and convey the level of uncertainty associated with each prediction of the DNN. This is %
crucial in high-stakes scenarios \tomt{where human lives are at risk}, such as 
medical diagnoses \citep{miotto2018deep} and autonomous vehicle decision-making \citep{grigorescu2020survey}. %

In practice, DNN classification models typically generate a post-softmax vector, akin to a probability vector with nonnegative entries that add up to one. 
One might, intuitively, be interested in using the value associated with the prediction as the confidence \citep{cosmides1996humans}. 
However, this value often deviates substantially from the actual correctness probability. 
This discrepancy, known as {\em miscalibration}, is prevalent in modern DNN classifiers, which frequently demonstrate overconfidence: the maximal softmax value surpasses the true correctness probability \citep{guo2017calibration}.
To address this issue, post-processing {\em calibration} methods are employed to adjust the values of the softmax vector.
In particular, \citet{guo2017calibration}
demonstrated the usefulness of a simple Temperature Scaling (TS) procedure (a single parameter variant of Platt scaling \citep{platt1999probabilistic}). 
Since then, TS calibration has gained massive popularity \citep{liang2018enhancing,ji2019bin,wang2021rethinking,frenkel2021network,ding2021local, wei2022mitigating}.
Thus studying it is of high significance.

Another post-processing approach for uncertainty indication is Conformal Prediction (CP), which was originated in \citep{vovk1999machine, vovk2005algorithmic} 
and has attracted much attention recently. 
CP algorithms are based on devising scores for all the classes per sample (based on the softmax values) that are used for producing a set of predictions instead of a single predicted class.
These methods have theoretical guarantees for {\em marginal coverage}: given a user-specified probability, 
the produced set will contain
the true label with this probability, assuming that the data samples are exchangeable (e.g., the samples are i.i.d.). 
Note that this property does not ensure {\em conditional coverage}, i.e., coverage of the true label with the specified probability when conditioning the data, e.g., to a specific class. 
Consequently, CP methods are 
usually compared by both their prediction set sizes and their conditional coverage performance.

Clearly, in critical applications, both 
calibration and CP
are desirable, as they provide complementary
types of information that can lead to a comprehensive decision.
{However, so far the interplay between them 
remains largely unexplored.}
Specifically, the works \citep{angelopoulos2020uncertainty,lu2022estimating,gibbs2023conformal,lu2023federated} apply initial {\em TS calibration} (rather than any other calibration method) before applying their CP methods. 
Yet, none of them investigates what is the effect of this procedure on the CP methods.

In this work, we study the effect of TS, arguably the most common calibration technique, on three prominent CP methods: Least Ambiguous set-valued Classifier (LAC) \citep{lei2014distribution,sadinle2019least}, Adaptive Prediction Sets (APS) \citep{romano2020classification}, and Regularized Adaptive Prediction Sets (RAPS) \citep{angelopoulos2020uncertainty}. 
Note that LAC (aka THR), APS and RAPS are probably the three most popular CP methods for classification. Indeed, important papers in this field, such as \citep{angelopoulos2020uncertainty} and \citep{stutz2021learning}, do not consider any other CP method. 
Our discoveries on the effect of TS calibration in this paper have motivated us to explore the TS mechanism also beyond its calibration application.

Our contributions can be summarized as follows:


\begin{itemize}
\item
We conduct an extensive empirical study on DNN classifiers  
that shows that an initial TS calibration affects CP methods differently. Specifically, we show that its effect is negligible for LAC, but intriguing for adaptive methods (APS and RAPS):
while their class-conditional coverage is improved, surprisingly, their prediction set sizes typically become larger. 
%


\item
Following these findings, we investigate the impact of TS on CP {\em for a wide range of temperatures, beyond the values used for calibration}. 
We reveal that by modifying the temperature, TS enables to trade the prediction set sizes and the class-conditional coverage performance for RAPS and APS. Moreover, metrics of these properties display a similar \emph{non-monotonic} pattern across all models and datasets examined.
%


\item
We present a rigorous theoretical analysis of the impact of TS on the prediction set sizes of APS and RAPS, offering a comprehensive explanation for the complex non-monotonic patterns observed empirically. %


\item  %
Based on our theoretically-backed findings, we propose practical guidelines to effectively combine adaptive CP methods with TS calibration, which allow users to control the prediction set sizes and conditional coverage trade-off.
\end{itemize}

\section{Background and Related Work} 
\label{sec:background}

Let us present the notations that are used in the paper, followed by some preliminaries on TS and CP.
We consider a $C$-classes classification task of the data $(X,Y)$ distributed on $\mathbb{R}^d \times [C]$, where $[C]:=\{1,\ldots,C\}$. 
The classification is tackled by a DNN that for each input sample $\x \in \mathbb{R}^d$ produces a logits vector $\z=\z(\x) \in \mathbb{R}^C$ that is fed into a final softmax function $\bsigma: \mathbb{R}^C \to \mathbb{R}^C$, defined as $\sigma_i(\z)=\frac{\exp{(z_i)}}{\sum_{j=1}^C\exp{(z_j)}}$.
Typically, the post-softmax vector $\hat{\bpi}(\x) = \bsigma(\z(\x))$ is being treated as an estimate of the class probabilities. 
The predicted class is given by $\hat{y}(\x)=\mathrm{argmax}_{i} \, \hat{\pi}_i(\x)$. 

\subsection{Calibration and Temperature Scaling}
\label{sec:TS}

The interpretation of $\hat{\bpi}(\x)$ as an estimated class probabilities vector promotes treating $\hat{\pi}_{\hat{y}(\x)}(\x)$ as the probability that the predicted class $\hat{y}(\x)$ is correct, also referred to as the model's  {\em confidence}. 
However, it has been shown that DNNs are frequently overconfident --- $\hat{\pi}_{\hat{y}(x)}(x)$ is larger than the true
correctness probability \citep{guo2017calibration}. 
Formally,
$\mathbb{P}\left(\hat{y}(X) = Y | \hat{\pi}_{\hat{y}(X)}(X) = p \right) < p$ with significant margin.
Post-processing calibration techniques aim at reducing the aforementioned gap.
They are based on optimizing certain transformations of the logits $\z(\cdot)$, yielding a probability vector $\tilde{\bpi}(\cdot)$ that minimizes an objective computed over a dedicated {\em calibration set} of labeled samples $\{\x_i,y_i\}_{i=1}^n$ \citep{platt1999probabilistic,zadrozny2002transforming,naeini2015obtaining,nixon2019measuring}.
Two popular calibration objectives are the Negative Log-Likelihood (NLL) \citep{hastie2005elements} and the Expected Calibration Error (ECE) \citep{naeini2015obtaining}, 
detailed in Appendix \ref{app:calibration}.

Temperature Scaling (TS) \citep{guo2017calibration} stands out as, arguably, the most common calibration approach, surpassing many others in achieving calibration with minimal computational complexity \citep{liang2018enhancing,ji2019bin,wang2021rethinking,frenkel2021network,ding2021local, wei2022mitigating}.
It simply uses the transformation $\z \mapsto \z/T$ before applying the softmax, where $T>0$ (the {\em temperature}) is a single scalar parameter  that is set by minimizing NLL or ECE. \emph{Importantly}, TS preserves the accuracy rate of the network (the ranking of the elements -- and in particular, the index of the maximum -- is unchanged). %

Hereafter, we use the notation $\hat{\bpi}_T(\x) := \bsigma(\z(\x) / T)$ to denote the output of the softmax when taking into account the temperature.
Observe that $T=1$ preserves the original probability vector. 
Let us denote by $T^*$ the temperature that is optimal for TS calibration.
Since DNN classifiers are commonly overconfident, TS calibration typically yields some $T^*>1$, which ``softens" the original probability vector. 
Formally, TS with $T>1$ raises the entropy of the softmax output (see Proposition~\ref{thm:TS_inc_ent} in the appendix).

\tomt{
The {\em reliability diagram} \citep{degroot1983comparison,niculescu2005predicting}
is a graphical depiction of a model before and after calibration. 
In Appendix~\ref{Reliability diagrams appendix}, we provide details on this procedure and display reliability diagrams for the dataset-model pairs examined in our study.}


\subsection{Conformal Prediction}
\label{sec:CP}

Conformal Prediction (CP) is a methodology that is model-agnostic and distribution-free, designed for generating 
a {\em prediction set} of classes $\mathcal{C}_\alpha(X)$ for a given sample $X$, such that $Y \in \mathcal{C}_\alpha(X)$ with probability $1-\alpha$ for a predefined $\alpha \in (0,1)$, where $Y$ is the true class associated with $X$ \citep{vovk1999machine, vovk2005algorithmic,papadopoulos2002inductive}.
The decision rule is based on a calibration set of labeled samples $\{\x_i,y_i\}_{i=1}^n$, which we hereafter refer to as the {\em CP set}, to avoid confusion with the set used for TS calibration.
The only assumption in CP is that the random variables associated with the CP set and the test samples are exchangeable (e.g., the samples are i.i.d.).

Let us state the general process of conformal prediction %
given the CP set $\{ \x_i,y_i\}_{i=1}^n$ and its deployment for a new (test) sample $x_{n+1}$ (for which $y_{n+1}$ is unknown), as presented in \citep{angelopoulos2021gentle}:
\begin{enumerate}

\vspace{-2mm}
    \item Define a heuristic score function $s(\x,y) \in \mathbb{R}$ based on some output of the model. A higher score should encode a lower level of agreement between $\x$ and $y$.
    \item Compute $\hat{q}$ as the $\frac{\lceil (n+1)(1-\alpha) \rceil}{n}$ quantile of the scores $\{s(\x_1,y_1),\dots , s(\x_n,y_n)\}$.
    \item At deployment, use $\hat{q}$ to create prediction sets for new samples:
    $\mathcal{C}_\alpha(\x_{n+1}) = \{y:\,s(\x_{n+1},y) \leq \hat{q} \}$.
\end{enumerate}

\vspace{-1mm}

CP methods possess the following coverage guarantee. 

\begin{theorem}[Theorem 1 in \citep{angelopoulos2021gentle}]
\label{thm:cp_guarantees}
Suppose that $\{\left(X_i, Y_i\right)\}_{i=1}^n$ and $(X_{n+1},Y_{n+1})$ %
are i.i.d., 
and define $\hat{q}$ as in step 2 above and $\mathcal{C}_\alpha(X_{n+1})$ as in step 3 above. Then the following holds: 
\begin{align}
\label{eq:marginal_cov}
    \mathbb{P}\left(Y_{n+1} \in \mathcal{C}_\alpha(X_{n+1})\right) \geq 1 - \alpha.
\end{align}
\end{theorem}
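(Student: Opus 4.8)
The plan is to reduce the statement to a purely combinatorial fact about the ranks of exchangeable random variables. First I would introduce the shorthand $s_i := s(X_i, Y_i)$ for $i = 1, \dots, n+1$, observing that since $(X_1,Y_1), \dots, (X_{n+1}, Y_{n+1})$ are i.i.d.\ (hence exchangeable) and $s$ is a fixed measurable function, the scalar scores $s_1, \dots, s_{n+1}$ are themselves exchangeable. The crucial reformulation is that the coverage event can be expressed entirely through these scores: by the definition of $\mathcal{C}_\alpha$ in step~3, we have $Y_{n+1} \in \mathcal{C}_\alpha(X_{n+1})$ if and only if $s_{n+1} = s(X_{n+1}, Y_{n+1}) \leq \hat{q}$. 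Thus it suffices to lower-bound $\mathbb{P}(s_{n+1} \leq \hat{q})$.

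Next I would make the quantile explicit. Writing $k := \lceil (n+1)(1-\alpha) \rceil$, the threshold $\hat{q}$ is the $k$-th smallest value among the calibration scores $s_1, \dots, s_n$ (the case $k > n$ is trivial, since then $\hat{q}$ may be taken as $+\infty$ and coverage equals one). Denoting this order statistic by $s_{(k)}$, the task becomes showing $\mathbb{P}(s_{n+1} \leq s_{(k)}) \geq 1 - \alpha$. The key step is a rank lemma: for exchangeable $s_1, \dots, s_{n+1}$, the event $\{s_{n+1} \leq s_{(k)}\}$ coincides with the event that $s_{n+1}$ falls among the $k$ smallest of the full collection $\{s_1, \dots, s_{n+1}\}$, i.e.\ that the rank $R$ of $s_{n+1}$ satisfies $R \leq k$. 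I would verify this equivalence by a counting argument: $s_{n+1} \leq s_{(k)}$ holds precisely when at most $k-1$ of the calibration scores lie strictly below $s_{n+1}$, which is the same as $R \leq k$.

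With the lemma in hand, the proof concludes quickly. By exchangeability the rank $R$ of $s_{n+1}$ within $\{s_1, \dots, s_{n+1}\}$ is (sub-)uniform on $\{1, \dots, n+1\}$, so $\mathbb{P}(R \leq k) \geq k/(n+1)$. Substituting $k = \lceil (n+1)(1-\alpha)\rceil$ and using $\lceil t \rceil \geq t$ yields $k/(n+1) \geq (n+1)(1-\alpha)/(n+1) = 1-\alpha$, which is exactly the claimed bound in \eqref{eq:marginal_cov}.

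I anticipate that the main obstacle is the careful handling of ties among the scores. In the continuous, almost-surely-distinct case the rank $R$ is exactly uniform and all the set identities above hold with equality, giving $\mathbb{P}(s_{n+1} \leq s_{(k)}) = k/(n+1)$. When ties are possible, $R$ is no longer perfectly uniform and the equivalence between $\{s_{n+1} \leq s_{(k)}\}$ and $\{R \leq k\}$ must be relaxed to an inclusion that still delivers $\mathbb{P}(s_{n+1} \leq s_{(k)}) \geq k/(n+1)$; making this rigorous requires a symmetrization argument (e.g.\ conditioning on the unordered multiset of scores and averaging over the equally likely positions of $s_{n+1}$), which is where most of the genuine care is needed. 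The ceiling in the definition of $\hat{q}$ is precisely what absorbs the rounding so that the final guarantee is never below $1-\alpha$.
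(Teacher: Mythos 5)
Your proof is correct, and there is nothing in the paper to compare it against: the paper does not prove this theorem itself, importing it verbatim as Theorem 1 of \citep{angelopoulos2021gentle} and attributing the proof to \citep{vovk1999machine}, with none of the appendix proofs concerning it. Your route --- reducing coverage to the event $s_{n+1} \leq \hat{q}$, identifying $\hat{q}$ with the order statistic $s_{(k)}$ for $k = \lceil (n+1)(1-\alpha) \rceil$ (with the trivial case $k > n$ set aside), and using exchangeability to get $\mathbb{P}(R \leq k) \geq k/(n+1) \geq 1-\alpha$ for the rank $R$ of $s_{n+1}$, with ties handled via the strict-inequality counting and symmetrization you describe --- is precisely the canonical argument in those cited sources, so your proposal matches the intended proof.
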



The proof of this result is based on \citep{vovk1999machine}.
A proof of an upper bound of $1-\alpha+1/(n+1)$ also exists.
This property is called {\em marginal coverage} since the probability is taken over the entire distribution of $(X,Y)$. 

\tomt{Note that this property} 
does not imply the much more stringent property of {\em conditional coverage}:
\begin{align}
    \mathbb{P}\left(Y_{n+1} \in \mathcal{C}_\alpha(X_{n+1}) | X_{n+1} = \x\right) \geq 1 - \alpha.
\end{align}
\tomt{In fact}, coverage for any value $\x$ of the random $X$ is impracticable \citep{vovk2012conditional}, \citep{foygel2021limits}, and a useful intuitive relaxation is to consider class-conditional coverage.

CP methods are usually compared by the size of their prediction sets 
and by their proximity to the conditional coverage property. 
Over time, various CP techniques with distinct objectives have been developed \citep{angelopoulos2021gentle}. 
There have been also efforts to alleviate the exchangeability assumption \citep{tibshirani2019conformal,barber2023conformal}. 
In this paper we will focus on three prominent CP methods. Each of them devises a different score $s(\x,y)$ based on the output of the classifier's softmax $\hat{\bpi}(\cdot)$.

\textbf{Least Ambiguous Set-valued Classifier (LAC)} \citep{lei2014distribution,sadinle2019least}.  
In this method, $s(\x,y) = 1 - \hat{\pi}_y(\x)$. Accordingly, given $\hat{q}_{LAC}$ associated with $\alpha$ through step 2, 
the prediction sets are formed as: $\mathcal{C}^{LAC}(\x) := \{y: \hat{\pi}_y(\x) \geq 1 - \hat{q}_{LAC}\}$.
LAC tends to have small set sizes (under the strong assumption that $\hat{\bpi}(\x)$ matches the posterior probability, it provably gives the smallest possible average set size). 
On the other hand, its conditional coverage is limited.

\textbf{Adaptive Prediction Sets (APS)} \citep{romano2020classification}. 
The objective of this method is to improve the conditional coverage. 
Motivated by theory derived under the strong assumption that $\hat{\bpi}(\x)$ matches the posterior probability, it uses $s(\x,y)= \sum\nolimits_{i=1}^{L_y} \hat{\pi}_{(i)}(\x)$, where $\hat{\pi}_{(i)}(\x)$ denotes the $i$-th element in a descendingly sorted version of $\hat{\bpi}(\x)$ and $L_y$ is the index that $y$ is permuted to after sorting. 
Following steps 2 and 3, yields $\hat{q}_{APS}$ and $\mathcal{C}^{APS}(\x)$.

\textbf{Regularized Adaptive Prediction Sets (RAPS)} \citep{angelopoulos2020uncertainty}. 
A modification of APS that aims at improving its prediction set sizes
by penalizing hard examples to reduce their effect on $\hat{q}$.
With the same notation as APS, in RAPS we have $s(\x,y)=\sum\nolimits_{i=1}^{L_y} \hat{\pi}_{(i)}(\x) + \lambda (L_y - k_{reg})_+$, where $\lambda, k_{reg} \geq 0$ are regularization hyperparameters and we use the notation $(\cdot)_+:=\mathrm{max}\{\cdot,0\}$. 
Following steps 2 and 3, yields $\hat{q}_{RAPS}$ and $\mathcal{C}^{RAPS}(\x)$.

Note that all these CP methods can be readily applied on $\bpi_{T^*}(\cdot)$ 
after TS calibration.
This is done, in \citep{angelopoulos2020uncertainty,lu2022estimating,gibbs2023conformal,lu2023federated}, where the authors stated that they applied TS calibration before examining the CP techniques. 
However, none of these works has examined how TS calibration impacts any CP method. All the more so, no prior work has experimented applying TS with a range of temperatures before employing CP methods.

{
After completing our work, we became aware of a concurrent work \citep{xi2024does}, which also studies the effect of TS on CP methods. Yet, their results are only a small subset of our results. They do not consider conditional coverage and explore only a limited range of temperatures, which masks the non-monotonic effect on the prediction set size of APS and RAPS (which they do not compare to LAC). In contrast, our paper provides a complete picture of the effect of TS across a wide range of temperatures on both the prediction set size and the class-conditional coverage of APS, RAPS, and LAC. This complete picture shows practitioners that tuning the temperature for APS and RAPS introduces a trade-off effect, and we provide a practical way to control it.
Lastly, note that several theoretical results on prediction sets and calibration have been reported in \citep{gupta2020distribution}. Yet, that work is limited to binary classification and does not provide an explanation for the fact that calibration affects CP methods differently and in a non-monotonic manner, as shown and analyzed in our paper.}

\begin{table*}[t]
\centering
\footnotesize
\caption{\textbf{Prediction Set Size.} AvgSize metric along with $T^*$ and accuracy for dataset-model pairs using LAC, APS, and RAPS algorithms with $\alpha=0.1$, CP set size $10\%$, pre- and post-TS calibration.}
\label{PS_Size_Table}
\vspace{1mm}
\begin{tabular}{c | c | c  c | c  c  c | c  c  c} %
    \hline
    \multicolumn{1}{c}{} &
    \multicolumn{1}{c}{} &
    \multicolumn{2}{c}{\textbf{Accuracy(\%)}} & \multicolumn{3}{c}{\textbf{AvgSize}} & \multicolumn{3}{c}{\textbf{AvgSize after TS}} \\

    Dataset-Model & $T^*$ & Top-1 &  Top-5 & LAC & APS & RAPS & LAC & APS & RAPS \\ %
    \hline
    
    ImageNet, ResNet152 & 1.227 &  78.3 &  94.0 & 1.95 & 6.34 & 2.71 & 1.92 & 11.11 & 4.30 \\

    ImageNet, DenseNet121 & 1.024 &  74.4 &  91.9 & 2.73 & 9.60 & 4.70 & 2.76 & 11.32 & 4.88 \\

    ImageNet, ViT-B/16 & 1.180 &  83.9 &  97.0 & 2.22 & 10.10 & 1.93 & 2.23 & 19.27 & 2.34 \\

    CIFAR-100, ResNet50 & 1.524 & 80.9 &  95.4 & 1.62 & 5.31 & 2.88 & 1.57 & 9.14 & 4.96 \\

    CIFAR-100, DenseNet121 & 1.469 &  76.1 &  93.5 & 2.13 & 4.26 & 2.98 & 2.06 & 6.51 & 4.27 \\

    CIFAR-10, ResNet50 & 1.761 &  94.6 &  99.7 & 0.91 & 1.04 & 0.98 & 0.91 & 1.13 & 1.05 \\
    
    CIFAR-10, ResNet34 & 1.802 &  95.3 &  99.8 & 0.91 & 1.03 & 0.94 & 0.93 & 1.11 & 1.05 \\

    \hline
\end{tabular}
\label{table: PS sizes}
\vspace{2mm}
\centering
\footnotesize
\caption{\textbf{Coverage Metrics.} MarCovGap and TopCovGap metrics for dataset-model pairs using LAC, APS, and RAPS algorithms with $\alpha=0.1$, CP set size $10\%$, pre- and post-TS calibration. }
\label{Coverage Table}
\vspace{1mm}
\begin{tabular}{c | c c c | c c  c | c  c  c | c  c  c} %
    \hline
    \multicolumn{1}{c}{} &
    \multicolumn{3}{c}{\textbf{MarCovGap(\%)}} &
    \multicolumn{3}{c}{\textbf{MarCovGap TS(\%)}} & \multicolumn{3}{c}{\textbf{TopCovGap(\%)}} & \multicolumn{3}{c}{\textbf{TopCovGap TS(\%)}} \\

    Dataset-Model & LAC & APS & RAPS & LAC & APS & RAPS & LAC & APS & RAPS & LAC & APS & RAPS \\ %
    \hline
    
    ImageNet, ResNet152 & 0.1 & 0 &  0 & 0 & 0 & 0 & 23.1 & 16.0 & 17.6 & 23.9 & 13.8 & 15.2 \\

    ImageNet, DenseNet121 & 0 & 0.1 &  0 & 0.1 & 0 & 0 & 24.9 & 15.7 & 18.0 & 25.2 & 14.9 & 17.6 \\

    ImageNet, ViT-B/16 & 0 & 0 & 0 & 0.1 & 0.1 & 0 & 24.8 & 14.2 & 14.7 & 24.9 & 12.2 & 12.5 \\

    CIFAR-100, ResNet50 & 0.1 & 0 &  0 & 0 & 0.1 & 0 & 13.9 & 12.6 & 11.7 & 12.9 & 9.0 & 7.9 \\

    CIFAR-100, DenseNet121 & 0 & 0 &  0 & 0 & 0 & 0.1 & 11.5 & 9.5 & 9.7 & 12.2 & 7.8 & 8.0 \\

    CIFAR-10, ResNet50 & 0 & 0 &  0 & 0 & 0.1 & 0 & 11.1 & 5.0 & 4.8 & 11.2 & 2.4 & 2.6 \\
    
    CIFAR-10, ResNet34 & 0 & 0 &  0.1 & 0 & 0 & 0 & 9.5 & 3.0 & 2.8 & 9.1 & 2.2 & 2.2 \\

    \hline
\end{tabular}
\label{table: Coverage Metrics}
\end{table*}

\section{The Effect of TS on CP Methods for DNN Classifiers} 
\label{sec:experiments}

In this section, we empirically investigate the effect of TS on the performance of CP algorithms.
Specifically, we consider different datasets and models, and start by reporting the mean prediction set size, marginal coverage, {and class-conditional coverage of} CP algorithms, with and without an initial TS calibration procedure. 
Then, we extend the empirical study to encompass a wide range of temperatures and discuss our findings. %

\subsection{Experimental setup}
\label{Experimental setup}

\textbf{Datasets.} %
We conducted our experiment on CIFAR-10, CIFAR-100 \citep{krizhevsky2009learning} and ImageNet \citep{deng2009imagenet} chosen for their diverse content and varying levels of difficulty.

\textbf{Models.} We utilized a diverse set of DNN classifiers, based on ResNets \citep{he2016deep}, DenseNets \citep{huang2017densely} and ViT \citep{dosovitskiy2020vit}.

For CIFAR-10: ResNet34 and ResNet50. For CIFAR-100: ResNet50 and DenseNet121.
For ImageNet: %
ResNet152, DenseNet121 and ViT-B/16.
Details on the training of the models are provided in Appendix~\ref{app:training details}.

\textbf{TS calibration.} 
For each dataset-model pair, we create a calibration set by randomly selecting %
10\% of the validation set.
We obtain the calibration temperature $T^*$ {by optimizing the ECE objective. The optimal temperatures when using the NLL objective are very similar, as displayed in Table~\ref{tab: optimal T} in the Appendix \ref{ECE vs NLL}. This justifies using ECE as the default for the experiments}.

\begin{figure*}[t]
\begin{center}
    \textbf{\quad  ImageNet, ViT \qquad \qquad \qquad  CIFAR-100, DenseNet121 \qquad \qquad  \qquad CIFAR-10, ResNet34}
\end{center}
    \centering
    \begin{subfigure}
        \centering
        \includegraphics[width=0.25\textwidth]{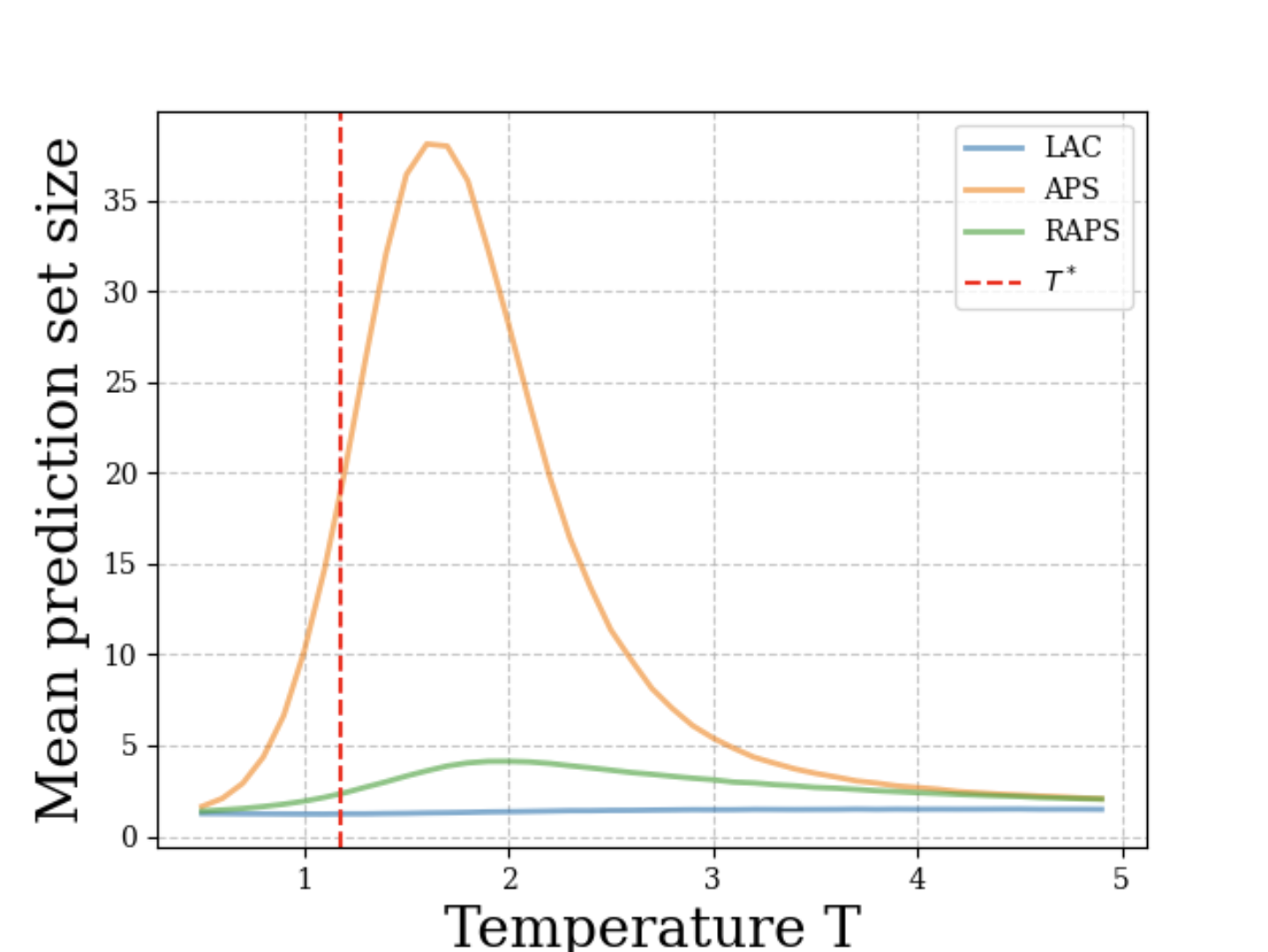}
    \end{subfigure}
    \hspace{1cm}
    \begin{subfigure}
        \centering
        \includegraphics[width=0.25\textwidth]{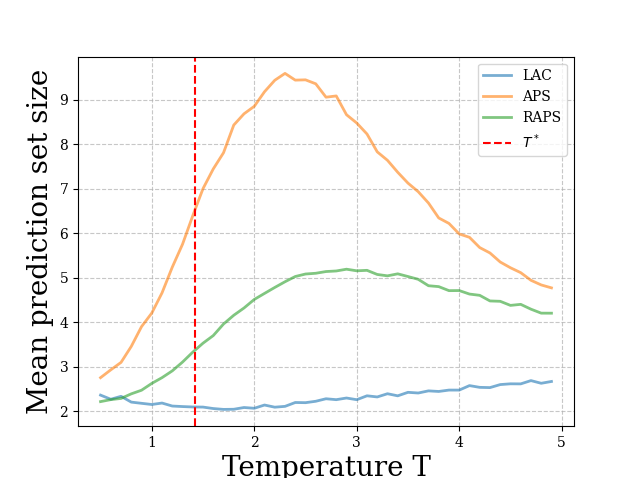}
    \end{subfigure}
    \hspace{1cm}
    \begin{subfigure}
        \centering
        \includegraphics[width=0.25\textwidth]{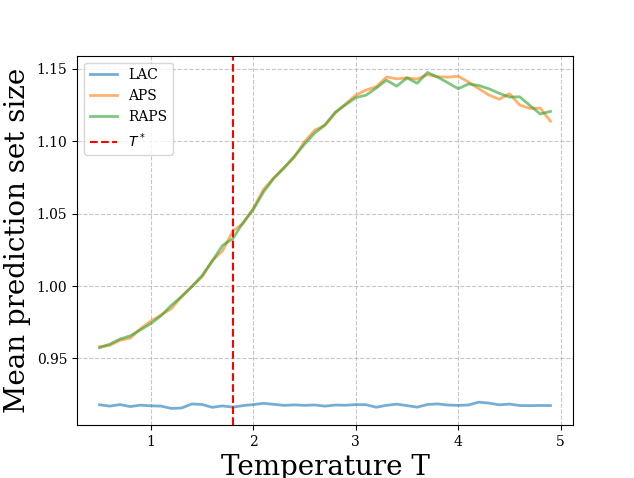}
    \end{subfigure}    
    \centering
     \hspace{1cm}
    \begin{subfigure}
        \centering
        \includegraphics[width=0.25\textwidth]{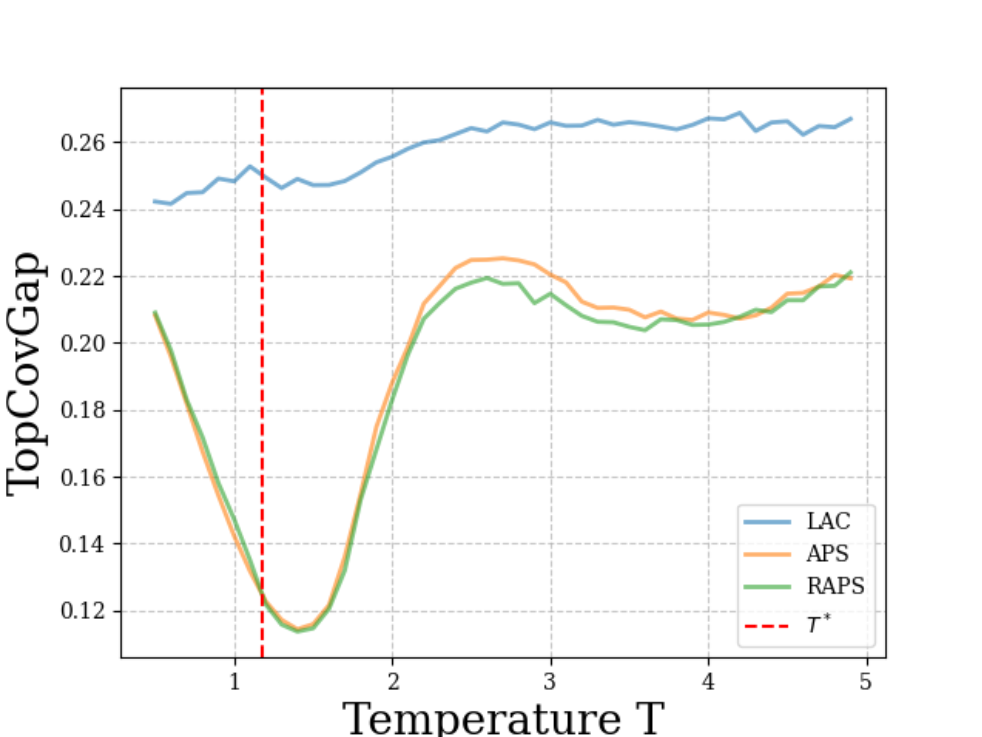}
    \end{subfigure}
    \hspace{1cm}
    \begin{subfigure}
        \centering
        \includegraphics[width=0.25\textwidth]{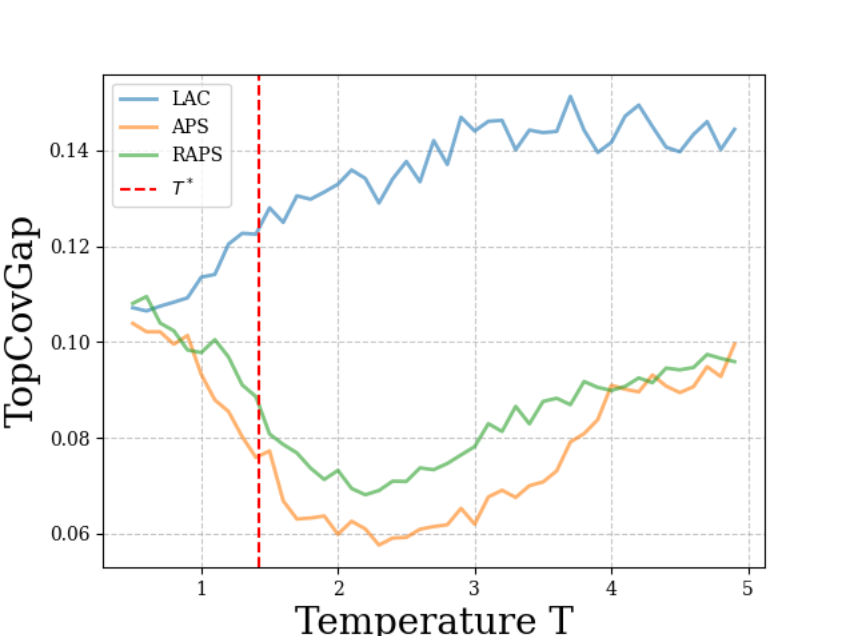}
    \end{subfigure}
    \hspace{1cm}
    \begin{subfigure}
        \centering
        \includegraphics[width=0.25\textwidth]{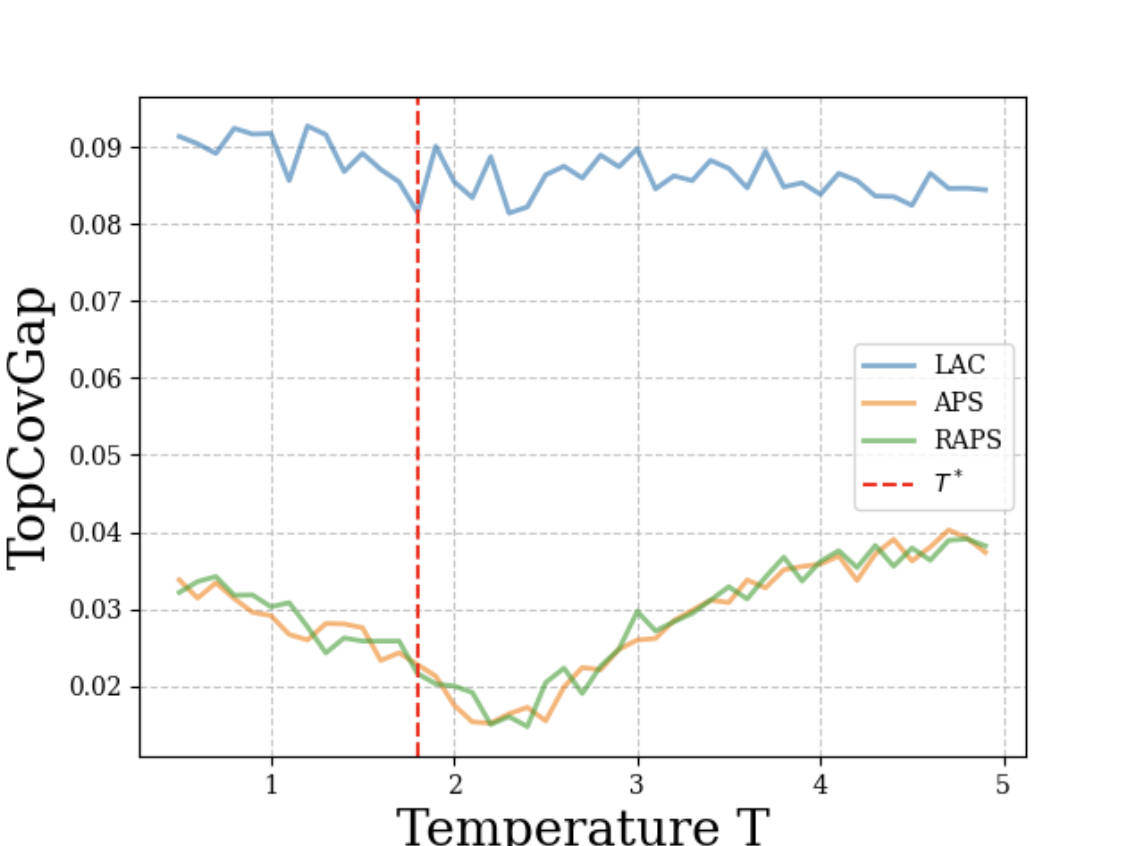}
    \end{subfigure}    
    \centering
     \hspace{1cm}
    \begin{subfigure}
        \centering
        \includegraphics[width=0.25\textwidth]{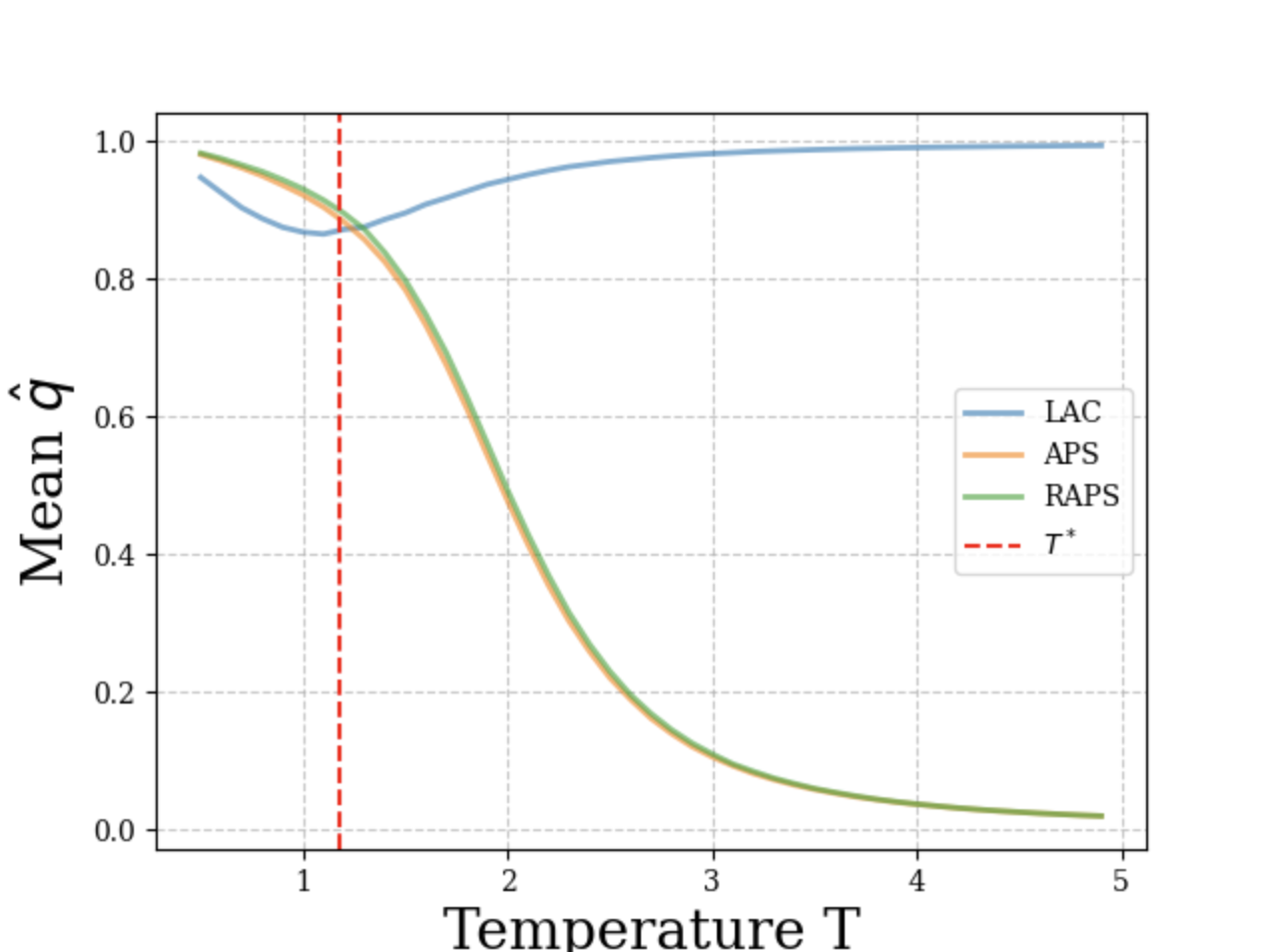}
    \end{subfigure}
     \hspace{1cm}
    \begin{subfigure}
        \centering
        \includegraphics[width=0.25\textwidth]{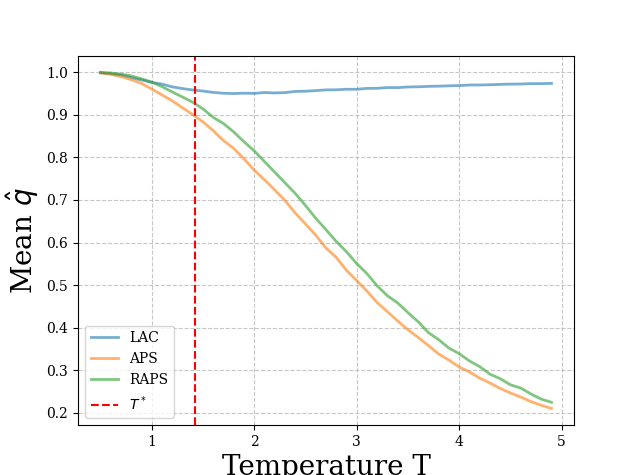}
    \end{subfigure}
     \hspace{1cm}
    \begin{subfigure}
        \centering
        \includegraphics[width=0.25\textwidth]{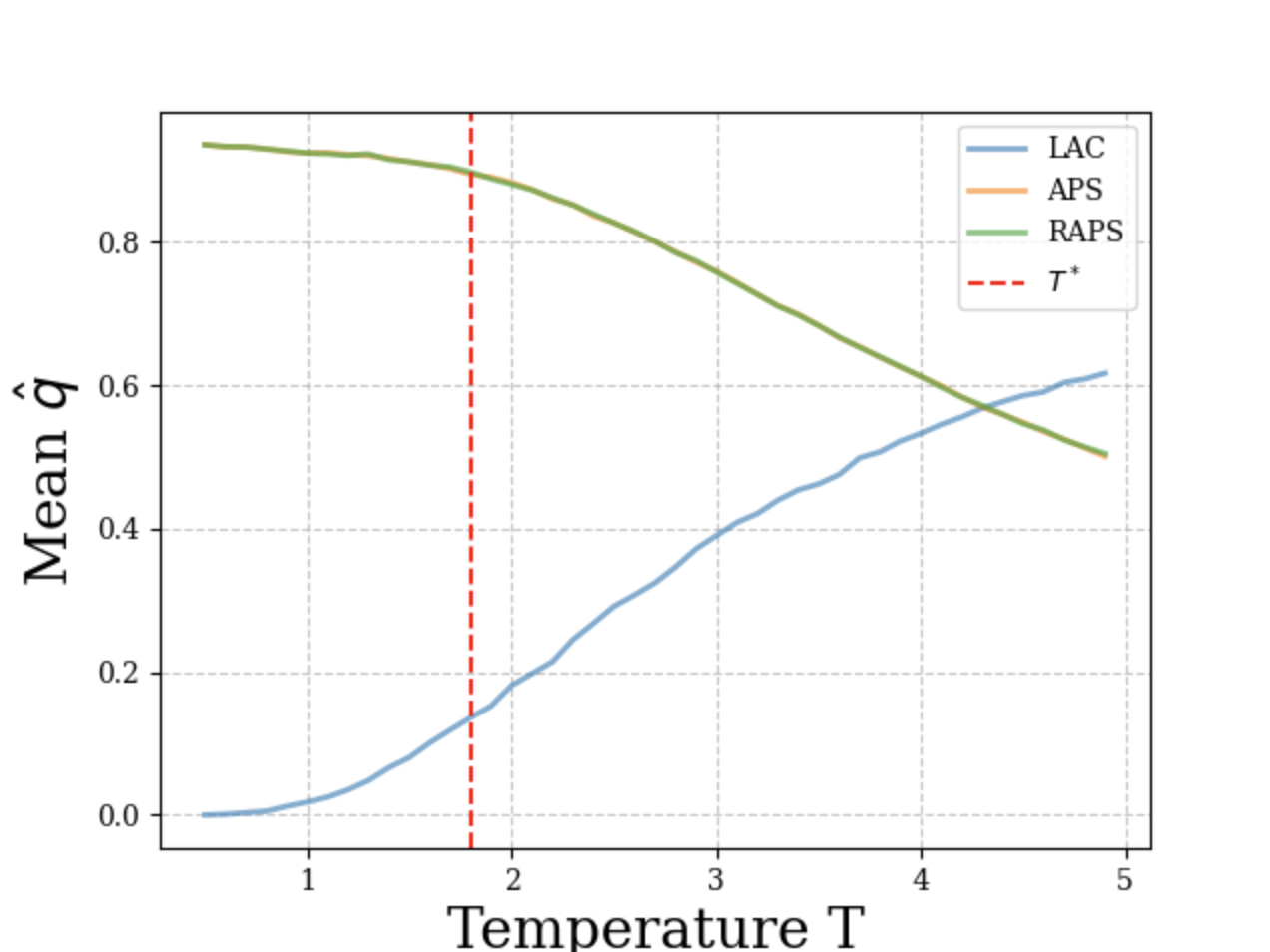}
    \end{subfigure}    

    \caption{AvgSize (top), TopCovGap (middle), and mean threshold $\hat{q}$ (bottom) for LAC, APS and RAPS with $\alpha=0.1$ versus the temperature $T$. The vertical line marks $T^*$ obtained by calibration.} 
    \label{fig:metrics_vs_T_main}
\end{figure*}

\textbf{CP Algorithms.} 
For each of the dataset-model pairs, we
construct the ``CP set" (used for computing the thresholds of CP methods) by randomly {selecting $\{5\%, 10\%, 20\%\}$ of the validation set}, while ensuring not to include in the CP set samples that are used in the TS calibration. 
The CP methods that we examine are LAC, APS, and RAPS, detailed in Section~\ref{sec:CP} (we use the randomized versions of APS and RAPS, as done in \citep{angelopoulos2020uncertainty}).
For each technique, {we use $\alpha=0.1$ and $\alpha=0.05$}, so the desired marginal coverage probability is $90\%$ and $95\%$, as common in most CP literature \citep{romano2020classification,angelopoulos2020uncertainty,angelopoulos2021gentle}.

\textbf{Metrics.}
{We report metrics over the validation set, which we denote by $\{(\x_i^{(val)},y_i^{(val)})\}_{i=1}^{N_{val}}$, comprising of the 
samples that were not included in the calibration set or CP set. The metrics are as follows.}

$\bullet$ {\em Average set size} (AvgSize) -- The mean prediction set size of the CP algorithm: %
\vspace{-4mm}
{
\begin{equation*}
    \text{AvgSize} = \frac{1}{N_{val}} \sum_{i=1}^{N_{val}} |\mathcal{C}(\x_i^{(val)})|.
\end{equation*}
}
$\bullet$ {\em Marginal coverage gap} (MarCovGap) -- The deviation of the marginal coverage from the desired $1-\alpha$:
\vspace{-1mm}
{
\begin{align*}
    &\text{MarCovGap}= \\
 &\left| 
\frac{1}{N_{val}} \sum_{i=1}^{N_{val}} \mathbbm{1}\{y_i^{(val)} \in \mathcal{C}(\x_i^{(val)})\} 
\vphantom{\sum_{i=1}^{N^{(val)}}} - (1 - \alpha) \right|.
\end{align*}
}
\\
$\bullet$ {\em Top-5\% class-coverage gap} (TopCovGap) -- The deviation from the desired $1-\alpha$ coverage, averaged over the 5\% of classes with the highest deviation:
{
\begin{align*}
    &\text{TopCovGap} = \\
 &\mathrm{Top5}_{y\in [C]} \left| 
\frac{1}{|I_y|} \sum_{i \in I_y} \mathbbm{1}\left\{y_i^{(val)} \in \mathcal{C}(\x_i^{(val)})\right\} - (1-\alpha) \right|,
\end{align*}
where %
$\mathrm{Top5}$ is an operator that returns the mean of the 5\% highest elements in the set and $I_y=\{i \in [N_{val}] : y_i^{(val)}=y$\} is the indices of validation examples with label $y$. %
We use top-5\% classes deviation due to the high variance in the maximal class deviation.} 

{
We also evaluate the {\em average class-coverage gap} (AvgCovGap) -- a metric that measures the deviation from the target $1-\alpha$ coverage, averaged across all classes. The experimental results for this metric closely mirror those observed with the TopCovGap metric. Thus, we defer its formal definition and experimental evaluation to Appendix~\ref{app: AvgCovGap}.}

\textcolor{black}{
Note that for these metrics: {\em the lower the better}.
Similar metrics have been used in \citep{ding2024class, angelopoulos2020uncertainty}.} %
\textcolor{black}{All the reported results, per metric, are the median-of-means along 100 trials where we randomly select the calibration/CP sets, similarly to \citep{angelopoulos2020uncertainty}.}

\subsection{The effect of TS calibration on CP methods}
\label{sec:TS_calibration}
%

For each of the dataset-model pairs we compute the aforementioned metrics with and without an initial TS calibration procedure. 
{In Table~\ref{table: PS sizes}, we report the the calibration temperature $T^*$, the accuracy (not affected by TS), and the median-of-means of the prediction set sizes metric, AvgSize. %
In Table~\ref{table: Coverage Metrics}, we report the median-of-means of the marginal and conditional coverage metrics, MarCovGap and TopCovGap. %
In both tables, the specified coverage probability is 90\% ($\alpha=0.1$), and we use 10\% of the samples for the CP set and 10\% of the samples for the calibration set.} 
Due to space limitation, the results for coverage probability of 95\% ($\alpha=0.05$) and sizes $\{5\%, 20\%\}$ of the CP sets are deferred to Appendix~\ref{app: additional experiments TS calibration}.
The insights gained from Tables \ref{table: PS sizes} and \ref{table: Coverage Metrics} hold also for the deferred results.

Examining the results, we first see that the TS calibration temperatures, $T^*$, in Table~\ref{PS_Size_Table} are greater than 1, indicating that the models exhibit overconfidence.
The reliability diagrams before and after TS calibration are presented in Appendix~\ref{Reliability diagrams appendix}.

By examining MarCovGap in 
Table~\ref{table: Coverage Metrics}, we see that all CP methods maintain marginal coverage both with and without the initial TS procedure (the gap is at most 0.1\%, i.e., 0.001).
That is, TS calibration does not affect this property, which is consistent with CP theoretical guarantees (Theorem~\ref{thm:cp_guarantees}).

As for the conditional coverage, as indicated by the TopCovGap metric in Table~\ref{table: Coverage Metrics}, there is no distinct trend observed for the LAC method. On the other hand, in the adaptive CP methods, APS and RAPS, there is a noticeable improvement (TopCovGap decreases), especially when $T^*$ is high.
We turn to examine 
Table~\ref{table: PS sizes}, which reports the effect of TS calibration on the prediction set size of the CP methods.
First, we see that for the CIFAR-10, where the models' accuracy is very high, the effect on AvgSize is minor. 
Second, we see that the effect on LAC is negligible also for other datasets.
(Note that while LAC has lower AvgSize than APS and RAPS, its conditional coverage is worse, as shown by TopCovGap in Table~\ref{table: Coverage Metrics}).
Third, 
perhaps the most thought provoking observation,
for APS and RAPS 
the TS calibration procedure has led to {\em increase} in the mean prediction set size.
Especially, when the value of the optimal temperature $T^*$ is high.
For instance, for ResNet50 on CIFAR-100, TS calibration increases AvgSize of APS from 5.31 to 9.14.
This behavior is quite surprising.

\tomt{In Appendix \ref{app: micro}, we verify that the increase in the mean set size for APS and RAPS is not caused by a small number of extreme outliers.
We observe that for many samples in the validation set the prediction set size is increased due to TS calibration. Only for a small minority of the samples the prediction set decreases. Yet, this teaches us that we cannot make a universal (uniform) statement about the impact of TS on the set size of arbitrary sample, but rather consider a typical/average case.}

\subsection{TS beyond calibration}
\label{sec:TS_beyond}

The intriguing observations regarding TS calibration --- especially, {\em both positively and negatively affecting different aspects of APS and RAPS}
--- %
prompt us to explore the effects of TS on CP, beyond calibration. %
In Figure~\ref{fig:metrics_vs_T_main} we present the average prediction set size (AvgSize), the class-conditional coverage metric (TopCovGap), and the threshold value of the CP methods ($\hat{q}$) for temperatures ranging from 0.5 to 5 with an increment of 0.1. 
The reported results are the median-of-means for 100 trials.
{We present here three diverse dataset-model pairs, and defer the others to Appendix~\ref{app: TS beyond} due to space limitation.
The appendix also includes
different settings, such as various calibration set sizes and coverage levels, detailed in Section \ref{Experimental setup}.}

Figure~\ref{fig:metrics_vs_T_main} displays interesting \emph{non-monotonic} trends of AvgSize and TopCovGap for APS and RAPS. 
Across all dataset-model combinations, these metrics exhibit similar patterns: AvgSize (top row) increases until reaching a peak, and then starts declining; TopCovGap (middle row) decreases until reaching a minimum, then reverses and starts increasing. The threshold value $\hat{q}$ (bottom row), on the other hand, decreases monotonically for APS and RAPS.
For LAC no clear pattern is evident. 

\tomt{Let $T_c$ denote the ``critical'' temperature at which AvgSize reaches its peak (not to be confused with the optimal calibration temperature, $T^*$). 
Restricting our view to $T < T_{c}$,}
we see generalization of the observations of the TS calibration experiments %
\tomt{(which compare $T=1$ and $T=T^*$)}.
Specifically, for the APS and RAPS algorithms, as $T$ increases AvgSize increases while TopCovGap decreases. 
This reveals that in this range of $T$, there is a {\em trade-off} between the two crucial properties of APS and RAPS -- prediction set sizes and conditional coverage -- which can be controlled by increasing/decreasing $T$.
In fact, overall, the trade-off remains also beyond the maximum/minimum of AvgSize/TopCovGap.
Nevertheless, the existence of \tomt{non-monotonic trends with extreme points} is surprising and not intuitive.

\section{Theoretical analysis}
\label{sec: Theoretical analysis}

In this section, we provide mathematical reasoning for empirical observations regarding the effect of TS on the prediction set size of APS and RAPS presented in Section~\ref{sec:experiments}.
Theoretically analyzing our other findings, such as the effect of TS on the conditional coverage, are left for future research.
{\em All the proofs are provided in Appendix~\ref{app:proofs}.}


%
\subsection{APS / RAPS threshold decreases as $T$ increases}
\label{sec: q decreases}

In Section~\ref{sec:TS_beyond}, we observe that increasing the temperature monotonically reduces the threshold of APS and RAPS. Let us prove this theoretically. Later, we will use this result in our theory on the effect of TS on the prediction set size.

Let $\z=\z(\x)$ be the logits vector of a sample $\x$. Let $\hat{\bpi}_T=\bsigma(\z/T)$ be the softmax vector after TS with temperature $T$. 
We denote by $s_T(\x,y)$ the score of the APS method when applied on $\hat{\bpi}_T$. Namely,  
$s_T(\x,y) = \sum\nolimits_{i=1}^{L_y} \hat{\pi}_{T,(i)}$, 
where $\hat{\pi}_{T,(i)}$ denotes the $i$-th element in a descendingly sorted version of $\hat{\bpi}_T$ and $L_y$ is the index that $y$ is permuted to after sorting.
Recall that the RAPS algorithm is based on the same score with an additional regularization term that is not affected by TS.
%
%
%
%


The following theorem states that a cumulative sum of a sorted softmax vector, analogous to the APS score, decreases as the temperature $T$ increases.

\begin{theorem}
\label{thm:universal_score_decrease_paper}
    Let $\z \in \mathbb{R}^{C}$ be a sorted logits vector, i.e., $z_1 \geq z_2 \geq \ldots \geq z_C$, and let $L \in [C]$. 
    Let $\hat{\bpi}_{T}=\bsigma(\z/T)$ and $\hat{\bpi}_{\tilde{T}}=\bsigma(\z/\tilde{T})$ with $T > \tilde{T}>0$.
    Then, we have
    $
        \sum\nolimits_{j=1}^{L} \pi_{\tilde{T},j} \geq \sum\nolimits_{j=1}^{L} \pi_{T,j}
    $.
    The inequality is strict, unless $L=C$ or $z_1=\ldots=z_C$.
\end{theorem}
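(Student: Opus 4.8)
The plan is to reparametrize by the inverse temperature $\beta := 1/T$ and to show that the top-$L$ partial sum is a strictly increasing function of $\beta$ (equivalently, strictly decreasing in $T$). Writing $P(\beta) := \sum_{j=1}^{L} e^{\beta z_j}$ and $S(\beta) := \sum_{k=1}^{C} e^{\beta z_k}$, the quantity of interest is $g(\beta) := P(\beta)/S(\beta) = \sum_{j=1}^L \pi_{1/\beta,\,j}$. Since $T > \tilde{T}$ corresponds to $\beta < \tilde\beta$, the claim $\sum_{j\le L}\pi_{\tilde{T},j} \ge \sum_{j\le L}\pi_{T,j}$ is exactly $g(\tilde\beta) \ge g(\beta)$, so it suffices to prove that $g$ is nondecreasing, and strictly increasing outside the two degenerate cases.

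First I would differentiate. With $a_k := e^{\beta z_k} > 0$, the quotient rule gives $g'(\beta) = (P'S - P S')/S^2$, and expanding the numerator yields
$$ P'S - P S' = \sum_{j=1}^{L}\sum_{k=1}^{C}(z_j - z_k)\,a_j a_k. $$
The key step is to split the inner sum at $k = L$. Over the block $1 \le k \le L$ the summand $(z_j - z_k)a_j a_k$ is antisymmetric under the swap $j \leftrightarrow k$, so that block contributes exactly $0$. What remains is
$$ P'S - P S' = \sum_{j=1}^{L}\sum_{k=L+1}^{C}(z_j - z_k)\,a_j a_k, $$
in which every factor is nonnegative: $a_j, a_k > 0$, and because the logits are sorted descendingly, $j \le L < k$ forces $z_j \ge z_L \ge z_{L+1} \ge z_k$. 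Hence $g'(\beta) \ge 0$ for all $\beta$, which gives the monotonicity.

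For strictness, observe that the remaining double sum is empty precisely when $L = C$ (in which case $g \equiv 1$), and otherwise it is a sum of nonnegative terms that vanishes iff $z_j = z_k$ for every $j \le L < k$. Taking the pairs $(j,k) = (1,L+1)$ and $(L,C)$ together with the sorted ordering forces $z_1 = \cdots = z_{L+1}$ and $z_L = \cdots = z_C$, hence $z_1 = \cdots = z_C$. Thus whenever $L < C$ and the logits are not all equal, $g'(\beta) > 0$ for all $\beta$, so $g$ is strictly increasing and $g(\tilde\beta) > g(\beta)$; translating back to $T$ gives the strict inequality.

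A remark on the main obstacle: the only nonroutine idea is the antisymmetric cancellation of the $k \le L$ block, which isolates a sum over ``top-versus-bottom'' index pairs whose sign is pinned down by the sorting hypothesis. An entirely calculus-free variant reaches the same place directly on the two temperatures: cross-multiplying the desired $g(\tilde\beta) \ge g(\beta)$ reduces it to $\sum_{j\le L}\sum_{k>L}\bigl(e^{\tilde\beta z_j + \beta z_k} - e^{\beta z_j + \tilde\beta z_k}\bigr) \ge 0$, where each term is nonnegative because its exponent gap equals $(\tilde\beta - \beta)(z_j - z_k) \ge 0$. This avoids differentiation and may yield the cleaner write-up, with the identical strictness analysis.
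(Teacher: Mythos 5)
Your proof is correct, and your primary argument takes a genuinely different route from the paper's. The paper proceeds without calculus: it first proves an auxiliary pairwise lemma --- for $z_i \geq z_j$ and $T \geq \tilde{T} > 0$, $e^{z_i/\tilde{T}}e^{z_j/T} \geq e^{z_i/T}e^{z_j/\tilde{T}}$ --- then sums this inequality over all top-versus-bottom pairs $(i,j)$ with $i \leq L < j$, adds the common block $\sum_{i\leq L}e^{z_i/\tilde{T}}\cdot\sum_{j\leq L}e^{z_j/T}$ to both sides to reassemble the full normalizing sums, and divides. Your main proof instead differentiates the partial-sum ratio $g(\beta)$ with respect to the inverse temperature $\beta = 1/T$, kills the $k \leq L$ block by antisymmetry, and reads the sign of the remaining top-versus-bottom sum from the sorting hypothesis; your strictness analysis (via the pairs $(1,L+1)$ and $(L,C)$) is sound and matches the theorem's exception cases exactly. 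The two arguments share the same combinatorial core --- only the cross terms between the top block $\{1,\dots,L\}$ and the bottom block $\{L+1,\dots,C\}$ matter, and their sign is pinned by sorting --- but they package it differently: your derivative computation yields a slightly stronger statement (the partial sum is monotone in $T$ on all of $(0,\infty)$, with an explicit covariance-type formula for the derivative, which would be reusable if one wanted rates or further monotonicity results), while the paper's summation argument is more elementary and reaches the two-temperature comparison directly. Your closing ``calculus-free variant'' is in fact essentially the paper's proof read in reverse: where you cancel the common $k \leq L$ block after cross-multiplying, the paper adds that same block to pass from the pairwise inequalities to the cross-multiplied one.
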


Note that Theorem~\ref{thm:universal_score_decrease_paper} is universal: it holds for any sorted logits vector.
Denote the threshold obtained by applying the CP method after TS by $\hat{q}_T$. 
Based on the universality of the theorem, we can establish that increasing the temperature $T$ decreases $\hat{q}_T$ for APS and RAPS.

\begin{corollary}
\label{corr:q_decrease}

The threshold value $\hat{q}_T$ 
of APS and RAPS
decreases monotonically as %
$T$ increases.

\end{corollary}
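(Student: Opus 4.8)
The plan is to reduce the claim to two facts: that every individual calibration score is non-increasing in $T$, and that a fixed-rank quantile of a finite sample inherits such pointwise monotonicity. First I would recall that the threshold $\hat{q}_T$ is, by construction (step 2 of the CP procedure), the $\frac{\lceil (n+1)(1-\alpha)\rceil}{n}$ empirical quantile of the scores $\{s_T(\x_i,y_i)\}_{i=1}^n$ over the CP set; equivalently it is the $k$-th smallest of these $n$ scores for the fixed rank $k=\lceil (n+1)(1-\alpha)\rceil$, which does not depend on $T$.

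The pointwise monotonicity of the scores is where Theorem~\ref{thm:universal_score_decrease_paper} enters. The crucial preliminary observation is that TS preserves the ordering of the logits, so the rank $L_{y_i}$ to which the true label $y_i$ is permuted after sorting $\hat{\bpi}_T(\x_i)$ is the \emph{same integer for every} $T>0$. Consequently, for each sample $i$ I may apply Theorem~\ref{thm:universal_score_decrease_paper} to the sorted logits $\z(\x_i)$ with the fixed cutoff $L=L_{y_i}$: for $T>\tilde{T}>0$ this gives $s_{\tilde{T}}(\x_i,y_i)=\sum_{j=1}^{L_{y_i}}\hat{\pi}_{\tilde{T},(j)} \ge \sum_{j=1}^{L_{y_i}}\hat{\pi}_{T,(j)}=s_T(\x_i,y_i)$, i.e.\ each APS score is non-increasing in $T$. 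For RAPS the score differs only by the additive regularizer $\lambda(L_{y_i}-k_{reg})_+$, which depends on $T$ solely through $L_{y_i}$ and is therefore constant in $T$; hence each RAPS score is non-increasing in $T$ as well.

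It remains to transfer this coordinate-wise domination to the quantile. I would invoke the elementary monotonicity of order statistics: if $a_i\ge b_i$ for all $i\in[n]$, then the $k$-th smallest of $\{a_i\}$ is at least the $k$-th smallest of $\{b_i\}$ for every fixed $k$ (indeed, the $k$ indices achieving the $k$ smallest $a$-values already supply $k$ indices with $b_i\le a_i\le a_{(k)}$, so $b_{(k)}\le a_{(k)}$). Applying this with $a_i=s_{\tilde{T}}(\x_i,y_i)$, $b_i=s_T(\x_i,y_i)$ and the fixed rank $k=\lceil (n+1)(1-\alpha)\rceil$ yields $\hat{q}_{\tilde{T}}\ge\hat{q}_T$ whenever $T>\tilde{T}$, which is exactly the asserted monotonicity for both APS and RAPS.

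I do not expect a serious obstacle here, since the corollary is essentially a clean packaging of Theorem~\ref{thm:universal_score_decrease_paper}; the one point that must not be glossed over is the $T$-invariance of each $L_{y_i}$, which hinges on the rank-preserving property of TS and is precisely what allows the theorem to be applied with a single fixed cutoff per sample. A secondary subtlety worth a sentence is that the guarantee is only weak monotonicity: strictness would additionally require the order statistic selected by $k$ to be attained at a sample that is non-degenerate in the sense of the theorem (not all logits equal and $L_{y_i}<C$), which need not hold for every dataset, so I would state the result as non-increasing, matching the empirical plots.
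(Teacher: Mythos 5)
Your proposal is correct and follows essentially the same route as the paper's proof: apply Theorem~\ref{thm:universal_score_decrease_paper} with $L=L_y$ to each CP-set sample to get pointwise score domination (with the RAPS regularizer handled as an additive constant), then pass to the empirical quantile. You are somewhat more careful than the paper on two points it leaves implicit --- the $T$-invariance of each rank $L_{y_i}$ and the order-statistic monotonicity lemma that transfers pointwise domination to the fixed-rank quantile --- but these are refinements of the same argument, not a different one.
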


\subsection{The effect of TS on prediction set sizes of APS}
\label{sec: Theoretical analysis size}

In Section~\ref{sec:TS_beyond}, we observe a consistent dependency on the temperature parameter $T$ across all models and datasets: the mean prediction set size of APS and RAPS switches from increasing to decreasing as $T$ passes some value $T_{c}$. In this section, we provide a theoretical analysis to elucidate this behavior. For simplification we focus on APS.

Let $\hat{q}_T$ and $\hat{q}$ denote the thresholds of APS when applied with and without TS, respectively. 
For a new test sample $\x$ with logits vector $\z$ that is {\em sorted in descending order}, $\hat{\bpi}_T=\bsigma(\z/T)$ and $\hat{\bpi}=\bsigma(\z)$ are the softmax outputs with and without TS, which are {\em sorted as well}. 
We denote
$L_T = \min \{\ell : \sum\nolimits_{i=1}^{\ell} \hat{\pi}_{T,i} \geq \hat{q}_T \}$, $L = \min \{\ell : \sum\nolimits_{i=1}^{\ell} \hat{\pi}_{i} \geq \hat{q} \}$ as the prediction set sizes for this sample according to APS with and without TS, respectively.

We aim to investigate the conditions under which the events $L_T \geq L$ and $L_T \leq L$ occur.
Since analyzing these events directly seems to be challenging, we leverage the following proposition that establishes alternative events that are sufficient conditions.

\begin{proposition}
\label{prop:from_size_to_gap}

Let $\z \in \mathbb{R}^C$ such that $z_1 \geq z_2 \geq \dots \geq z_C$, $\hat{\bpi}_T=\bsigma(\z/T)$ and $\hat{\bpi}=\bsigma(\z)$. Let 
    $\hat{q}, \hat{q}_T \in (0,1]$ \textcolor{black}{such that if $T > 1$ then $\hat{q} \geq \hat{q}_T$ and if $0 < T < 1$ then $\hat{q} \leq \hat{q}_T$}.
    Let $L = \min \{\ell : \sum \nolimits_{i=1}^{\ell} \hat{\pi}_i \geq \hat{q} \}$ , $L_T = \min \{\ell : \sum \nolimits_{i=1}^{\ell} \hat{\pi}_{T,i} \geq \hat{q}_T \}$. 
    The following holds:
    {
    \begin{equation}
\label{eq:alt event}
    \left\{
    \begin{aligned}
        &\text{If} \quad 0 < T < 1 \text{ and } L_T > 1\quad \text{then}:\\    \quad &\sum \nolimits_{i=1}^{L_T-1} \hat{\pi}_i - \sum \nolimits_{i=1}^{L_T-1} \hat{\pi}_{T,i} \leq \hat{q} - \hat{q}_T \implies L \geq L_T \\
        &\text{If} \quad T > 1 \text{ and } L>1 \quad \text{then}:\\ \quad &\sum \nolimits_{i=1}^{L-1} \hat{\pi}_i - \sum \nolimits_{i=1}^{L-1} \hat{\pi}_{T,i} \geq \hat{q} - \hat{q}_T \implies L \leq L_T 
    \end{aligned}
    \right.
    \end{equation}
    }
\end{proposition}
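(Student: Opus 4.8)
The plan is to prove both implications directly from the minimality in the definitions of $L$ and $L_T$, bypassing any direct analysis of the set-size events themselves. The single structural fact I will exploit is that a minimal crossing index leaves the \emph{previous} partial sum strictly below its threshold: since $L = \min\{\ell : \sum_{i=1}^{\ell} \hat{\pi}_i \geq \hat{q}\}$ and the partial sums are nondecreasing in $\ell$ (the entries are nonnegative), whenever $L > 1$ we have $\sum_{i=1}^{L-1}\hat{\pi}_i < \hat{q}$, and analogously $\sum_{i=1}^{L_T-1}\hat{\pi}_{T,i} < \hat{q}_T$ whenever $L_T > 1$. Conversely, monotonicity of the partial sums turns a strict undershoot at a given index into an ordering of crossing indices: $\sum_{i=1}^{L_T-1}\hat{\pi}_i < \hat{q}$ forces $L \geq L_T$, and symmetrically $\sum_{i=1}^{L-1}\hat{\pi}_{T,i} < \hat{q}_T$ forces $L_T \geq L$.

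For the first case ($0<T<1$, $L_T>1$), I would start from the definitional bound $\sum_{i=1}^{L_T-1}\hat{\pi}_{T,i} < \hat{q}_T$ and combine it with the hypothesis $\sum_{i=1}^{L_T-1}\hat{\pi}_i - \sum_{i=1}^{L_T-1}\hat{\pi}_{T,i} \leq \hat{q} - \hat{q}_T$. The $\hat{\pi}_T$-partial sum cancels, leaving $\sum_{i=1}^{L_T-1}\hat{\pi}_i < \hat{q}$, whence $L \geq L_T$ by minimality of $L$. The second case ($T>1$, $L>1$) is the mirror image: starting from $\sum_{i=1}^{L-1}\hat{\pi}_i < \hat{q}$ and the hypothesis $\sum_{i=1}^{L-1}\hat{\pi}_i - \sum_{i=1}^{L-1}\hat{\pi}_{T,i} \geq \hat{q} - \hat{q}_T$, I rearrange to isolate the $\hat{\pi}_T$-partial sum and obtain $\sum_{i=1}^{L-1}\hat{\pi}_{T,i} < \hat{q}_T$, which gives $L_T \geq L$.

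There is no genuine analytic obstacle here; in each case the argument is a one-line chain of inequalities. The only points requiring care are bookkeeping ones: correctly invoking the \emph{strict} inequality at index $L-1$ (resp.\ $L_T-1$), which is precisely why the side conditions $L>1$ and $L_T>1$ are imposed, and ensuring that this single strict inequality survives the combination so that the conclusion is the desired crossing/non-crossing relation rather than a borderline equality. Notably, the sign assumptions on $\hat{q} - \hat{q}_T$ tied to the regime of $T$ are not actually needed to drive the logical implications; they serve only to place the statement in the regime dictated by Corollary~\ref{corr:q_decrease}, so that the sufficient conditions are the natural ones for the subsequent prediction-set-size analysis.
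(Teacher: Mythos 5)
Your proof is correct and is essentially identical to the paper's own argument: both rest on the minimality fact $\sum_{i=1}^{L-1}\hat{\pi}_i < \hat{q}$ (resp.\ $\sum_{i=1}^{L_T-1}\hat{\pi}_{T,i} < \hat{q}_T$) and a one-line rearrangement of the hypothesis to conclude the other partial sum stays strictly below its threshold, forcing the ordering of crossing indices. Your side remark is also accurate — the paper's proof likewise never invokes the sign condition on $\hat{q}-\hat{q}_T$; it is stated only to match the regime supplied by Corollary~\ref{corr:q_decrease}.
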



{Note that our definitions of $L$ and $L_T$ imply that their minimal value is 1. Thus, in the case of $0 < T < 1$ and  $L_T = 1$, we trivially have $L \geq L_T$.
Similarly, in the case of $T > 1$ and  $L = 1$, we trivially have $L \leq L_T$.}

The right-hand side of the new inequalities pertains to the difference between the threshold values before and after applying TS. Analyzing this term requires understanding the properties of the quantile sample of APS.

Let $\z^q$ denote
the logits vector of the sample associated with APS threshold (without applying TS), which we dub ``the quantile sample''. 
The CP theory builds on the score of the quantile sample being larger than $(1 - \alpha)\%$ of the scores of other samples with high probability.

\begin{figure}
    \centering        \includegraphics[width=0.9\linewidth]{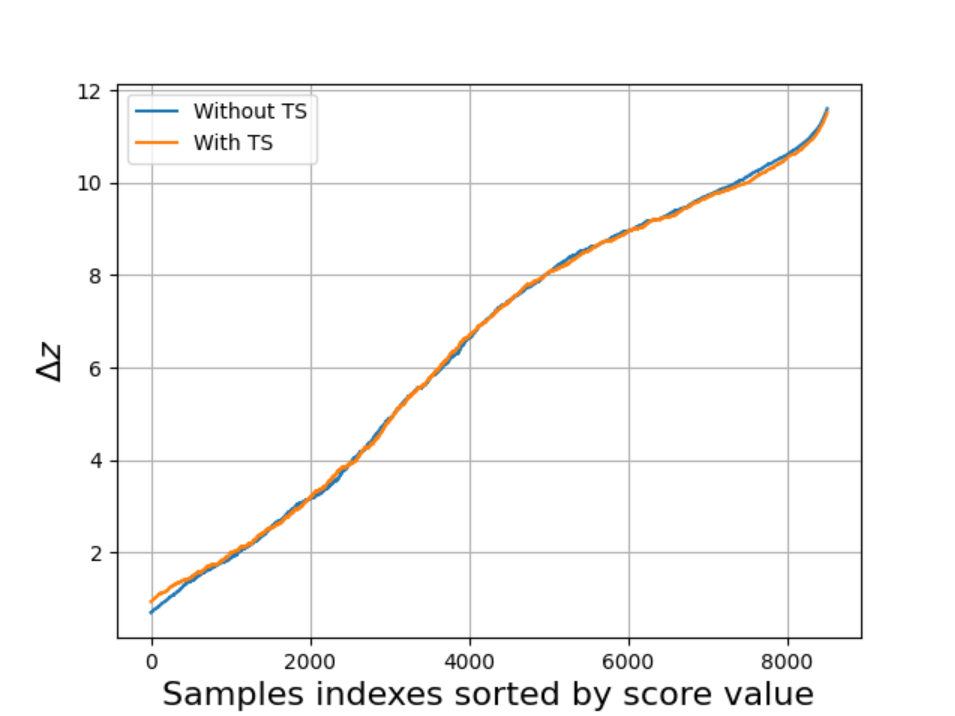} 
        
        \caption{{
        $\Delta z$ for each sample sorted by APS score value (from low to high), with and without TS calibration applied before sorting the scores, for CIFAR100-ResNet50.}}
        \label{Dz vs Samples}
\end{figure}

As illustrated in Figure \ref{Dz vs Samples}, there is a strong correlation between the score value and the difference $\Delta z := z_{(1)} - z_{(2)}$. This implies that, for typical values of $\alpha$ (e.g., 0.1), the quantile sample exhibits a highly dominant first entry in its 
{logits vector $\z^q$}.

{
Notably, even if the samples are sorted according to their scores {\em after applying TS}, this strong correlation still holds. Specifically, denoting by $\z^q_T$ the logits vector of the quantile sample when TS is applied, 
we observe in Figure \ref{Dz vs Samples} that it still exhibits a highly dominant first entry, with a $\Delta z$ value similar to that of $\z^q$.
Recall that the softmax operation applied to the logits vectors is invariant to constant shifts and further amplifies the dominance of larger entries ($\pi_i \propto \exp(z_i)$). Thus, having $z^q_{T,(1)} - z^q_{T,(2)} \approx z^q_{(1)} - z^q_{(2)} \gg 1$, practically implies that $\z^q_T \approx \z^q$ {up to an additive constant}.
In Appendix \ref{app: similarity between quantiles}, we demonstrate this similarity across additional dataset–model pairs and temperature settings.}
Therefore, 
it is reasonable to make the technical assumption that both $\hat{q}$ and $\hat{q}_T$ correspond to the same sample in the CP set, denoted here by the sorted logits vector $\z^q$.

For the rest of the analysis, we define the ``gap function'': %
\begin{align}
\footnotesize
\label{eq:gap_func}
    g(\z;T,M) \nonumber &:= 
    \sum\nolimits_{i=1}^M \sigma_i(\z) - \sum\nolimits_{i=1}^M \sigma_{i}(\z/T)\\  &=
    \frac{\sum_{i=1}^{M}\exp(z_{i})}{\sum_{j=1}^{C}\exp(z_{j})} - \frac{\sum_{i=1}^{M}\exp(z_{i}/T)}{\sum_{j=1}^{C}\exp(z_{j}/T)}
\end{align}
where $\z$ is a logits vector sorted in descending order.

With our assumption that $\hat{q}$ and $\hat{q}_T$ are associated with the same quantile sample $\z^q$, we have that
$\hat{q}-\hat{q}_T$ in \eqref{eq:alt event} can be written as $g(\z^q;T,L^q)$, where $L^q$ denotes the rank of the true label of $\z^q$. 

In Proposition \ref{prop: final bound} in Appendix \ref{app: link eq}, we prove that $\forall M\in [C]$, $|g(\z^q,T, L^q) - g(\z^q,T,M)|$ decays exponentially with $\Delta z$. 
{Due to the large $\Delta z$ of the quantile sample illustrated in Figure \ref{Dz vs Samples}, 
we approximate $g(\z^q,T, L^q) \approx g(\z^q,T,M)$.
Substituting $M = L_T-1$ or $M=L-1$, depending on the branch in \eqref{eq:alt event}}, this justifies studying the following events: %
\begin{equation}
\footnotesize
\label{eq:events_under_study}
\left\{
\begin{aligned}
    & g(\z;T,M) \leq g(\z^q;T,M) \quad \text{if} \; \; 0 < T < 1\\
    & g(\z;T,M) \geq g(\z^q;T,M) \quad \text{if} \; \; T > 1
\end{aligned}
\right.
\end{equation}
Returning to consider $\z$ as the sorted logits vector of a test sample, typically it has lower score than the quantile sample $\z^q$, and thus as illustrated in Figure \ref{Dz vs Samples} also lower $\Delta z$.
Consequently, it is intuitive to associate an increase in the score with an increase in the first entry of the sorted logits vector, $z_1$. 
We now present our key theorem that establishes a connection between the difference $\Delta z$ and the sign of $\nabla_{z_1} g(\z;T,M)$, depending on $T$ \tomt{and on the following ``bound function''}:
\begin{equation}
\label{eq:bound_func}
\footnotesize
b(T) := 
    \left\{
    \begin{aligned}
        &\text{If} \quad 0 < T < 1:\\ &\max\left\{\frac{T}{T-1}\ln\left(\frac{T}{4}\right),\frac{T}{T+1}\ln\left(\frac{4(C-1)^2}{T}\right)\right\}\\
        &\text{If} \quad T > 1:\\ &\max\left\{\frac{T}{T-1}\ln(4T),\frac{{T}}{T+1}\ln(4T(C-1)^2)\right\}
    \end{aligned}
    \right.
\end{equation}

\begin{theorem}
\label{thm: Dz bound}
    Let $\z \in \mathbb{R}^C$ such that $z_1 \geq z_2 \geq \dots \geq z_C$  and denote $\Delta z = z_1 - z_2$. Then, the following holds:
    \begin{equation}
    \footnotesize
    \left\{
    \begin{aligned}
        &\text{If} \quad 0 < T < 1: \quad  \Delta z > b(T) \implies \nabla_{z_1} g(\z;T,M) > 0 \\
        &\text{If} \quad T > 1: \quad \Delta z > b(T) \implies \nabla_{z_1} g(\z;T,M) < 0
    \end{aligned}
    \right.
    \end{equation}
    
\end{theorem}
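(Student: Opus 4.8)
The plan is to compute $\nabla_{z_1} g(\z;T,M)$ in closed form and then control the sign of each summand. Writing $S := \sum_{j=1}^C e^{z_j}$, $S_T := \sum_{j=1}^C e^{z_j/T}$, $\pi_i := \sigma_i(\z)$ and $\pi_{T,i} := \sigma_i(\z/T)$, a direct application of the quotient rule (using that $z_1$ sits in every partial sum since $M\ge 1$) gives $\nabla_{z_1} g = \pi_1\big(1-\sum_{i=1}^M\pi_i\big) - \tfrac1T \pi_{T,1}\big(1-\sum_{i=1}^M \pi_{T,i}\big)$. Since $1-\sum_{i\le M}\pi_i = \sum_{i>M}\pi_i$ (and likewise for the temperature-scaled version), this rearranges into the index-matched form $\nabla_{z_1}g = \sum_{i=M+1}^C \big(\pi_1\pi_i - \tfrac1T\pi_{T,1}\pi_{T,i}\big)$. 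Pairing the two sums term by term, rather than bounding each sum separately, is the crucial move: it sidesteps the need for a lower bound on the lower tail $\sum_{i>M}e^{z_i}$ in terms of $\Delta z$ alone, which is unavailable when $z_{M+1}\ll z_2$.

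Next I would exploit that $\nabla_{z_1}g$ is invariant under a constant shift $\z \mapsto \z + c\mathbf{1}$ (both softmax sums are shift invariant, hence so is each partial derivative), and normalize $z_1 = 0$. It then suffices to prove, for every $i \in \{M+1,\dots,C\}$, that the ratio $\rho_i := \frac{\frac1T\pi_{T,1}\pi_{T,i}}{\pi_1\pi_i}$ exceeds $1$ when $T>1$ and is below $1$ when $0<T<1$; summing the matched terms then yields the claimed sign. A short computation gives the clean expression $\rho_i = \tfrac1T \, e^{z_i(1/T-1)}\,(S/S_T)^2$, in which the $i$-dependence lies entirely in the factor $e^{z_i(1/T-1)}$.

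The heart of the argument is to bound the two factors of $\rho_i$ using only $\Delta z$. For the exponential factor I would use $z_i \le z_2 = -\Delta z$ (valid since $i\ge M+1\ge 2$), which gives $e^{z_i(1/T-1)} \ge e^{\Delta z(1-1/T)}$ when $T>1$ and $e^{z_i(1/T-1)} \le e^{\Delta z(1-1/T)}$ when $0<T<1$, uniformly in $i$. For the normalization factor I would use $S\ge 1$ and $S_T\ge 1$ together with $S_T \le 1+(C-1)e^{-\Delta z/T}$ and $S \le 1+(C-1)e^{-\Delta z}$ (each off-diagonal term is bounded by the second-largest). Substituting these and splitting into the two regimes $(C-1)e^{-\Delta z/T}\lessgtr 1$ (respectively $(C-1)e^{-\Delta z}\lessgtr 1$) bounds $\big(1+(C-1)e^{-\Delta z/T}\big)^2$ either by $4$ or by $4(C-1)^2 e^{-2\Delta z/T}$. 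For $T>1$ the first regime produces the requirement $e^{\Delta z(1-1/T)}>4T$, i.e.\ $\Delta z > \tfrac{T}{T-1}\ln(4T)$, and the second produces $e^{\Delta z(1+1/T)}>4T(C-1)^2$, i.e.\ $\Delta z > \tfrac{T}{T+1}\ln(4T(C-1)^2)$; taking the maximum over the two regimes is exactly $b(T)$. The $0<T<1$ branch arises identically with the inequalities reversed, yielding the thresholds $\tfrac{T}{T-1}\ln(T/4)$ and $\tfrac{T}{T+1}\ln(4(C-1)^2/T)$. Hence $\Delta z > b(T)$ forces $\rho_i>1$ (resp.\ $\rho_i<1$) for all $i$, and the theorem follows.

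I expect the main obstacle to be organizing the case split so that the two arguments of the $\max$ in $b(T)$ emerge cleanly and the constants $4$, $T$, and $(C-1)^2$ land in the right places; the per-term pairing and the crude but uniform bounds $1 \le S,\,S_T \le 1+(C-1)e^{-\Delta z/T}$ are what make this bookkeeping tractable. A minor caveat is the degenerate case $M=C$ (empty tail sum, so $\nabla_{z_1}g=0$), which lies outside the relevant range $1\le M\le C-1$ coming from $M=L-1$ or $M=L_T-1$; the constant-logits case is likewise excluded, since the hypothesis $\Delta z > b(T)>0$ cannot then hold.
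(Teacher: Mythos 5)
Your proof is correct and follows essentially the same route as the paper's: the same closed-form gradient $\nabla_{z_1} g = \pi_1\sum_{i>M}\pi_i - \tfrac1T\pi_{T,1}\sum_{i>M}\pi_{T,i}$, the same two key bounds (the exponent controlled via $z_i \le z_2$, and the normalization via $1 \le S_T \le 1+(C-1)e^{-\Delta z/T}$, which is the paper's constant $A$), and the same two-regime case split $(C-1)e^{-\Delta z/T}\lessgtr 1$ producing exactly the two arguments of the $\max$ in $b(T)$. The only difference is organizational: you pair the two tail sums term by term after normalizing $z_1=0$ and show each ratio $\rho_i$ crosses $1$, whereas the paper bounds the ratio of the aggregate tail sums $\sum_{c>M}e^{z_c/T}/\sum_{c>M}e^{z_c}$ by its value at $c=M+1$ via the auxiliary rearrangement lemma (with $\tilde T=1$) — the two devices are interchangeable and land on identical constants.
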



%
Let us consider the case of TS with $T>1$.
The theorem establishes that for a sample with a sorted logits vector $\z$ satisfying $\Delta z > b_{T > 1}(T)$, we have that $g(\z)$ decreases monotonically as $z_1$ increases (indeed, when $z_1$ increases then $\Delta z$ increases, and thus the inequality $\Delta z > b_{T>1}(T)$ remains satisfied). 
Since $\z^q$ has larger dominant entry than typical $\z$, this implies that $g(\z;T,M)>g(\z^q;T,M)$. Thus, under our  technical assumption that $\hat{q}$ and $\hat{q}_T$ correspond to the same sample, we can apply Proposition \ref{prop:from_size_to_gap} and get $L_T \geq L$ --- a larger prediction set of APS.
Conversely, by the same logic, the effect of TS with $0<T <1$ on  a sample with a sorted logits vector $\z$ satisfying  $\Delta z > b_{T < 1}(T)$, is a smaller prediction set of APS.

We now show that the bounds in Theorem \ref{thm: Dz bound} do not require unreasonable values of $\Delta z$ and $T$. In particular, the theorem explains the phenomenon for a ``typical'' $\z$ --- e.g., the sample with the median score in Figure \ref{Dz vs Samples} --- 
which in turn explains why we see the increase/decrease of the {\em mean} prediction set. 
Indeed, for this sample, according to Figure \ref{Dz vs Samples} we have $\Delta z \approx 8$.
For $C=100$ (as in this CIFAR-100 experiment), the bounds in the theorem are complied by this sample for the temperature ranges $0 < T < 0.83$ and $1.25 < T < {2.33}$. 
Since the calibrated temperature in this CIFAR100-ResNet50 experiment is $T^*=1.524$, which falls in this range, we rigorously proved increased prediction set for the median sample after TS calibration.
Interestingly, the broad temperature range that is covered by our theory indicates that our bounds are sufficiently tight to establish additional fine-grained insights.

\begin{figure}
    \centering
    \includegraphics[width=0.7\linewidth]{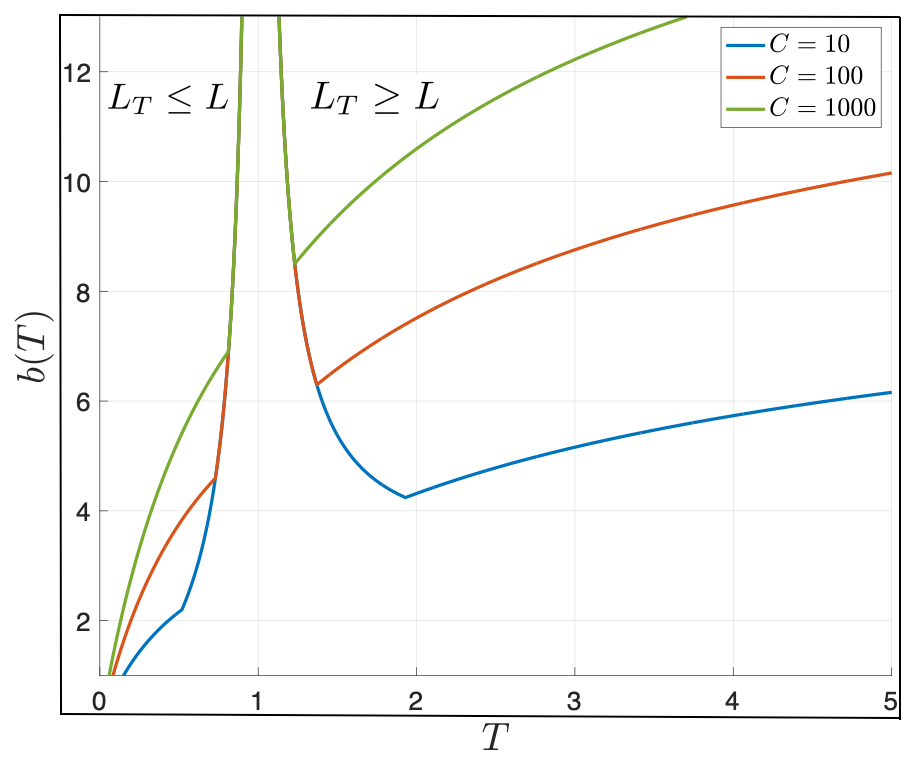}
    %
    \caption{$b(T)$ of Thm.~\ref{thm: Dz bound}, for $C = \{10, 100, 1000\}$.} 
    %
    %
    \label{fig:b(T)}
\end{figure}


\textbf{Implications of the %
bounds.}
To demonstrate the significance of our theory, we analyze the bounds $b(T)$ established in Theorem \ref{thm: Dz bound}. %
We leverage our theory to explain the entire non-monotonic trend in the empirical results on the effect of TS on the mean prediction set size of APS (and the closely related RAPS), showed in Figure \ref{fig:metrics_vs_T_main}. 

In Figure \ref{fig:b(T)} we present the bound as a function of $T$ for $C = \{10, 100, 1000\}$.
According to the analysis, samples whose $\Delta z$ is above the bound are %
those for which the TS operation yields a larger (resp.~smaller) prediction set size when $T>1$ (resp.~$T<1$). 
{
We denote by $\tilde{T}_{c}$ the temperature value at which the bound attains its minimum value for $T > 1$. 
This value can be computed (numerically) by the intersection of the functions in the $\mathrm{max}$ operation at the $T>1$ branch in \eqref{eq:bound_func}. Note that it is affected by the number of classes $C$.
}
Inspecting Figure \ref{fig:b(T)} we gain the following insights. 

%


\begin{itemize}[leftmargin=8pt]


    \item For $0 < T < 1$: The bound increases as a function of $T$. Thus, as $T$ decreases, a greater proportion of samples satisfy the bound, which is aligned with a reduction in the mean prediction set sizes.

    
    \item For $1 < T < \tilde{T}_{c}$: The bound decreases as $T$ increases, indicating that more samples comply with the bound (have $L_T \geq L$), which is aligned with an increase in the mean prediction set sizes.

    
    \item At $T = \tilde{T}_{c}$: The bound attains a minimum, corresponding to the maximum number of samples satisfying the bound, which is aligned with the largest mean prediction set size. %

    
    \item For $T > \tilde{T}_{c}$: The bound again increases, indicating that as $T$ continues to rise, fewer samples satisfy the bound, which is aligned with a decrease in the prediction set sizes.
\end{itemize}

{We see that $\tilde{T}_{c}$ describes the temperature at which the trend of the prediction set size shifts from increasing to decreasing as $T$ increases.}
Our theory shows that
$\tilde{T}_{c}$ shifts to lower temperatures as %
$C$ increases. A similar trend is observed in the empirical results shown in Figure \ref{fig:metrics_vs_T_main}. 

\section{Guidelines for Practitioners}
\label{sec: guidelines for Practitioners}

Based on our theoretically-backed findings, 
we propose a guideline, depicted in Fig.~\ref{fig:suggestion}, for practitioners that wish to use adaptive CP methods (e.g., due to their better conditional coverage). %
Specifically, we suggest to use TS with two different temperature parameters on separate branches: $T^*$ that is optimized for TS calibration, and $\hat{T}$ that allows trading the prediction set sizes and conditional coverage properties of APS/RAPS to better fit the task's requirements.
Our experiments and theory show that $\hat{T}$ should be scanned up to a value $T_{c}$.

\begin{figure}[t]

    \centering
    \fbox{\includegraphics[width=0.9\linewidth]{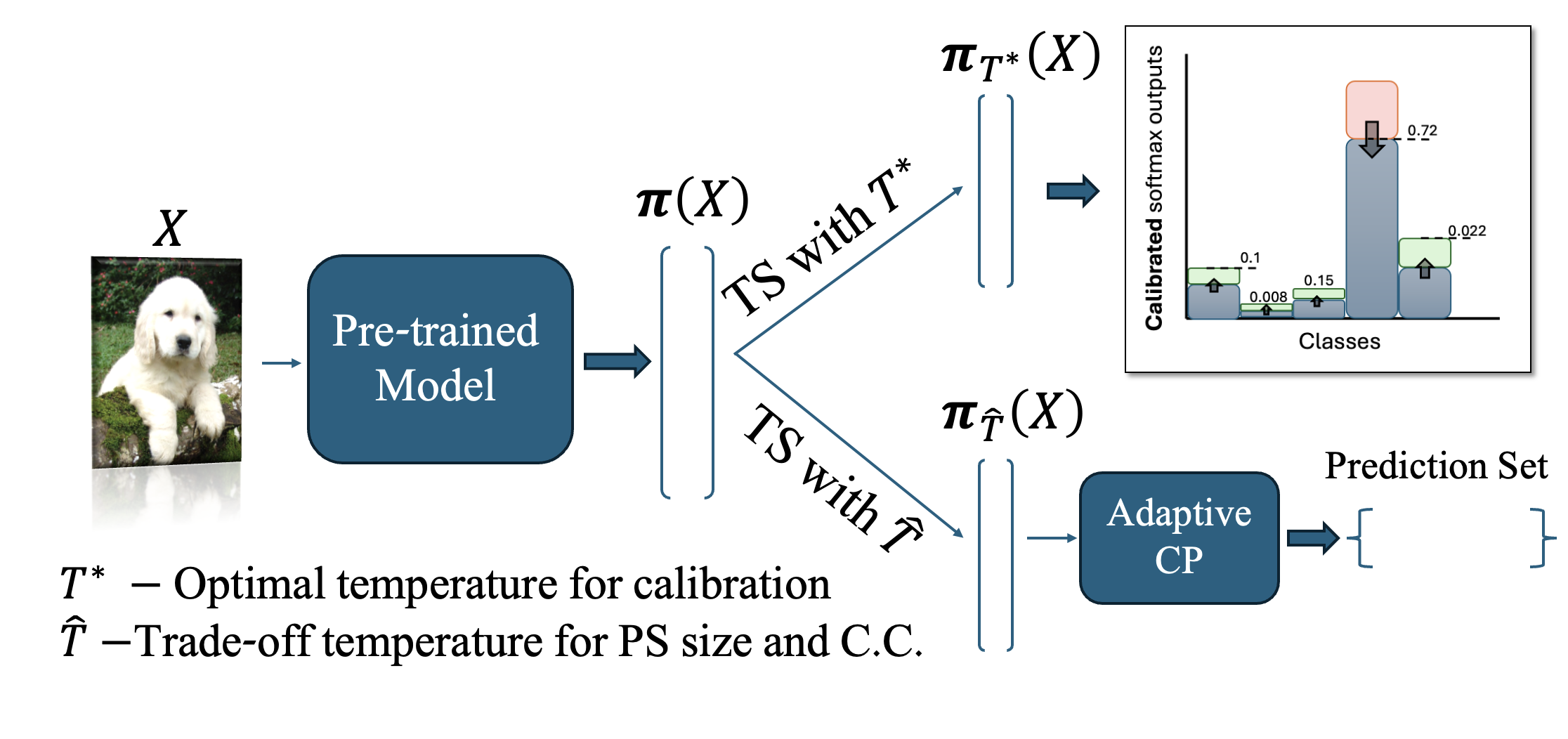}}
    \caption{Guideline for using TS calibration and adaptive CP.} %
    \label{fig:suggestion}
    \vspace{-1cm}
\end{figure}

\tomt{Note} 
that one does not know in advance what values of the metrics are obtained per value of $\hat{T}$.
However, since we propose to separate the calibration and the CP procedure, the calibration set can also be used to evaluate the CP algorithms {\em without dangering exchangeability}.  
Indeed, in Appendix \ref{user_guidlines_app}, we demonstrate how using a small amount of calibration data we can approximate the curves of AvgSize and TopCovGap vs.~$T$ that appear in Figure~\ref{fig:metrics_vs_T_main} (which were generated using the entire validation set that is not accessible to the user in practice). 
According to the approximate trends, the user can choose  $\hat{T}$ that best fit their requirements.
Furthermore, note that the procedure required to produce approximated curves of metrics vs.~$T$ is done \emph{offline} during the calibration phase and its runtime {is on the order of minutes.} 

In Appendix \ref{app:mondrian}, we further show the practical significance of our \tomt{findings and} guidelines. Specifically, for users that prioritize class-conditional coverage, 
{ \color{black}
we show that applying TS with temperature of the minimum in the approximated TopCovGap curve (created using only the calibration set as explained in Appendix \ref{user_guidlines_app})} followed by RAPS 
outperforms Mondrian CP \citep{vovk2012conditional}, \tomt{which is tailored to} classwise CP, in both TopCovGap and AvgSize metrics.
{Code for reproducing our experiments and applying our guidelines is available at \url{https://github.com/lahavdabah/TS4CP}.}

\section{Conclusion} 
\label{sec:conclusion}

In this work, we studied the effect of the widely used TS calibration on the performance of CP techniques.
These popular complementary approaches are  useful for assessing the reliability of classifiers, in particular those that are based DNNs. Yet, their interplay has not been examined so far.
We conducted an extensive empirical study on the effect of TS, even beyond its calibration application, on prominent CP methods. Among our findings, we discovered that TS enables trading prediction set size and class-conditional coverage performance of adaptive CP methods (APS and RAPS) through a non-monotonic pattern, which is similar across all models and datasets examined. 
We presented a theoretical analysis on the effect of TS on the prediction set sizes of APS and RAPS, which offers a comprehensive explanation for this pattern.
Finally, based on our findings, we provided  practical guidelines for combining APS and RAPS with calibration while adjusting them via a dedicated TS mechanism to better fit specific requirements.
%
%
%


%
%

%

%
%
%
%

\section*{Acknowledgment}
%
%
%
The work was supported by the Israel Science Foundation (No. 1940/23) and MOST (No. 0007091) grants.

\section*{Impact Statement}

\textcolor{black}{
Our work focuses on investigating the interplay between approaches aiming at making modern classifiers reliable.
Our finding can guide practitioners to use these approaches better. Improved reliability estimation can significantly influence and guide society toward a better future.
}

\bibliography{refs}
\bibliographystyle{icml2025}

\newpage

\appendix
\onecolumn
\section{Proofs.}
\label{app:proofs}

\begin{theorem}
\label{thm:universal_score_decrease_paper_app}
    Let $\z \in \mathbb{R}^{C}$ be a sorted logits vector, i.e., $z_1 \geq z_2 \geq \ldots \geq z_C$, and let $L \in [C]$. 
    Let $\hat{\bpi}_{T}=\bsigma(\z/T)$ and $\hat{\bpi}_{\tilde{T}}=\bsigma(\z/\tilde{T})$ with $T > \tilde{T}>0$.
    Then, we have
    $
        \sum\nolimits_{j=1}^{L} \pi_{\tilde{T},j} \geq \sum\nolimits_{j=1}^{L} \pi_{T,j}
    $.
    The inequality is strict, unless $L=C$ or $z_1=\ldots=z_C$.
\end{theorem}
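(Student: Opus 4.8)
The plan is to reduce the claim to a single cross-multiplied inequality and verify it term by term, avoiding calculus entirely. First, note that since $\z$ is sorted in descending order and $t \mapsto \exp(t/T)$ is strictly increasing, the softmax vector $\hat{\bpi}_T = \bsigma(\z/T)$ is automatically sorted in the same descending order; hence $\pi_{T,j} = \sigma_j(\z/T)$ and likewise for $\tilde T$, so no additional re-sorting enters the partial sums. The case $L=C$ is immediate, as both partial sums equal $1$, so I would assume $L < C$ from here on.

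Writing $p_k^{(T)} := \exp(z_k/T)$ and clearing the (strictly positive) softmax denominators, the target inequality $\sum_{j=1}^L \pi_{\tilde T,j} \ge \sum_{j=1}^L \pi_{T,j}$ is equivalent to
\[
\Big(\sum_{j=1}^L p_j^{(\tilde T)}\Big)\Big(\sum_{k=1}^C p_k^{(T)}\Big) \;\ge\; \Big(\sum_{j=1}^L p_j^{(T)}\Big)\Big(\sum_{k=1}^C p_k^{(\tilde T)}\Big).
\]
I would expand the difference of the two sides as the double sum $\sum_{j=1}^L \sum_{k=1}^C \big(p_j^{(\tilde T)} p_k^{(T)} - p_j^{(T)} p_k^{(\tilde T)}\big)$ and split the inner index range into $k \le L$ and $k > L$. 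The block $k \le L$ is antisymmetric under interchanging $j$ and $k$, so it cancels to zero. Each summand of the surviving block $k > L$ equals $\exp(z_j/\tilde T + z_k/T) - \exp(z_j/T + z_k/\tilde T)$, which is nonnegative exactly when $z_j\,(1/\tilde T - 1/T) \ge z_k\,(1/\tilde T - 1/T)$. Since $T > \tilde T > 0$ gives $1/\tilde T - 1/T > 0$, and since $j \le L < k$ forces $z_j \ge z_k$ by the sorting, every such term is nonnegative, establishing the inequality. (An equivalent route is the inverse-temperature substitution $\beta = 1/T$: one shows $g(\beta) = \sum_{j\le L} e^{\beta z_j}\big/\sum_k e^{\beta z_k}$ is nondecreasing, and the numerator of $g'(\beta)$ is the same double sum $\sum_{j\le L}\sum_k p_j p_k (z_j - z_k)$, again with the $k\le L$ part vanishing by antisymmetry and the $k>L$ part nonnegative.)

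For strictness, each term of the block $k > L$ vanishes iff $z_j = z_k$. I expect the only mildly delicate point to be the equality-case bookkeeping: if the entire block vanishes, then taking $j=1$ yields $z_1 = z_k$ for all $k>L$, and combined with the chain $z_1 \ge \dots \ge z_L \ge z_{L+1}$ this forces $z_1 = \dots = z_C$. Conversely, whenever $L < C$ and the logits are not all equal we have $z_1 > z_C$, and the pair $(j,k)=(1,C)$ is a valid index pair in the block (since $1 \le L < C$), contributing a strictly positive term. This yields the strict inequality in all cases except $L=C$ or $z_1 = \dots = z_C$. The inequality itself is routine once the antisymmetric block is discarded; the main care is in confirming that sorting alone supplies the sign $z_j \ge z_k$ for the retained pairs and in the equality characterization above.
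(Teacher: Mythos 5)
Your proof is correct and takes essentially the same route as the paper's: the term-by-term inequality $\exp(z_j/\tilde T)\exp(z_k/T) \ge \exp(z_j/T)\exp(z_k/\tilde T)$ for $z_j \ge z_k$ is exactly the paper's auxiliary lemma, and your antisymmetry cancellation of the $k \le L$ block is the mirror image of the paper's step of adding the common block $\bigl(\sum_{i\le L}\exp(z_i/\tilde T)\bigr)\bigl(\sum_{j\le L}\exp(z_j/T)\bigr)$ to both sides before factoring. Your equality-case bookkeeping via the pair $(j,k)=(1,C)$ is, if anything, slightly more explicit than the paper's one-line remark.
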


Before we turn to prove the theorem, let us prove an auxiliary lemma.

\begin{lemma*}
    Let $z_{i}, z_{j} \in \mathbb{R}$ such that $z_{i} \geq z_{j}$ and let  $ T \geq \tilde{T} \geq 0 $. Then, the following holds
    \begin{align*}
        \exp(z_{i}/\tilde{T}) \cdot \exp(z_{j}/T) \geq \exp(z_{i}/T) \cdot \exp(z_{j}/\tilde{T}).
    \end{align*}
    The inequality is strict, unless $T=\tilde{T}$ or $z_i=z_j$.    
\end{lemma*}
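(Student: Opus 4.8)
The plan is to reduce the multiplicative inequality to an additive one by exploiting the fact that $\exp$ is strictly increasing and strictly positive. Since both sides are products of positive reals, the claimed inequality
$$\exp(z_{i}/\tilde{T})\,\exp(z_{j}/T) \geq \exp(z_{i}/T)\,\exp(z_{j}/\tilde{T})$$
is equivalent, after combining exponents on each side (or equivalently taking logarithms), to the comparison of exponents
$$\frac{z_{i}}{\tilde{T}} + \frac{z_{j}}{T} \;\geq\; \frac{z_{i}}{T} + \frac{z_{j}}{\tilde{T}}.$$
Here I am implicitly assuming $\tilde{T} > 0$ (the division requires it; the ``$\tilde{T}\geq 0$'' in the statement should be read as $\tilde{T}>0$, consistent with the hypothesis $T>\tilde{T}>0$ of Theorem~\ref{thm:universal_score_decrease_paper_app} where the lemma is invoked).

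Next I would rearrange the additive inequality by moving the $z_i$ terms to one side and the $z_j$ terms to the other, factoring to obtain the single product condition
$$(z_{i} - z_{j})\left(\frac{1}{\tilde{T}} - \frac{1}{T}\right) \geq 0.$$
At this point the proof is essentially finished: the first factor is nonnegative because $z_{i}\geq z_{j}$ by hypothesis, and the second factor is nonnegative because $T \geq \tilde{T} > 0$ implies $1/\tilde{T} \geq 1/T$. The product of two nonnegative numbers is nonnegative, which establishes the inequality, and exponentiating back recovers the original multiplicative form.

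For the strictness clause I would observe that a product of two nonnegative factors is strictly positive exactly when both factors are strictly positive. The first factor is strictly positive iff $z_{i} > z_{j}$, and the second is strictly positive iff $T > \tilde{T}$ (since $t \mapsto 1/t$ is strictly decreasing on the positive reals). Hence the inequality degenerates to equality precisely when $z_{i} = z_{j}$ or $T = \tilde{T}$, which is exactly the stated exceptional case.

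There is no real obstacle here: the only thing to be careful about is the direction of the inequality $1/\tilde{T} \geq 1/T$ (it reverses relative to $T \geq \tilde{T}$ because reciprocation is order-reversing on $(0,\infty)$), and the mild notational point that $\tilde{T}$ must be strictly positive for the expressions to be defined. The content of the lemma is simply a rearrangement inequality stating that the smaller temperature ``pairs'' with the larger logit to maximize the sum of scaled exponents; this pairwise statement is what will later drive the cumulative-sum comparison in Theorem~\ref{thm:universal_score_decrease_paper_app}.
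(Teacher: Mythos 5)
Your proof is correct and is essentially the same argument as the paper's: the paper starts from $(z_i-z_j)/\tilde{T} \geq (z_i-z_j)/T$ and exponentiates, which is precisely your factored condition $(z_i-z_j)\left(\frac{1}{\tilde{T}}-\frac{1}{T}\right) \geq 0$ read in the forward direction, and both treat strictness identically via the equality case of this scalar inequality. Your remark that $\tilde{T}$ must be strictly positive (despite the statement's ``$\tilde{T}\geq 0$'') is a fair observation that applies equally to the paper's own proof.
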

{
\begin{proof}

Given that $z_i - z_j \geq 0$ and $T \geq \tilde{T} \geq 0$, it follows that:
$$
(z_{i}-z_j)/\tilde{T} \geq (z_{i}-z_j)/T.
$$
Since the exponential function is monotonically increasing, applying it to both sides of the inequality yields:
$$
\exp((z_{i}-z_j)/\tilde{T}) \geq \exp((z_{i}-z_j)/T).
$$
Multiplying both sides by $\exp(z_j/T) \exp(z_j/\tilde{T})$, we obtain:
$$
\exp(z_{i}/\tilde{T}) \cdot \exp(z_{j}/T) \geq \exp(z_{i}/T) \cdot \exp(z_{j}/\tilde{T}),
$$
which is the desired result.
The strict monotonicity of the exponential function implies that equality holds only if $(z_{i}-z_j)/\tilde{T} = (z_{i}-z_j)/T$, i.e., if $T=\tilde{T}$ or $z_i=z_j$.

\end{proof}
}

\begin{proof}

\textbf{Back to the proof of the theorem.}

Let $I = \{1, 2, \ldots, L\}$ and $J = \{L+1, L+2, \ldots, C\}$. 
Because $\z$ is sorted, $\forall i \in I , j \in J$ we have $z_{i}>z_{j}$.
Therefore, according to the auxiliary lemma in Theorem \ref{thm:universal_score_decrease_paper_app}:  
$\exp(z_{i}/\tilde{T}) \cdot \exp(z_{j}/T) \geq \exp(z_{i}/T) \cdot \exp(z_{j}/\tilde{T})$ {for any combination of $i \in I , j \in J$. 
Consequently, summing this inequality over all $i \in I , j \in J$, we get} %
\begin{align*}
\sum_{i=1}^{L}\sum_{j=L+1}^{C}\exp(z_{i}/\tilde{T}) \cdot \exp(z_{j}/T) &\geq \sum_{i=1}^{L}\sum_{j=L+1}^{C}\exp(z_{i}/T) \cdot \exp(z_{j}/\tilde{T}) \\
&\Updownarrow \\
\sum_{i=1}^{L}\exp(z_{i}/\tilde{T}) \cdot \sum_{j=L+1}^{C}\exp(z_{j}/T) &\geq \sum_{i=1}^{L}\exp(z_{i}/T) \cdot \sum_{j=L+1}^{C}\exp(z_{j}/\tilde{T}).
\end{align*}
{In the last line, we separated the summation of the indexes $i \in I, j \in J$.}

Adding $\sum_{i=1}^{L}\exp(z_{i}/\tilde{T}) \cdot \sum_{j=1}^{L}\exp(z_{j}/T)$ to both sides, we get
\begin{align*}
\sum_{i=1}^{L}\exp(z_{i}/\tilde{T})  \left[\sum_{i=1}^{L}\exp\left(z_{i}/T\right) + \sum_{j=L+1}^{C}\exp\left(z_{j}/T\right)\right] &\geq \sum_{j=1}^{L}\exp\left(z_{j}/T\right)  \left[\sum_{i=1}^{L}\exp(z_{i}/\tilde{T}) + \sum_{j=L+1}^{C}\exp(z_{j}/\tilde{T})\right] \\
&\Updownarrow \\
\sum_{i=1}^{L}\exp(z_{i}/\tilde{T})  \sum_{j=1}^{C}\exp\left(z_{j}/T\right) &\geq \sum_{i=1}^{L}\exp\left(z_{i}/T\right)  \sum_{j=1}^{C}\exp(z_{j}/\tilde{T}) \\
&\Updownarrow \\
\frac{\sum_{i=1}^{L}\exp(z_{i}/\tilde{T})}{\sum_{j=1}^{C}\exp(z_{j}/\tilde{T})} &\geq \frac{\sum_{i=1}^{L}\exp\left(z_{i}/T\right)}{\sum_{j=1}^{C}\exp\left(z_{j}/T\right)}  \\
&\Updownarrow \\
    \sum_{j=1}^{L} \frac{\exp({z_{j}/\tilde{T})}}{ \sum_{c=1}^C \exp{(z_{c}/\tilde{T})} } &\geq \sum_{j=1}^{L} \frac{\exp({z_{j}/T)}}{ \sum_{c=1}^C \exp{(z_{c}/T)} } 
\end{align*}
as stated in the theorem. 
Note that the inequality is strict unless $L=C$ (both sides equal $1$) or $T=\tilde{T}$ or $z_1=\ldots=z_C$ (all the pairs are equal).

\end{proof}

\begin{corollary}
The threshold value $\hat{q}_T$ 
of APS and RAPS
decreases monotonically as the temperature $T$ increases.

\end{corollary}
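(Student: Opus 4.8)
The plan is to deduce the corollary directly from Theorem~\ref{thm:universal_score_decrease_paper} by combining two facts: that every individual CP-set score is monotone in $T$, and that a fixed quantile (order statistic) of a collection of numbers inherits that monotonicity from its arguments.

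First I would fix the CP set $\{(\x_i,y_i)\}_{i=1}^n$ and recall the structural property of TS already noted in the paper: the map $\z\mapsto\z/T$ preserves the ordering of the logits, so the descending-sort permutation of $\hat{\bpi}_T$ is the same for every $T>0$. Consequently, for each sample the rank $L_{y_i}$ of the true label does not depend on $T$, and the APS score $s_T(\x_i,y_i)=\sum_{j=1}^{L_{y_i}}\hat{\pi}_{T,(j)}$ is exactly the cumulative sum appearing in Theorem~\ref{thm:universal_score_decrease_paper} with the fixed value $L=L_{y_i}$. Applying that theorem with $T>\tilde{T}$ gives $s_{\tilde{T}}(\x_i,y_i)\ge s_T(\x_i,y_i)$, so each score is weakly decreasing in $T$. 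For RAPS I would note that the extra term $\lambda(L_{y_i}-k_{reg})_+$ depends only on the $T$-invariant rank $L_{y_i}$ and is therefore constant in $T$; adding a $T$-independent constant to a decreasing function keeps it decreasing, so the RAPS scores are decreasing in $T$ as well.

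Second, I would invoke the elementary monotonicity of order statistics: if $a_i\ge b_i$ coordinatewise, then the $k$-th smallest of the $a_i$ is at least the $k$-th smallest of the $b_i$, since the smallest threshold below which at least $k$ entries fall can only increase when all entries increase. Writing $\hat{q}_T$ as the $k$-th order statistic of the $n$ scores with $k=\lceil (n+1)(1-\alpha)\rceil$, and taking any $T_1<T_2$, the coordinatewise inequality $s_{T_1}(\x_i,y_i)\ge s_{T_2}(\x_i,y_i)$ from the first step yields $\hat{q}_{T_1}\ge\hat{q}_{T_2}$. Hence $\hat{q}_T$ decreases monotonically in $T$ for both APS and RAPS.

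The argument is short because Theorem~\ref{thm:universal_score_decrease_paper} does the analytic heavy lifting. The one point needing care — and the main (minor) obstacle — is to make explicit that the quantile is evaluated at the \emph{same} order index $k$ for all $T$, so that the per-sample monotonicity transfers to the threshold through the order-statistic lattice property; this in turn hinges on the $T$-invariance of the sort permutation, and hence of each $L_{y_i}$, which is precisely what lets us apply the theorem with a fixed $L$ per sample. If strict monotonicity on a range of $T$ were desired, I would additionally invoke the strictness clause of Theorem~\ref{thm:universal_score_decrease_paper} (i.e. $L_{y_i}<C$ and non-constant logits for the quantile-determining sample), though the non-increasing statement is all the corollary requires.
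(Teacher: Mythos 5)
Your proof is correct and follows essentially the same route as the paper's: apply Theorem~\ref{thm:universal_score_decrease_paper} per sample with the fixed rank $L_{y_i}$ (adding the $T$-independent regularizer for RAPS), then pass from coordinatewise score monotonicity to monotonicity of the quantile. The only difference is that you spell out two steps the paper leaves implicit — the $T$-invariance of the sort permutation (hence of each $L_{y_i}$) and the order-statistic monotonicity lemma — which is a welcome tightening rather than a different argument.
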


\begin{proof}

Let us start with APS.
    Set $T \geq \tilde{T}$. For each sample $(\x,y)$ in the CP set, we get the post softmax vectors $\hat{\bpi}_T$ and $\hat{\bpi}$, with and without TS, respectively.
    By Theorem~\ref{thm:universal_score_decrease_paper}, applied on the sorted vector $\bpi=[\hat{\pi}_{(1)},\ldots,\hat{\pi}_{(C)}]^\top$ with $L=L_y$ (the index that y is permuted to after sorting), we have that
    \begin{align}
    \label{eq:dec_APS_score}
    \sum\nolimits_{j=1}^{L_y} \pi_{\tilde{T},j} \geq \sum\nolimits_{j=1}^{L_y} \pi_{T,j}
    \end{align}
    That is, the score of APS decreases {\em universally} for each sample in the CP set.
    This implies that $\hat{q}$, the $\frac{\lceil (n+1)(1-\alpha) \rceil}{n}$ quantile of the scores of the samples of the CP set, decreases as well.

    We turn to consider RAPS. In this case, a decrease due to TS in the score of each sample $(\x,y)$ in the CP set, i.e.,
    $$
    \sum_{i=1}^{L_y} \hat{\pi}_{\tilde{T}, (i)}(\x) + \lambda (L_y - k_{reg})_{+} \geq \sum_{i=1}^{L_y} \hat{\pi}_{T,(i)}(\x) + \lambda (L_y - k_{reg})_{+},
    $$
    simply follows from adding $\lambda (L_y - k_{reg})_{+}$ to both sides of \eqref{eq:dec_APS_score}.
    The rest of the arguments are exactly as in APS.
    
\end{proof}

\begin{proposition}
Let $\z \in \mathbb{R}^C$ such that $z_1 \geq z_2 \geq \dots \geq z_C$, $\hat{\bpi}_T=\bsigma(\z/T)$ and $\hat{\bpi}=\bsigma(\z)$. Let 
    $\hat{q}, \hat{q}_T \in (0,1]$ \textcolor{black}{such that if $T > 1$ then $\hat{q} \geq \hat{q}_T$ and if $0 < T < 1$ then $\hat{q} \leq \hat{q}_T$}.
    Let $L = \min \{l : \sum \nolimits_{i=1}^l \hat{\pi}_i \geq \hat{q} \}$ , $L_T = \min \{l : \sum \nolimits_{i=1}^l \hat{\pi}_{T,i} \geq \hat{q}_T \}$. 
    The following holds:
    {
    \begin{equation}
    \label{eq:alt event_app}
    \left\{
    \begin{aligned}
        &\text{If} \quad 0 < T < 1 \text{ and } L_T > 1\quad \text{then}:    \quad \sum \nolimits_{i=1}^{L_T-1} \hat{\pi}_i - \sum \nolimits_{i=1}^{L_T-1} \hat{\pi}_{T,i} \leq \hat{q} - \hat{q}_T \implies L \geq L_T \\
        &\text{If} \quad T > 1 \text{ and } L>1 \quad \text{then}: \quad \sum \nolimits_{i=1}^{L-1} \hat{\pi}_i - \sum \nolimits_{i=1}^{L-1} \hat{\pi}_{T,i} \geq \hat{q} - \hat{q}_T \implies L \leq L_T 
    \end{aligned}
    \right.
    \end{equation}
    }
    
\end{proposition}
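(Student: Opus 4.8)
The plan is to reduce each of the two implications to a single strict inequality on a partial sum, which then follows immediately from the defining property of $L$ and $L_T$ as \emph{minimal} crossing indices. Write $S_\ell = \sum_{i=1}^{\ell} \hat{\pi}_i$ and $S_\ell^T = \sum_{i=1}^{\ell} \hat{\pi}_{T,i}$ for the two cumulative sums. The single fact I will use in both branches is that minimality forces a strict inequality one step below the threshold: if $L > 1$ then $S_{L-1} < \hat{q}$, and if $L_T > 1$ then $S_{L_T-1}^T < \hat{q}_T$. These are exactly the reason the branches assume $L_T > 1$ and $L > 1$; the excluded boundary cases $L_T = 1$ and $L = 1$ are trivial, since $1$ is the minimal admissible value, so the claimed inequality on set sizes holds automatically (as the paper already remarks after the statement).

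For the first branch ($0 < T < 1$, $L_T > 1$), I would establish $L \geq L_T$ by proving $S_{L_T-1} < \hat{q}$: this says that index $L_T - 1$ has not yet reached the threshold $\hat{q}$, so the minimal index $L$ at which the untempered cumulative sum crosses $\hat{q}$ must satisfy $L > L_T - 1$, i.e.\ $L \geq L_T$. To get $S_{L_T-1} < \hat{q}$, rearrange the hypothesis into $S_{L_T-1} \leq S_{L_T-1}^T + (\hat{q} - \hat{q}_T)$ and substitute the definitional bound $S_{L_T-1}^T < \hat{q}_T$; the right-hand side collapses to $\hat{q}$, yielding the desired strict inequality.

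For the second branch ($T > 1$, $L > 1$), the argument is the mirror image: I would prove $L \leq L_T$ by showing $S_{L-1}^T < \hat{q}_T$, which forces the minimal tempered-crossing index $L_T$ to exceed $L - 1$, i.e.\ $L_T \geq L$. Here I rearrange the hypothesis into $S_{L-1}^T \leq S_{L-1} - (\hat{q} - \hat{q}_T)$ and apply the definitional bound $S_{L-1} < \hat{q}$, which again collapses the right-hand side to $\hat{q}_T$.

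There is no genuine obstacle in this proposition — it is an elementary rearrangement — so the only thing to watch is the bookkeeping: pairing each branch's hypothesis with the correct definitional strict inequality, and verifying that strictness survives the substitution. The strictness is essential, since a merely non-strict conclusion $S_{L_T-1} \leq \hat{q}$ would permit $L = L_T - 1$ and break the implication. I note that the sign constraints on $\hat{q} - \hat{q}_T$ (namely $\hat{q} \geq \hat{q}_T$ when $T>1$ and $\hat{q} \leq \hat{q}_T$ when $0<T<1$), which are inherited from the threshold monotonicity of Corollary~\ref{corr:q_decrease}, are not actually needed for the two logical implications themselves; they only record the regime in which the proposition is subsequently applied.
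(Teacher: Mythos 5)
Your proof is correct and is essentially identical to the paper's own argument: both branches combine the rearranged hypothesis with the definitional strict inequality from minimality ($S_{L-1} < \hat{q}$ or $S^T_{L_T-1} < \hat{q}_T$) to force the other cumulative sum strictly below its threshold. Your closing observation that the sign constraints on $\hat{q} - \hat{q}_T$ are never used in the derivation is accurate as well — the paper states them but its proof likewise does not invoke them.
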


{
\begin{proof}
We start with $\textbf{T $>$ 1}$ \textbf{branch}, for which $\hat{q} \geq \hat{q}_T$:\\
    \begin{align}
    \label{eq:sufficient_cond}
        \sum \nolimits_{i=1}^{L-1} \hat{\pi}_i - \sum \nolimits_{i=1}^{L-1} \hat{\pi}_{T,i} \geq \hat{q} - \hat{q}_T \iff \sum \nolimits_{i=1}^{L-1} \hat{\pi}_i \geq \sum \nolimits_{i=1}^{L-1} \hat{\pi}_{T,i} + \hat{q} - \hat{q}_T
    \end{align}
    Note that $L := \min \{\ell : \sum \nolimits_{i=1}^\ell \hat{\pi}_i \geq \hat{q} \} \implies \sum_{i=1}^{L-1} \hat{\pi}_i < \hat{q}$. Thus, if \eqref{eq:sufficient_cond} holds, then:
    \begin{align*}
        \hat{q} > \sum \nolimits_{i=1}^{L-1} \hat{\pi}_{T,i} + \hat{q} - \hat{q}_T \iff \hat{q}_T > \sum \nolimits_{i=1}^{L-1} \hat{\pi}_{T,i}.
    \end{align*}
    Therefore, $L_T := \min \{\ell : \sum \nolimits_{i=1}^{\ell} \hat{\pi}_{T,i} \geq \hat{q}_T \}$ cannot be smaller than $L$. 

    \vspace{10mm}

    We continue with $\textbf{0 $<$ T $<$ 1}$ \textbf{branch}, for which $\hat{q}_T \geq \hat{q},$ we will take similar steps:\\
    \begin{align}
    \label{eq:sufficient_cond2}
        \sum \nolimits_{i=1}^{L_T-1} \hat{\pi}_i - \sum \nolimits_{i=1}^{L_T-1} \hat{\pi}_{T,i} \leq \hat{q} - \hat{q}_T \iff \sum \nolimits_{i=1}^{L_T-1} \hat{\pi}_{T,i} \geq \sum \nolimits_{i=1}^{L_T-1} \hat{\pi}_i + \hat{q}_T - \hat{q}
    \end{align}
    Note that $L_T := \min \{\ell : \sum \nolimits_{i=1}^\ell \hat{\pi}_{T,i} \geq \hat{q}_T \} \implies \sum_{i=1}^{L_T-1} \hat{\pi}_{T,i} < \hat{q}_T$. 
    Thus, if \eqref{eq:sufficient_cond2} holds, then:
    \begin{align*}
        \hat{q}_T > \sum \nolimits_{i=1}^{L_T-1} \hat{\pi}_i + \hat{q}_T - \hat{q} \iff \hat{q} > \sum \nolimits_{i=1}^{L_T-1} \hat{\pi}_i.
    \end{align*}
    Therefore, $L := \min \{\ell : \sum \nolimits_{i=1}^{\ell} \hat{\pi}_{i} \geq \hat{q} \}$ cannot be smaller than $L_T$.
    
\end{proof}
}

\vspace{10mm}


\begin{theorem}
\label{thm: Dz bound_app}
    Let $\z \in \mathbb{R}^C$ such that $z_1 \geq z_2 \geq \dots \geq z_C$  and denote $\Delta z = z_1 - z_2$. Then, the following holds:
    \begin{equation}
    \left\{
    \begin{aligned}
        &\text{If} \quad 0 < T < 1: \quad  \Delta z > \max\left\{\frac{T}{T-1}\ln\left(\frac{T}{4}\right),\frac{T}{T+1}\ln\left(\frac{4(C-1)^2}{T}\right)\right\} \implies \nabla_{z_1} g(\z;T,M) > 0 \\
        &\text{If} \quad T > 1: \quad \Delta z > \max\left\{\frac{T}{T-1}\ln(4T),\frac{{T}}{T+1}\ln(4T(C-1)^2)\right\} \implies \nabla_{z_1} g(\z;T,M) < 0
    \end{aligned}
    \right.
    \end{equation}

\end{theorem}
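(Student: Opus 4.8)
The plan is to obtain a closed form for $\nabla_{z_1} g(\z;T,M)$ and then reduce the sign question to a one-dimensional inequality in $\Delta z$. First I would differentiate the two ratios appearing in \eqref{eq:gap_func} with the quotient rule. Writing $S=\sum_{j=1}^{C}e^{z_j}$, $S_M=\sum_{i=1}^{M}e^{z_i}$, and the tempered analogues $S_T,S_{M,T}$ (using $e^{z_j/T}$), and noting that $z_1$ sits in all four sums since $M\ge 1$, one obtains
\[
\nabla_{z_1} g=\frac{e^{z_1}(S-S_M)}{S^{2}}-\frac{e^{z_1/T}(S_T-S_{M,T})}{T\,S_T^{2}} .
\]
Because $g$ is invariant to a common additive shift of all logits, I would factor $e^{z_1}$ (resp.\ $e^{z_1/T}$) out of the numerators and denominators, which amounts to setting $z_1=0$ and yields
\[
\nabla_{z_1} g=\frac{\tilde R_M}{\tilde S^{2}}-\frac{\tilde R_{M,T}}{T\,\tilde S_T^{2}},
\]
where $\tilde S=\sum_{j}e^{z_j-z_1}$, $\tilde R_M=\sum_{j>M}e^{z_j-z_1}$, and the tempered versions replace $z_j-z_1$ by $(z_j-z_1)/T$. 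For $1\le M\le C-1$ all these quantities are strictly positive (the excluded case $M=C$ gives $\nabla_{z_1}g\equiv 0$).

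Since every factor is positive, the sign of $\nabla_{z_1}g$ is governed entirely by the single product $P:=T\,\tfrac{\tilde R_M}{\tilde R_{M,T}}\,\tfrac{\tilde S_T^{2}}{\tilde S^{2}}$, with $\nabla_{z_1}g<0 \iff P<1$ and $\nabla_{z_1}g>0 \iff P>1$. I would then bound the two factors separately. For the tail ratio I would use $\tfrac{\sum_j a_j}{\sum_j b_j}\le\max_j\tfrac{a_j}{b_j}$ (and the matching lower bound with $\min$) applied to $a_j=e^{z_j-z_1}$, $b_j=e^{(z_j-z_1)/T}$; since every index $j>M$ satisfies $z_j\le z_2=z_1-\Delta z$, this gives $\tilde R_M/\tilde R_{M,T}\le e^{-\Delta z(T-1)/T}$ when $T>1$ and $\tilde R_M/\tilde R_{M,T}\ge e^{\Delta z(1-T)/T}$ when $0<T<1$. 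For the partition-function ratio I would use $\tilde S\ge 1$ (resp.\ $\tilde S_T\ge 1$) together with $\sum_{j\ge 2}e^{(z_j-z_1)/T}\le (C-1)e^{-\Delta z/T}$, bounding $\tilde S_T/\tilde S$ from above by $1+y$ for $T>1$ and from below by $1/(1+y)$ for $0<T<1$, where $y:=(C-1)e^{-\Delta z/T}$ (resp.\ $y:=(C-1)e^{-\Delta z}$).

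Combining these bounds reduces the target to, for $T>1$, showing $T\,e^{-\Delta z(T-1)/T}(1+y)^2<1$, and for $0<T<1$ to the mirror inequality $T\,e^{\Delta z(1-T)/T}/(1+y)^2>1$. The final and most delicate step is to split on the size of $y$ via $(1+y)^2\le 4\max\{1,y^2\}$: the branch $y\le 1$ uses $(1+y)^2\le 4$ and, after taking logarithms, produces exactly the first argument of the $\max$ in \eqref{eq:bound_func}, while the branch $y>1$ uses $(1+y)^2\le 4y^2$, whose extra factor of order $e^{-2\Delta z/T}$ (resp.\ $e^{-2\Delta z}$) combines with the leading exponential to replace the coefficient $|T-1|/T$ by $(T+1)/T$, producing the second argument of the $\max$. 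Since the hypothesis $\Delta z>b(T)$ exceeds both arguments simultaneously, the relevant sufficient condition holds in whichever regime $y$ lies, giving the claimed sign in both cases. I expect the main obstacle to be the bookkeeping: keeping every inequality correctly oriented as the sign of $T-1$ flips between the two cases, and verifying that the two algebraic conditions emerging from the $y\le1$/$y>1$ split simplify exactly to the two terms inside $b(T)$.
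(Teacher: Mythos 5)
Your proposal is correct and follows essentially the same route as the paper's proof: the same quotient-rule expression for $\nabla_{z_1} g$, the same bound $e^{-\Delta z(T-1)/T}$ on the tail ratio (the paper obtains it from its auxiliary lemma, you from a mediant-type ratio bound), the same tail-sum estimate $(C-1)e^{-\Delta z/T}$ for the partition functions, and your split $(1+y)^2\le 4\max\{1,y^2\}$ is exactly the paper's auxiliary constant $A>\max\{2(C-1)e^{-\Delta z/T},\,2\}$ in disguise, with both case analyses producing the two arguments of the max in $b(T)$ identically (including the $T/(T+1)$ coefficient that arises from absorbing the $e^{-2\Delta z/T}$ factor). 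A minor bonus on your side: you explicitly note the degenerate case $M=C$, where the gradient vanishes identically, which the paper leaves implicit.
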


\begin{proof}
Let us start with $\textbf{T $>$ 1}$ \textbf{branch:}
 The gap function is defined as follows:
 \begin{align*}
     g_z(\z;T,M) = \frac{\sum_{i=1}^{M}\exp(z_{i})}{\sum_{j=1}^{C}\exp(z_{j})} - \frac{\sum_{i=1}^{M}\exp(z_{i}/T)}{\sum_{j=1}^{C}\exp(z_{j}/T)}
 \end{align*}
 Let us differentiate with respect to $z_1$
 \begin{align*}
     \nabla_{z_1} g_z(\z;T,M) &= \frac{\exp(z_1)\left[\sum_{j=1}^{C}\exp(z_j) - \sum_{i=1}^{M}\exp(z_i)\right]}{\left[\sum_{j=1}^{C}\exp(z_j)\right]^2} - \frac{\frac{1}{T}\exp(z_1/T)\left[\sum_{j=1}^{C}\exp(z_j/T) - \sum_{i=1}^{M}\exp(z_i/T)\right]}{\left[\sum_{j=1}^{C}\exp(z_j/T)\right]^2} \\
     &= \frac{\exp(z_1)\sum_{c=M+1}^{C}\exp(z_c)}{\left[\sum_{c=1}^{C}\exp(z_c)\right]^2} - \frac{\frac{1}{T}\exp(z_1/T)\sum_{c=M+1}^{C}\exp(z_c/T)}{\left[\sum_{c=1}^{C}\exp(z_c/T)\right]^2}
 \end{align*}

Therefore, 
\begin{align}
\label{condition}
    \nabla_{z_1} g_z(\z;T,M) < 0 \iff \exp\left(z_1\left(1-\frac{1}{T}\right)\right) < \frac{1}{T} \frac{\sum_{c=M+1}^{C}\exp(z_c/T)}{\sum_{c=M+1}^{C}\exp(z_c)} \left[\frac{\sum_{c=1}^{C}\exp(z_c)}{\sum_{c=1}^{C}\exp(z_c/T)}\right]^2
\end{align}
{where we arranged the inequality and used $\exp(z_1)/\exp(z_1/T) = \exp\left(z_1\left(1-\frac{1}{T}\right)\right)$.}

According to the auxiliary lemma in Theorem \ref{thm:universal_score_decrease_paper_app}, if we substitute $\tilde{T} = 1$ we get: for all $z_i > z_j$ and $T > 1$ we have $\exp(z_i) \cdot \exp(z_j/T) > \exp(z_i/T) \cdot \exp(z_j) $.

Therefore, {taking $i=M+1$ and summing both sides of the inequality over $j=M+1,...,C$,} the following holds:
\begin{align}
\label{eq:ineq_based_on_lemma}
    \frac{\sum_{c=M+1}^{C}\exp(z_c/T)}{\sum_{c=M+1}^{C}\exp(z_c)} > \frac{\exp(z_{M+1}/T)}{\exp(z_{M+1})}
\end{align}

In addition, note that $\forall A$ such that $A > \max\left(2(C - 1)\exp(-{\Delta z}/{T}), 2\right)$ we get:
\begin{align}
\label{eq:ineq_based_on_A} 
     A > (C - 1)\exp(-{\Delta z}/{T}) + 1 \Longrightarrow A \exp(z_1/T) > \sum_{c=1}^{C}\exp(z_c/T)
\end{align}
{where the implication follows from
$$
\sum_{c=1}^C \exp(z_c/T)/\exp(z_1/T) = 1+ \sum_{c=2}^C \exp(-(z_1-z_c)/T) 
\leq 1 + (C-1)\exp(-(z_1-z_2)/T).
$$}

Using the above inequalities (\eqref{eq:ineq_based_on_lemma} and \eqref{eq:ineq_based_on_A})  we obtain 
\begin{align*}
    \frac{1}{T} \frac{\sum_{c=M+1}^{C}\exp(z_c/T)}{\sum_{c=M+1}^{C}\exp(z_c)} \left[\frac{\sum_{c=1}^{C}\exp(z_c)}{\sum_{c=1}^{C}\exp(z_c/T)}\right]^2 > \frac{1}{T} \frac{\exp(z_{M+1}/T)}{\exp(z_{M+1})} \left[\frac{\exp(z_1)}{A\exp(z_1/T)}\right]^2 \geq \frac{1}{A^2 T} \exp\left(\left(2z_1-z_2\right)\left(1 - \frac{1}{T}\right)\right) 
\end{align*}
{where in the last inequality we used
$$
\frac{\exp(z_{M+1}/T)}{\exp(z_{M+1})} \frac{\exp(z_1)}{\exp(z_1/T)}
=
\exp\left((z_1-z_{M+1})\left(1-\frac{1}{T}\right)\right)
\geq 
\exp\left((z_1-z_2)\left(1-\frac{1}{T}\right)\right).
$$}

Hence, according to \eqref{condition}: $\exp\left(z_1\left(1-\frac{1}{T}\right)\right) < \frac{1}{A^2 T} \exp\left(\left(2z_1-z_2\right)\left(1 - \frac{1}{T}\right)\right) \Longrightarrow  \nabla_{z_1} g_z(\z;T,M) < 0$

Note that 
\begin{align*}
    &\exp\left(z_1\left(1-\frac{1}{T}\right)\right) < \frac{1}{A^2 T} \exp\left(\left(2z_1-z_2\right)\left(1 - \frac{1}{T}\right)\right) 
    \\
    &\iff
    1 < \frac{1}{A^2 T} \exp\left(\left(z_1-z_2\right)\left(1 - \frac{1}{T}\right)\right)
    \iff \Delta z > \frac{T}{T-1} \ln(A^2 T)
\end{align*}

And by using the definition of $A$ we obtain:
\begin{align*}
    \Delta z > \max \left(\frac{T}{T-1} \ln(4T), \frac{{T}}{T+1} \ln(4T(C-1)^2)\right) \Longrightarrow \nabla_{z_1} g_z(\z;T,M) < 0
\end{align*}

\vspace{10mm}

We continue with $\textbf{0 $<$ T $<$ 1}$ \textbf{branch:}
    Based on steps we took for the previous branch:
    \begin{align}
    \label{condition T<1}
    \nabla_{z_1} g_z(\z;T,M) > 0 \iff \exp\left(z_1\left(1-\frac{1}{T}\right)\right) > \frac{1}{T} \frac{\sum_{c=M+1}^{C}\exp(z_c/T)}{\sum_{c=M+1}^{C}\exp(z_c)} \left[\frac{\sum_{c=1}^{C}\exp(z_c)}{\sum_{c=1}^{C}\exp(z_c/T)}\right]^2
\end{align}
Note that according to the auxiliary lemma in Theorem \ref{thm:universal_score_decrease_paper_app}, if we substitute $\tilde{T} = 1$ we get: 
for all $z_i > z_j$ and $0 < T < 1$ we have $\exp(z_i) \cdot \exp(z_j/T) > \exp(z_i/T) \cdot \exp(z_j) $ and therefore the following holds:
\begin{align*}
    \frac{\sum_{c=M+1}^{C}\exp(z_c)}{\sum_{c=M+1}^{C}\exp(z_c/T)} > \frac{\exp(z_{M+1})}{\exp(z_{M+1}/T)}
\end{align*}

In addition, note that $\forall A$ such that $A > \max\left(2(C - 1)\exp(-{\Delta z}), 2\right)$ we get:

\begin{align}
     A > (C - 1)\exp(-{\Delta z}) + 1 \Longrightarrow A \exp(z_1) > \sum_{c=1}^{C}\exp(z_c)
\end{align}

Using above inequalities we obtain 
\begin{align*}
    \frac{1}{T} \frac{\sum_{c=M+1}^{C}\exp(z_c/T)}{\sum_{c=M+1}^{C}\exp(z_c)} \left[\frac{\sum_{c=1}^{C}\exp(z_c)}{\sum_{c=1}^{C}\exp(z_c/T)}\right]^2 < \frac{1}{T} \frac{\exp(z_{M+1}/T)}{\exp(z_{M+1})} \left[\frac{A\exp(z_1)}{\exp(z_1/T)}\right]^2 \leq \frac{A^2}{T} \exp\left(\left(2z_1-z_2\right)\left(1 - \frac{1}{T}\right)\right) 
\end{align*}

Hence, according to \eqref{condition T<1}: $\exp\left(z_1\left(1-\frac{1}{T}\right)\right) > \frac{A^2}{T} \exp\left(\left(2z_1-z_2\right)\left(1 - \frac{1}{T}\right)\right) \Longrightarrow  \nabla_{z_1} g_z(\z;T,M) > 0$

Note that 
\begin{align*}
    \exp\left(z_1\left(1-\frac{1}{T}\right)\right) > \frac{A^2}{T} \exp\left(\left(2z_1-z_2\right)\left(1 - \frac{1}{T}\right)\right) \iff \Delta z > \frac{T}{T-1} \ln\left(\frac{T}{A^2}\right)
\end{align*}
The sign of the in-equality changed because $1 - \frac{1}{T} < 0$.

And by using the definition of $A$ we obtain:
\begin{align*}
    \Delta z > \max \left(\frac{T}{T-1} \ln\left(\frac{T}{4}\right), \frac{{T}}{T+1} \ln\left(\frac{4(C-1)^2}{T}\right)\right) \Longrightarrow \nabla_{z_1} g_z(\z;T,M) > 0
\end{align*}

\end{proof}

\vspace{10mm}

\begin{proposition}
\label{thm:TS_inc_ent}
Let $\z \in \mathbb{R}^C$, %
$\bsigma(\cdot)$ be the softmax function, and $\Delta^{C-1}$ denote the simplex in $\mathbb{R}^C$.
Consider Shannon's entropy $H : \Delta^{C-1} \rightarrow \mathbb{R}$, i.e., $H(\bpi) = - \sum_{i=1}^{C} \pi_{i} \ln(\pi_{i}) $.
Unless $\z \propto \1_C$ (then $\bsigma(\z / T)=\bsigma(\z)$), 
we have that $H(\bsigma(\z / T))$ is strictly monotonically increasing as $T$ grows.

\end{proposition}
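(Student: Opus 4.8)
The plan is to reparametrize by the inverse temperature $\beta := 1/T > 0$, which turns the softmax into the Gibbs measure $\pi_i(\beta) = e^{\beta z_i}/Z(\beta)$ with partition function $Z(\beta) := \sum_{j=1}^C e^{\beta z_j}$, and then to read off the monotonicity of the entropy from the log-partition function $A(\beta) := \ln Z(\beta)$. Writing $\ln \pi_i(\beta) = \beta z_i - A(\beta)$, the entropy collapses nicely:
\[
H(\beta) = -\sum_{i=1}^C \pi_i(\beta)\ln\pi_i(\beta) = -\beta\sum_{i=1}^C \pi_i(\beta) z_i + A(\beta) = A(\beta) - \beta A'(\beta),
\]
where I have used that $A'(\beta) = \sum_i \pi_i(\beta) z_i = \Exp_{\pi(\beta)}[z]$ is exactly the mean of $z$ under $\pi(\beta)$.

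First I would record the two cumulant identities obtained by differentiating $A(\beta) = \ln\sum_j e^{\beta z_j}$ directly, namely $A'(\beta) = \Exp_{\pi(\beta)}[z]$ and $A''(\beta) = \mathrm{Var}_{\pi(\beta)}[z]$. Differentiating the expression $H(\beta) = A(\beta) - \beta A'(\beta)$ then produces a clean cancellation,
\[
H'(\beta) = A'(\beta) - A'(\beta) - \beta A''(\beta) = -\beta\,\mathrm{Var}_{\pi(\beta)}[z].
\]
Converting back to $T$ by the chain rule, using $d\beta/dT = -1/T^2$, gives
\[
\frac{dH}{dT} = \frac{dH}{d\beta}\frac{d\beta}{dT} = \bigl(-\beta\,\mathrm{Var}_{\pi(\beta)}[z]\bigr)\Bigl(-\frac{1}{T^2}\Bigr) = \frac{1}{T^3}\,\mathrm{Var}_{\pi(\beta)}[z] \ge 0,
\]
since $T>0$.

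Finally I would settle the equality case to get strictness. Because $\pi_i(\beta) > 0$ for every $i$, the variance $\mathrm{Var}_{\pi(\beta)}[z]$ vanishes if and only if all the entries $z_i$ coincide, i.e.\ $\z \propto \1_C$ (in which case $\bsigma(\z/T) = \bsigma(\z)$ is independent of $T$ and $H$ is constant). Otherwise $\mathrm{Var}_{\pi(\beta)}[z] > 0$ for every $\beta > 0$, so $dH/dT > 0$ on all of $(0,\infty)$, and integrating this strictly positive derivative yields strict monotone growth of $H$ in $T$.

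The computation is essentially routine once the log-partition reformulation is in place, so the only points needing care—the ``hard part,'' such as it is—are (i) getting the cumulant identity $A'' = \mathrm{Var}$ right, since this is precisely where both the sign and the strictness originate, and (ii) correctly tracking the sign through the change of variables $\beta = 1/T$, so that decreasing $\beta$ (i.e.\ increasing $T$) genuinely raises the entropy. One could alternatively differentiate $H(T) = \ln Z(1/T) - \tfrac{1}{T}\sum_i \pi_i z_i$ directly in $T$, but the $\beta$-reparametrization exposes the variance structure and hence makes both the monotonicity and the equality characterization transparent.
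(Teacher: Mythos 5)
Your proof is correct, and it takes a genuinely different route from the paper's. The paper differentiates $H(\bsigma(\z/T))$ directly via the chain rule, using the explicit softmax Jacobian $\mathrm{diag}(\bsigma) - \bsigma\bsigma^\top$, reduces the derivative to the quadratic form $\frac{1}{T^2}\z^\top(\mathrm{diag}(\bsigma) - \bsigma\bsigma^\top)\z$, and then proves positive semi-definiteness of that matrix by a Cauchy--Schwarz argument, extracting the equality case $\z \propto \1_C$ from the equality case of Cauchy--Schwarz. You instead pass to the inverse temperature $\beta = 1/T$, recognize the softmax as a Gibbs measure, and use the cumulant identities $A'(\beta) = \Exp_{\pi(\beta)}[z]$, $A''(\beta) = \mathrm{Var}_{\pi(\beta)}[z]$ for the log-partition function, which collapse the entropy to $A(\beta) - \beta A'(\beta)$ and the derivative to $-\beta\,\mathrm{Var}_{\pi(\beta)}[z]$. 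The two arguments are at heart computing the same object --- the paper's quadratic form $\z^\top(\mathrm{diag}(\bsigma) - \bsigma\bsigma^\top)\z$ \emph{is} exactly $\mathrm{Var}_{\bsigma}[z] = \Exp_\bsigma[z^2] - (\Exp_\bsigma[z])^2$ --- but your packaging buys something real: the nonnegativity and the equality characterization (variance vanishes iff $z$ is constant on the full support, i.e.\ $\z \propto \1_C$) come for free from the cumulant identity, with no separate Cauchy--Schwarz step, and the statistical meaning of the derivative is exposed rather than hidden in a matrix computation. The paper's approach, conversely, is self-contained linear algebra and does not presuppose familiarity with exponential-family / cumulant facts. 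Both proofs handle the strictness correctly; your final integration step (strictly positive derivative on $(0,\infty)$ implies strict increase) is sound.
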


\begin{proof}

To prove this statement, let us show that the function $f(T)=H(\bsigma(\z / T))$ monotonically increases (as $T$ increases, regardless of $\z$).
To achieve this, we need to show that $f'(T)=\frac{d}{dT} H(\boldsymbol{\sigma}(\z / T)) \geq 0$.

By the chain-rule, $f'(T)=\frac{d}{dT}\left(H(\bsigma(\z / T))\right) = \frac{\partial H(\bsigma)}{\partial \bsigma}
\frac{\partial \bsigma(\z)}{\partial \z}
\frac{\partial}{\partial T} (\z /T)$.
Let us compute each term:
\begin{align*}
    \frac{\partial H(\boldsymbol{\sigma})}{\partial \sigma_{i}} &= - \ln(\sigma_{i}) - \frac{1}{\sigma_{i}} \cdot \sigma_{i} = - \ln(\sigma_{i}) - 1 \Longrightarrow \frac{\partial H(\boldsymbol{\sigma})}{\partial \boldsymbol{\sigma}} = -\ln(\boldsymbol{\sigma})^\top - \1_C^\top
    \\
    \frac{\partial \sigma_{i}(\z)}{\partial z_{j}} &= \sigma_{i}(\z) \cdot (\mathbbm{1}\{i=j\} - \sigma_{j}(\z)) \Longrightarrow 
    \frac{\partial \boldsymbol{\sigma}(\z)}{\partial \z} = \mathrm{diag}(\boldsymbol{\sigma}(\z)) - \boldsymbol{\sigma}(\z) \boldsymbol{\sigma}(\z)^\top
    \\
    \frac{\partial}{\partial T} (\z /T) &= - \frac{1}{T^2} \z
\end{align*}
where in $\ln(\bsigma)$ the function operates entry-wise and $\mathbbm{1}\{i=j\}$ is the indicator function (equals $1$ if $i=j$ and $0$ otherwise).

Next, observe that $\1_C^\top \left ( \mathrm{diag}(\boldsymbol{\sigma}) - \boldsymbol{\sigma} \boldsymbol{\sigma}^\top \right ) = \bsigma^\top - \bsigma^\top = \0^\top$.
Consequently, we get
\begin{align*}
\frac{d}{dT}\left(H(\boldsymbol{\sigma}(\z / T))\right) &= \frac{1}{T^2} \ln(\bsigma(\z))^\top (\mathrm{diag}(\boldsymbol{\sigma}(\z)) - \boldsymbol{\sigma}(\z) \boldsymbol{\sigma}(\z)^\top)\z \\
&= 
\frac{1}{T^2} \left ( \z - s(\z)\1_C \right)^\top (\mathrm{diag}(\boldsymbol{\sigma}(\z)) - \boldsymbol{\sigma}(\z) \boldsymbol{\sigma}(\z)^\top)\z \\
&= 
\frac{1}{T^2} \z^\top (\mathrm{diag}(\boldsymbol{\sigma}(\z)) - \boldsymbol{\sigma}(\z) \boldsymbol{\sigma}(\z)^\top)\z
\end{align*}
where in the second equality we used $[\ln(\bsigma(\z))]_i = \ln \left( \frac{\exp(z_i)}{\sum_{j=1}^C\exp(z_j)}\right ) = z_i - s(\z)$.

Therefore, for establishing that $\frac{d}{dT}\left(H(\boldsymbol{\sigma}(\z / T))\right) \geq 0$, we can show that $(\mathrm{diag}(\boldsymbol{\sigma}) - \boldsymbol{\sigma} \boldsymbol{\sigma}^\top)$ is a positive semi-definite matrix.
Let $\tilde{\sigma}_i=\exp(z_i)$ and notice that $\sigma_i=\frac{\tilde{\sigma}_i}{\sum_{j=1}^C \tilde{\sigma}_j}$.
Indeed, for any $\u \in \mathbb{R}^C \setminus \{\0\}$ we have that
\begin{align*}
\u^\top(\mathrm{diag}(\boldsymbol{\sigma}) - \boldsymbol{\sigma} \boldsymbol{\sigma}^\top)\u &= \sum_{i=1}^{C} u_{i}^2\sigma_{i} - \left(\sum_{i=1}^{C} u_{i}\sigma_{i} \right)^2 \\
&=
\frac{\sum_{i=1}^{C} u_{i}^2 \tilde{\sigma}_{i} \cdot \sum_{j=1}^{C} \tilde{\sigma}_{j} - \left(\sum_{i=1}^{C} u_{i}\tilde{\sigma}_{i} \right)^2}{\left(\sum_{j=1}^{C} \tilde{\sigma}_{j}\right)^2} \\
&\geq 0,
\end{align*}
where the inequality follows from Cauchy–Schwarz inequality:
$\sum_{i=1}^{C} u_{i}\tilde{\sigma}_{i} = \sum_{i=1}^{C} u_{i}\sqrt{\tilde{\sigma}_{i}} \sqrt{\tilde{\sigma}_{i}} \leq \sqrt{\sum_{i=1}^{C} u_{i}^2\tilde{\sigma}_{i} } \sqrt{ \sum_{j=1}^{C} \tilde{\sigma}_{j}}$.

Cauchy–Schwarz inequality is attained with equality iff $u_i \sqrt{\tilde{\sigma}_i} = c \sqrt{\tilde{\sigma}_i}$ with the same constant $c$ for $i=1,\ldots,C$, i.e., when $\u=c\1_C$.
Recalling that 
$\frac{d}{dT}\left(H(\boldsymbol{\sigma}(\z / T))\right) = 
\frac{1}{T^2} \z^\top (\mathrm{diag}(\boldsymbol{\sigma}) - \boldsymbol{\sigma} \boldsymbol{\sigma}^\top)\z$, this implies that $\frac{d}{dT}\left(H(\boldsymbol{\sigma}(\z / T))\right) = 0 \iff z_1=\ldots=z_C$, and otherwise $\frac{d}{dT}\left(H(\boldsymbol{\sigma}(\z / T))\right)>0$.

\end{proof}

\subsection{On the link between \eqref{eq:alt event} and \eqref{eq:events_under_study}}
\label{app: link eq}

{

With our assumption that $\hat{q}$ and $\hat{q}_T$ are associated with the same quantile sample $\z^q$, we have that
$\hat{q}-\hat{q}_T$ in \eqref{eq:alt event} can be written as $g(\z^q;T,L^q)$, where $L^q$ denotes the rank of the true label of $\z^q$. 
{The left-hand side in \eqref{eq:alt event} can be written as $g(\z;T,{L_T-1})$ for $T<1$ and $g(\z;T,{L-1})$ for $T>1$. In this section, we theoretically show the similarity between $g(\z;T,M)$  and $g(\z;T,L_q)$ $\forall M\in [C]$. Specifically, we prove exponential decay of this distance as function of $\Delta z$. According to Figure \ref{Dz vs Samples}, this justifies the assumption of $g(\z^q,T,M) \approx g(\z^q,T,L^q)$, which leads to studying \eqref{eq:events_under_study}.}

\begin{proposition}
\label{prop: final bound}
     Let $\z \in \mathbb{R}^C$ such that $z_1 \geq z_2 \geq \dots \geq z_C$. Consider the following functions:
        \[
         d(\z;T,i) = 
        \frac{\exp(z_{i})}{\sum_{j=1}^{C}\exp(z_{j})} - \frac{\exp(z_{i}/T)}{\sum_{j=1}^{C}\exp(z_{j}/T)}
        \]
    \[
    g(\z;T,M) = 
        \frac{\sum_{i=1}^{M}\exp(z_{i})}{\sum_{j=1}^{C}\exp(z_{j})} - \frac{\sum_{i=1}^{M}\exp(z_{i}/T)}{\sum_{j=1}^{C}\exp(z_{j}/T)} = \sum_{i=1}^{M} d(\z;T,i) 
    \]
    Let $s_{T>1} = \max{\{i \in [C] : d(\z;T,j) > 0\}}$, i.e., the last index where $d(\z,T,\cdot)$ is positive. Similarly, let $s_{T<1} = \max{\{i \in [C] : d(\z;T,j) < 0\}}$, i.e., the last index where $d(\z,T,\cdot)$ is negative. The following holds:
    \begin{equation*}
    \left\{
    \begin{aligned}
        &\text{If} \quad 0 < T < 1: \quad  s_{T<1} = 1 \; \Longrightarrow \; \forall M, L^q \in [C] \quad \left| g(\z;T,M) - g(\z;T,L^q) \right| < \frac{(C-1)\exp{(-\Delta z)}}{(C-1)\exp{(-\Delta z)} + 1} \\
        &\text{If} \quad T > 1: \quad  s_{T>1} = 1 \; \Longrightarrow \; \forall M, L^q \in [C] \quad \left| g(\z;T,M) - g(\z;T,L^q) \right| < \frac{(C-1)\exp{(-\Delta z/T)}}{(C-1)\exp{(-\Delta z/T)} + 1}
    \end{aligned}
    \right.
    \end{equation*}
\end{proposition}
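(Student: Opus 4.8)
The plan is to collapse the bivariate ``for all $M, L^q \in [C]$'' statement into a bound on the single quantity $d(\z;T,1)$, by using the sign hypothesis $s=1$ to show that $g(\z;T,\cdot)$ is unimodal with its extremum at $M=1$. Consider first the branch $T>1$. Since $s_{T>1}=1$ is the \emph{last} index at which $d(\z;T,\cdot)$ is positive, we have $d(\z;T,i) \leq 0$ for every $i \geq 2$, so that $g(\z;T,M) = \sum_{i=1}^M d(\z;T,i)$ is non-increasing in $M$ on $\{1,\dots,C\}$. Because $g(\z;T,C) = \sum_{i=1}^C \sigma_i(\z) - \sum_{i=1}^C \sigma_i(\z/T) = 1 - 1 = 0$, this monotonicity forces $0 = g(\z;T,C) \leq g(\z;T,M) \leq g(\z;T,1) = d(\z;T,1)$ for every $M \in [C]$. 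Consequently both $g(\z;T,M)$ and $g(\z;T,L^q)$ lie in the interval $[0, d(\z;T,1)]$, and therefore $\left| g(\z;T,M) - g(\z;T,L^q) \right| \leq d(\z;T,1)$.

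It then remains to bound $d(\z;T,1) = \sigma_1(\z) - \sigma_1(\z/T)$ by the target fraction. Since $\z$ has at least two coordinates ($C \geq 2$, as $\Delta z = z_1 - z_2$ is defined), we have $\sigma_1(\z) < 1$ strictly, hence $d(\z;T,1) < 1 - \sigma_1(\z/T) = \frac{\sum_{j=2}^C \exp(z_j/T)}{\sum_{j=1}^C \exp(z_j/T)}$. I would then estimate the tail crudely via $z_j \leq z_2$ for $j \geq 2$, namely $\sum_{j=2}^C \exp(z_j/T) \leq (C-1)\exp(z_2/T)$, and invoke the monotonicity of $x \mapsto x/(\exp(z_1/T)+x)$ to get $1 - \sigma_1(\z/T) \leq \frac{(C-1)\exp(z_2/T)}{\exp(z_1/T)+(C-1)\exp(z_2/T)}$. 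Dividing numerator and denominator by $\exp(z_1/T)$ and recalling $\Delta z = z_1 - z_2$ yields exactly $\frac{(C-1)\exp(-\Delta z/T)}{(C-1)\exp(-\Delta z/T)+1}$, which (combined with the strict inequality above) closes this branch.

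The branch $0 < T < 1$ is entirely symmetric: now $s_{T<1}=1$ gives $d(\z;T,i) \geq 0$ for $i \geq 2$, so $g(\z;T,\cdot)$ is non-decreasing on $\{1,\dots,C\}$ and, with $g(\z;T,C)=0$, is trapped in $[d(\z;T,1),0]$; the difference is then at most $|d(\z;T,1)| = \sigma_1(\z/T) - \sigma_1(\z) < 1 - \sigma_1(\z)$, and the same tail estimate applied to the unscaled logits (replacing each $z_j/T$ by $z_j$) produces $\frac{(C-1)\exp(-\Delta z)}{(C-1)\exp(-\Delta z)+1}$. I expect the only genuine obstacle to be the first step --- recognizing that the hypothesis $s=1$ makes $g(\z;T,\cdot)$ unimodal and thereby collapses the statement over all $M, L^q$ to a bound on $d(\z;T,1)$. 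Once that reduction is in place the remaining estimate on $d(\z;T,1)$ is a routine tail bound, and the ordering of the interval endpoints is immediate from $g(\z;T,C)=0$ together with the monotonicity just established (consistent with the global sign of $g$ guaranteed by Theorem~\ref{thm:universal_score_decrease_paper}).
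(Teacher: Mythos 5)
Your proof is correct, and its overall skeleton coincides with the paper's: reduce the statement over all $M, L^q \in [C]$ to a bound on the single quantity $|d(\z;T,1)|$, and then apply the identical tail estimate $|d(\z;T,1)| < \frac{(C-1)\exp(-\Delta z/T)}{(C-1)\exp(-\Delta z/T)+1}$ (with $\Delta z$ in place of $\Delta z/T$ when $0<T<1$). Where you genuinely differ is in how that reduction is obtained. The paper routes it through two auxiliary lemmas: Lemma~\ref{prop: diff_elementwise}, which shows via a continuous-function argument (analyzing $A e^{x}-B e^{x/T}$) that $d(\z;T,\cdot)$ changes sign exactly once, and Lemma~\ref{prop: diff upper bound}, which bounds $|g(\z;T,M)-g(\z;T,L^q)| = |\sum_{i=L^q+1}^{M} d(\z;T,i)|$ by $|\sum_{i=1}^{s} d(\z;T,i)|$ for an \emph{arbitrary} sign-change index $s$ through a three-case analysis of the summation range; the proposition then substitutes $s=1$. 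You bypass both lemmas: under the hypothesis $s=1$, the definition of the maximum alone forces every $d(\z;T,i)$ with $i \geq 2$ to share one sign, so $g(\z;T,\cdot)$ is monotone in $M$; combined with $g(\z;T,C)=0$ this traps every value of $g$ in the interval with endpoints $0$ and $d(\z;T,1)$, so any two values differ by at most $|d(\z;T,1)|$. What the paper's route buys is generality (a bound valid for any $s$, plus the sign-change structure it also invokes when discussing Figure~\ref{fig: s vs scores}); what your route buys is brevity and self-containment, and it also quietly repairs a small blemish: the strict inequality asserted in Lemma~\ref{prop: diff upper bound} is actually attained with equality when $\{M,L^q\}=\{1,C\}$, whereas your non-strict reduction followed by the strict tail bound still yields the strict inequality claimed in the proposition.
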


When applying the proposition on the quantile sample $\z^q$, since it has $\Delta z^q \gg 1$ we get small upper bounds. 

As can be seen in the proposition, the bounds require that $s_{T<1}$ and $s_{T=1}$ equal 1. Recall also that the logits vector is sorted. For $T>1$ (resp.~$T<1$), this means that the TS attenuates (resp.~amplifies) only the maximal softmax bin.
This is expected to hold for the quantile sample, due to having a very dominant entry in $\z^q$. 
We show it empirically in
Figure \ref{fig: s vs scores}, which presents the curves for $s_{T<1}$ and $s_{T>1}$ across samples sorted by their scores for CIFAR100-ResNet50. Both curves indicate that approximately the first third of the samples correspond to $s > 1$, while higher-score samples align with $s=1$. Notably, for the quantile sample $\z^q$ —characterized by scores exceeding 90\% of the samples — we observe that $s_{T<1}=1$ and $s_{T>1}=1$.

\begin{figure}[t]
    \centering
    \begin{subfigure}
        \centering
        \includegraphics[width=0.4\textwidth]{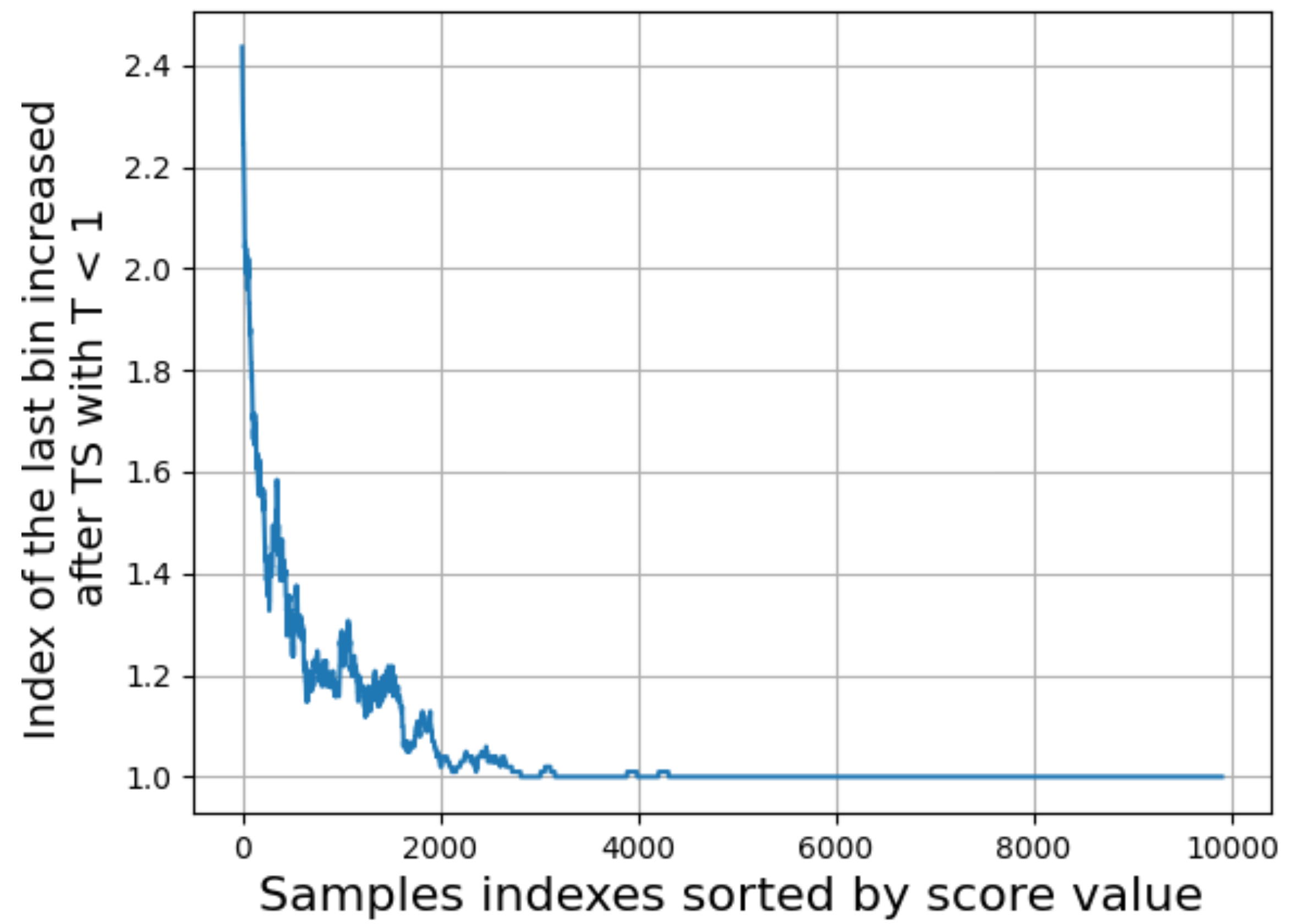}
    \end{subfigure}
    \hspace{1cm}
    \begin{subfigure}
        \centering
        \includegraphics[width=0.4\textwidth]{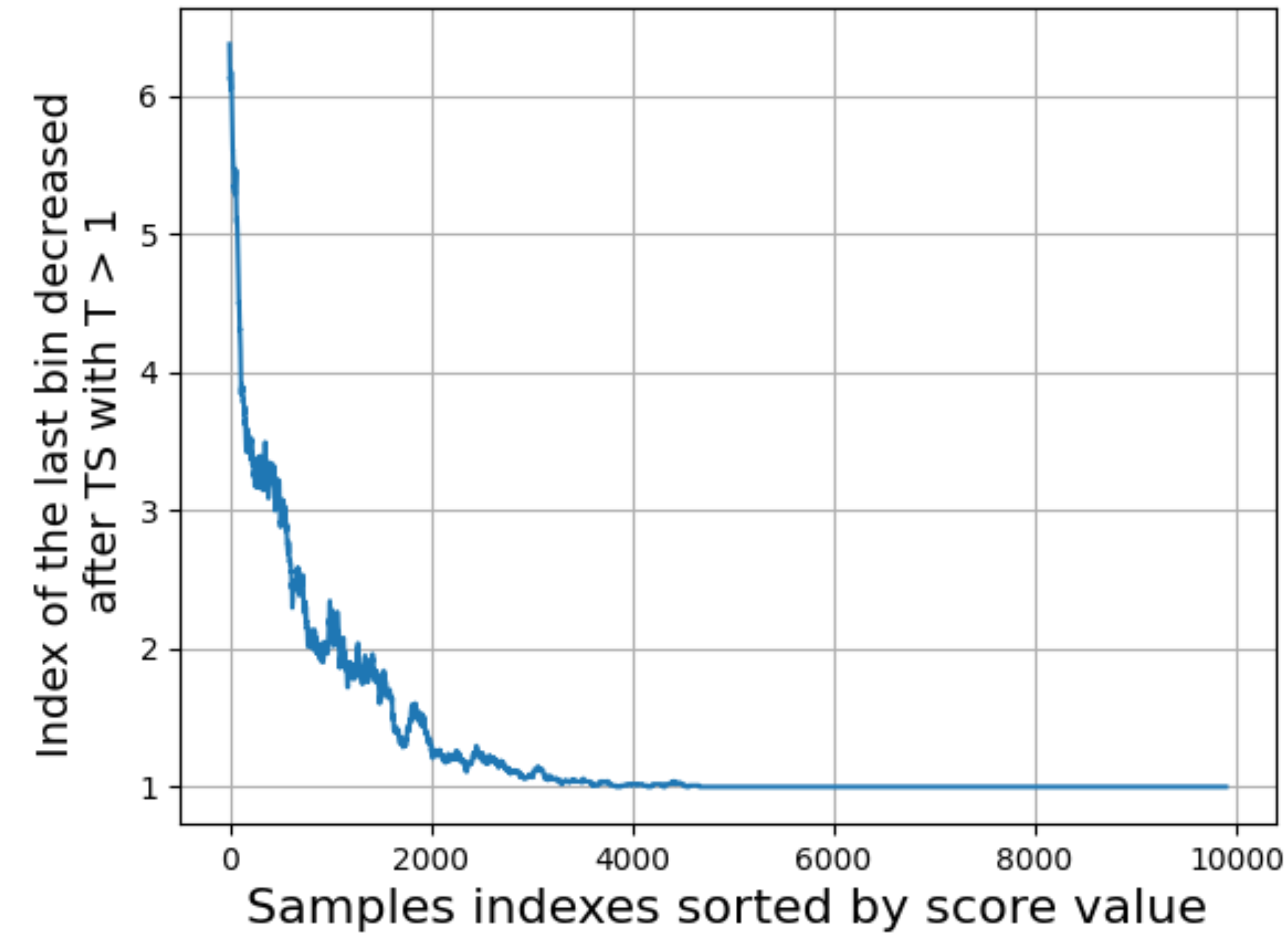}
    \end{subfigure}
    
    \caption{Considering CIFAR100-ResNet50 with sorted logits vectors. Left: the last index increased by TS with $T = 0.5$; Right: the last index decreased by TS with $T = 1.5$, for each sample sorted by score value. We see that $s_{T<1}=1$ and $s_{T>1}=1$ for the quantile sample.}
    \label{fig: s vs scores}
\end{figure}

\subsection*{Proof of Proposition \ref{prop: final bound}}

We start by stating and proving Lemmas \ref{prop: diff_elementwise} and \ref{prop: diff upper bound}, which serve as auxiliary to Proposition \ref{prop: final bound}.

\begin{lemma}
    \label{prop: diff_elementwise}  
    Let $\bpi$ be a descendingly sorted softmax vector and $\bpi_T$ the same vector after temperature scaling . Define the difference vector $\d := \bpi - \bpi_T$. Then, there exists an index such that the vector $\d$ is partitioned into two segments, where all elements in one segment have the opposite sign to those in the other segment.
    
\end{lemma}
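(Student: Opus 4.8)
The plan is to reduce the sign pattern of $\d$ to the monotonicity of the pointwise ratio $r_i := \pi_i/\pi_{T,i}$. Since every entry of the temperature-scaled softmax is strictly positive, I would write $d_i = \pi_i - \pi_{T,i} = \pi_{T,i}\,(r_i - 1)$, so that the sign of $d_i$ equals the sign of $r_i - 1$ for every $i$. It therefore suffices to show that the sequence $(r_i)_{i=1}^C$ is monotone in $i$: a monotone sequence crosses the threshold value $1$ at most once, which immediately yields an index $k$ splitting $[C]$ into a prefix on which $r_i - 1$ (hence $d_i$) has one sign and a suffix on which it has the opposite sign.

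To establish monotonicity, I would use the closed form
\[
r_i = \frac{\pi_i}{\pi_{T,i}} = \frac{\exp(z_i)/\sum_j \exp(z_j)}{\exp(z_i/T)/\sum_j \exp(z_j/T)} = \frac{\sum_j \exp(z_j/T)}{\sum_j \exp(z_j)}\,\exp\!\left(z_i\left(1-\tfrac{1}{T}\right)\right).
\]
The leading factor is independent of $i$, so the entire $i$-dependence is carried by $\exp\!\big(z_i(1-\tfrac{1}{T})\big)$. Because $\z$ is sorted descendingly, when $T>1$ the coefficient $1-\tfrac{1}{T}$ is positive and $r_i$ is nonincreasing in $i$, whereas when $0<T<1$ it is negative and $r_i$ is nondecreasing in $i$. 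Equivalently, this is precisely the auxiliary lemma in Theorem~\ref{thm:universal_score_decrease_paper_app} specialized to $\tilde{T}=1$, which gives $\exp(z_i)\exp(z_j/T) \geq \exp(z_i/T)\exp(z_j)$ whenever $z_i \geq z_j$, i.e.\ $r_i \geq r_j$ for $T>1$. In either case $(r_i - 1)$ is a monotone sequence and thus changes sign at most once, which is exactly the claimed partition.

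Finally, I would record two consistency observations. First, $\sum_{i=1}^C d_i = \sum_i \pi_i - \sum_i \pi_{T,i} = 1 - 1 = 0$, so unless $\d = \0$ the vector must contain entries of both signs; this confirms the partition is into two genuinely nonempty segments of opposite sign rather than a degenerate one-sign case, and that $\d = \0$ forces $z_1 = \cdots = z_C$ (or $T=1$). I do not expect a serious obstacle here: the single point needing care is the treatment of ties among the logits and of boundary entries where $r_i = 1$ exactly, but since the lemma only asserts a \emph{weak} sign partition (zeros may sit on the boundary), choosing the crossing index $k$ as the last $i$ with $r_i \geq 1$ (for $T>1$, and symmetrically $r_i \leq 1$ for $T<1$) absorbs these boundary entries without affecting the argument.
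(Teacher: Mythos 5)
Your proof is correct, and while it rests on the same single-crossing phenomenon as the paper's proof, you reach it by a genuinely different and arguably cleaner route. The paper writes $d_i = A\exp(z_i) - B\exp(z_i/T)$ (with $A,B$ the two inverse partition sums), passes to the continuous function $d(x) = A e^{x} - B e^{x/T}$, observes that it has a unique root $x^\ast = \frac{T}{T-1}\ln(B/A)$, and then transports the resulting sign pattern back to the sorted discrete logits, invoking the auxiliary lemma (with $\tilde{T}=1$) to pin down the sign of $d_1$ in each temperature branch. You instead factor $d_i = \pi_{T,i}(r_i - 1)$ and note that $r_i$ is a constant (in $i$) multiple of $\exp\left(z_i\left(1-\tfrac{1}{T}\right)\right)$, hence monotone along the sorted logits; a monotone sequence crosses the threshold $1$ at most once, which is exactly the claimed partition. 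This avoids the discrete-to-continuous detour and the root computation entirely, is self-contained (the auxiliary lemma is not needed, only mentioned as equivalent), and makes transparent that the whole argument hinges on the normalization constants being shared across indices. Your closing observation that $\sum_i d_i = 0$ forces both segments to be nonempty unless $\d = \0$ (which in turn forces $T=1$ or $z_1=\cdots=z_C$) is a small strengthening that the paper's proof does not record, and your choice of the crossing index as the last $i$ with $r_i \geq 1$ (resp.\ $r_i \leq 1$) cleanly absorbs ties and exact zeros, which is the only point of looseness in the lemma's phrasing.
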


\begin{proof}

    First, let us formulate this proposition:\\
    Let $\z \in \mathbb{R}^C$ such that $z_1 \geq z_2 \geq \dots \geq z_C$. Consider the following function difference:
    \[
    d(\z;T,i) = 
        \frac{\exp(z_{i})}{\sum_{j=1}^{C}\exp(z_{j})} - \frac{\exp(z_{i}/T)}{\sum_{j=1}^{C}\exp(z_{j}/T)} =  \pi_{i} - \pi_{T,i} 
    \]
    The following holds:

    \begin{equation*}
    \left\{
    \begin{aligned}
        &\text{If} \quad 0 < T < 1 \quad \text{and} \quad \exists i \in [C] \quad \text{s.t.} \quad d(\z; T, i) > 0: \quad \forall k > i \quad d(\z; T, k) > 0 \\
        &\text{If} \quad T > 1 \quad \text{and} \quad \exists i \in [C] \quad \text{s.t.} \quad d(\z; T, i) < 0: \quad \forall k > i \quad d(\z; T, k) < 0
    \end{aligned}
    \right.
    \end{equation*}

    Denote $A := \frac{1}{\sum_{j=1}^{C}\exp(z_{j})}$ and $B := \frac{1}{\sum_{j=1}^{C}\exp(z_{j}/T)}$. Notice that $\z$ and $T$ are constants and only $i$ is changing in the theorem condition. Let us rewrite $d(\z;T,i)$:
    \[
    d(\z;T,i) = d(z_i) = A\exp{(z_i)} - B\exp{(z_i/T)}
    \]
    Notice that $d(z_i)$ is a discrete function because $z_i \in \z$. To understand when is it possible that $d(z_i) = 0$ we convert the function to continuous, i.e., $z_i$ can be any value. The continuous function (that gets single variable instead of vector) is as follows:
    \[
    d(x) = A\exp{(x)} - B\exp{(x/T)}
    \]
    There is a {\em single} solution of the equation $d(x) = 0$ which is $x = \frac{T}{T-1} \ln(B/A) := x^* $.
    We will now divide the rest of the proof into two temperature branches:
    \begin{itemize}
        \item \textbf{T $>$ 1 branch:}     Assume $x_1 < x^*, d(x_1) < 0$ then, $\forall x < x_1$ we have $d(x) < 0$.\\
        In our discrete case, we know that $d(z_1) > 0$ (substitute $L = 1$ in Theorem \ref{thm:universal_score_decrease_paper_app}), and that $d(z_i) < 0$ (therefore $z_i < z^*$ in our analogy to the continuous case), consequently $\forall z_k < z_i, \; d(z_k) < 0$.
        Overall, because $k > i$, $z_k < z_i$ and we get that $d(z; T, k) < 0$.
        \item \textbf{0 $<$ T $<$ 1 branch:}
        Assume $x_2 > x^*, d(x_2) > 0$ then, $\forall x > x_2$ we have $d(x) > 0$.\\
        In our discrete case, we know that $d(z_1) < 0$ (substitute $L = 1$ in Theorem \ref{thm:universal_score_decrease_paper_app}), and that $d(z_i) > 0$ (therefore $z_i < z^*$ in our analogy to the continuous case), consequently $\forall z_k < z_i, \; d(z_k) > 0$.
        Overall, because $k > i$, $z_k < z_i$ and we get that $d(z; T, k) > 0$.
    \end{itemize}

\end{proof}

\begin{lemma}
    \label{prop: diff upper bound}
        Let $\z \in \mathbb{R}^C$ such that $z_1 \geq z_2 \geq \dots \geq z_C$. Consider the following functions:
        \[
         d(\z;T,i) = 
        \frac{\exp(z_{i})}{\sum_{j=1}^{C}\exp(z_{j})} - \frac{\exp(z_{i}/T)}{\sum_{j=1}^{C}\exp(z_{j}/T)}
        \]
    \[
    g(\z;T,M) = 
        \frac{\sum_{i=1}^{M}\exp(z_{i})}{\sum_{j=1}^{C}\exp(z_{j})} - \frac{\sum_{i=1}^{M}\exp(z_{i}/T)}{\sum_{j=1}^{C}\exp(z_{j}/T)} = \sum_{i=1}^{M} d(\z;T,i) 
    \]
    Then, if we denote by $s_{T>1} = \max{\{i \in [C] : d(\z;T,j) > 0\}}$ the last index where $d(\z,T,\cdot)$ is positive and similarly by $s_{T<1} = \max{\{i \in [C] : d(\z;T,j) < 0\}}$ the last index where $d(\z,T,\cdot)$ is negative, the following holds:
    \begin{equation*}
    \left\{
    \begin{aligned}
        &\text{If} \quad 0 < T < 1: \quad \forall M, L^q \in [C] \quad \left| g(\z;T,M) - g(
        z;T,L^q) \right| < \left| \sum_{i=1}^{s_{T<1}} d(z^q;T,i) \right| \\
        &\text{If} \quad T > 1: \quad \forall M, L^q \in [C] \quad \left| g(\z;T,M) - g(\z;T,L^q) \right| < \left| \sum_{i=1}^{s_{T>1}} d(z^q;T,i) \right|
    \end{aligned}
    \right.
    \end{equation*}
\end{lemma}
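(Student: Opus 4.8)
The plan is to reduce the two-index quantity $g(\z;T,M)-g(\z;T,L^q)$ to a statement about the \emph{range} of the single-index partial-sum map $M\mapsto g(\z;T,M)=\sum_{i=1}^M d(\z;T,i)$, and then to pin down that range from the sign structure of the increments $d_i:=d(\z;T,i)$ together with the boundary value $g(\z;T,C)=0$. I would carry out the argument for the branch $T>1$; the branch $0<T<1$ is entirely symmetric, with all signs reversed.

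First I would record two structural facts about the increments. By Theorem~\ref{thm:universal_score_decrease_paper_app} applied with $L=1$ and $\tilde T=1$, softening attenuates the top bin, so $d_1=\hat\pi_1-\hat\pi_{T,1}>0$; in particular $\{i:d_i>0\}\neq\emptyset$ and $s_{T>1}$ is well defined. By Lemma~\ref{prop: diff_elementwise}, for $T>1$ the sequence $(d_i)$ changes sign at most once: once it turns nonpositive it stays nonpositive. Combined with $d_1>0$, this yields the clean partition $d_i>0$ for $i\le s_{T>1}$ and $d_i\le 0$ for $i> s_{T>1}$.

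Next I would turn this partition into a two-sided envelope $0\le g(\z;T,M)\le g(\z;T,s_{T>1})$ valid for every $M\in[C]$. The lower bound $g(\z;T,M)\ge 0$ is exactly Theorem~\ref{thm:universal_score_decrease_paper_app} again (it states $\sum_{j=1}^{M}\hat\pi_{\tilde T,j}\ge\sum_{j=1}^{M}\hat\pi_{T,j}$ for $\tilde T=1<T$). For the upper bound I would note that, by the sign partition, $g(\z;T,\cdot)$ accumulates only positive increments on $\{1,\dots,s_{T>1}\}$ and only nonpositive increments afterwards; hence it is nondecreasing up to $s_{T>1}$ and nonincreasing thereafter, so its maximum over $M\in[C]$ is attained at $s_{T>1}$, namely $g(\z;T,s_{T>1})=\sum_{i=1}^{s_{T>1}}d_i>0$. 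The envelope then gives the claim directly: for all $M,L^q\in[C]$,
\[
\bigl|g(\z;T,M)-g(\z;T,L^q)\bigr|\;\le\; g(\z;T,s_{T>1})-0 \;=\;\Bigl|\sum\nolimits_{i=1}^{s_{T>1}} d(\z;T,i)\Bigr|.
\]

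The main obstacle is the sign-partition step: the whole envelope collapses unless $(d_i)$ changes sign exactly once, which is why I would lean on Lemma~\ref{prop: diff_elementwise} (itself derived from the single zero-crossing of the continuous map $x\mapsto A\exp(x)-B\exp(x/T)$). I would also flag the strict-versus-nonstrict point: the envelope is tight, being attained at $(M,L^q)=(s_{T>1},C)$ since $g(\z;T,C)=1-1=0$, so the natural bound is $\le$. The strict inequality advertised in the statement is recovered one layer up in Proposition~\ref{prop: final bound}, where the hypothesis $s_{T>1}=1$ forces the right-hand side to equal $d_1=\hat\pi_1-\hat\pi_{T,1}$, which is \emph{strictly} below the explicit exponential bound because $\hat\pi_1<1$.
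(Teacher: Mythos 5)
Your proof is correct, and it rests on the same two pillars as the paper's own argument: Lemma~\ref{prop: diff_elementwise} (the increments $d(\z;T,\cdot)$ change sign only once) and the fact that the increments sum to zero, $g(\z;T,C)=0$. The execution differs in a useful way. The paper writes $g(\z;T,M)-g(\z;T,L^q)=\sum_{i=L^q+1}^{M}d(\z;T,i)$ and runs a three-case analysis on where the window $[L^q+1,M]$ sits relative to $s$, then combines the cases via the zero-total-sum identity; you instead package the same sign structure into the unimodality envelope $0\le g(\z;T,M)\le g(\z;T,s_{T>1})$ (lower bound from Theorem~\ref{thm:universal_score_decrease_paper_app}, upper bound from monotone-up-then-down), which yields the bound in one line and makes its tightness transparent. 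That transparency pays off: you are right that equality is attained at $(M,L^q)=(s_{T>1},C)$, so only $\le$ can be proved. This is a genuine (if minor) defect of the statement as written, and it is shared by the paper's proof, whose case $L^q+1\ge s$ produces an inequality saturated by exactly this pair, so that its concluding strict $<$ is an overstatement of what was derived. Your closing observation is also correct and matches how the lemma is consumed: Proposition~\ref{prop: final bound} invokes it with $s=1$ and then bounds $d(\z;T,1)$ \emph{strictly} by the exponential expression (since $\exp(z_1)/\sum_j\exp(z_j)<1$), so the downstream strict inequality survives even with $\le$ here. One cosmetic caution: the "clean partition" should be stated as $d(\z;T,i)\ge 0$ for $i\le s_{T>1}$ (zeros before $s$ are not excluded by Lemma~\ref{prop: diff_elementwise}), but this weaker form is all your envelope argument needs, so nothing breaks.
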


\begin{proof}

Let us begin by rewriting $\left| g(\z;T,M) - g(\z;T,L^q) \right|$, assume $M > L^q$ without loss of generality:
\begin{align*}
    \left| g(\z;T,M) - g(\z;T,L^q) \right| &= \left| \frac{\sum_{i=1}^{M}\exp(z_{i})}{\sum_{j=1}^{C}\exp(z_{j})} - \frac{\sum_{i=1}^{M}\exp(z_{i}/T)}{\sum_{j=1}^{C}\exp(z_{j}/T)} - \left( \frac{\sum_{i=1}^{L^q}\exp(z_{i})}{\sum_{j=1}^{C}\exp(z_{j})} - \frac{\sum_{i=1}^{L^q}\exp(z_{i}/T)}{\sum_{j=1}^{C}\exp(z_{j}/T)} \right) \right|\\
    &= \left| \sum_{i=1}^{M} d(\z;T,i) - \sum_{i=1}^{L^q} d(\z;T,i) \right| = \left| \sum_{i=L^q+1}^M d(\z;T,i) \right|
\end{align*}

Henceforth we denote $s := s_{T>1}$ or $s := s_{T<1}$ depending on the temperature. By Lemma \ref{prop: diff_elementwise}, $\forall i < s$ we have $d(\z;T,i) > 0$ and $\forall i > s$ we have $d(\z;T,i) < 0$, therefore we can divide the analysis of $\left| \sum_{i=L^q+1}^M d(\z;T,i) \right|$ into 3 cases: 
\begin{itemize}
    \item $L^q+1 \geq s$ - in this case, we sum only on differences with the same sign, and we can create an upper bound for this case by summing all the differences with the same sign:
\[
\left| \sum_{i=L^q+1}^M d(\z;T,i) \right| \leq \left| \sum_{i=s+1}^C d(\z;T,i) \right|
\]

    \item  $M \leq s$ - in this case like previous case, we sum only on differences with the same sign, and we can create an upper bound for this case by summing all the differences with the same sign, note that in any case the starting index of the summation is 2 or higher ($L^q + 1 \geq 2$), therefore we can exclude the first difference:
\[
\left| \sum_{i=L^q+1}^M d(\z;T,i) \right| \leq \left| \sum_{i=2}^s d(\z;T,i) \right|
\]

    \item $L^q + 1 < s$ and $M > s$ - in this case, we sum both on positive and negative differences, and we can create an upper bound for this case by taking the maximum between previous cases bounds:
\[
\left| \sum_{i=L^q+1}^M d(\z;T,i) \right| \leq \max{\left\{\left| \sum_{i=s+1}^C d(\z;T,i) \right|, \left| \sum_{i=2}^s d(\z;T,i) \right| \right\}}
\]
\end{itemize}

Overall , we can write the upper bound of  $\left| \sum_{i=L^q+1}^M d(\z;T,i) \right|$ as in case 3:
\[
\left| \sum_{i=L^q+1}^M d(\z;T,i) \right| \leq \max{\left\{\left| \sum_{i=s+1}^C d(\z;T,i) \right|, \left| \sum_{i=2}^s d(\z;T,i) \right| \right\}}
\]
Note that the summation of all differences is zero, I.e. $\sum_{i=1}^C d(\z;T,i) = 0$, therefore,\\ $\sum_{i=1}^s d(\z;T,i) = - \sum_{i=s+1}^C d(\z;T,i)$ and $\left| \sum_{i=1}^s d(\z;T,i) \right| = \left| \sum_{i=s+1}^C d(\z;T,i) \right|$. because both summations contain differences with the same sign, subtracting elements from them decrease them, therefore, 
\begin{gather*}
    \left| \sum_{i=2}^s d(\z;T,i) \right| < \left| \sum_{i=s+1}^C d(\z;T,i) \right| \\
    \Downarrow\\
    \max{\left\{\left| \sum_{i=s+1}^C d(\z;T,i) \right|, \left| \sum_{i=2}^s d(\z;T,i) \right| \right\}} = \left| \sum_{i=s+1}^C d(\z;T,i) \right| = \left| \sum_{i=1}^s d(\z;T,i) \right|
\end{gather*}
And overall we get:
\[
\forall M, L^q \in [C] \quad \left| d(\z;T,M) - g(\z;T,L^q) \right| < \left| \sum_{i=1}^{s} d(\z;T,i) \right|
\]
\end{proof}

\vspace{5mm}

\textbf{We now turn to proving Proposition \ref{prop: final bound}.}

\begin{proof}
    \textbf{ T $>$ 1 branch:}\\
    Substituting $s = 1$ in Lemma \ref{prop: diff upper bound} we get:
    \[
    \forall M, L^q \in [C] \quad \left| g(\z;T,M) - g(\z;T,L^q) \right| < \left| \sum_{i=1}^{s=1} d(\z;T,i) \right| = \left| d(\z;T,1) \right| = d(\z;T,1)
    \]
    We can remove the absolute value because $d(\z;T,1) > 0$ for $T>1$.\\
    We continue by bounding $ d(\z;T,1)$:
    \begin{align*}
         d(\z;T,1) &= \frac{\exp(z_{1})}{\sum_{j=1}^{C}\exp(z_{j})} - \frac{\exp(z_{1}/T)}{\sum_{j=1}^{C}\exp(z_{j}/T)} < 1 - \frac{\exp(z_{1}/T)}{\sum_{j=1}^{C}\exp(z_{j}/T)}\\ &= 1 - \frac{\exp(z_{1}/T)}{\exp(z_{1}/T)\left[\sum_{j=2}^{C}\exp(-(z_1 - z_j)/T) + 1\right]} = 1 - \frac{1}{\sum_{j=2}^{C}\exp(-(z_1 - z_j)/T) + 1} \\ &\leq 1 - \frac{1}{(C - 1)\exp(-\Delta z/T) + 1} = \frac{(C - 1)\exp(-\Delta z/T)}{(C - 1)\exp(-\Delta )/T) + 1}
    \end{align*}
    Overall, we get:
    \[
    \forall M, L^q \in [C] \quad \left| g(\z;T,M) - g(\z;T,L^q) \right| < \frac{(C-1)\exp{(-\Delta z/T)}}{(C-1)\exp{(-\Delta z/T)} + 1}
    \]

    \textbf{ 0 $<$ T $<$ 1 branch:}\\
    Substituting $s = 1$ in Lemma \ref{prop: diff upper bound} we get:
    \[
    \forall M, L^q \in [C] \quad \left| g(\z;T,M) - g(\z;T,L^q) \right| < \left| \sum_{i=1}^{s=1} d(\z;T,M) \right| = \left| d(\z;T,1) \right| = -d(\z;T,1)
    \]
    We can remove the absolute value and add minus sign because $d(z;T,1) < 0$ for $T>1$.\\
    We continue by bounding $ -d(\z;T,1)$:
    \begin{align*}
         \left| d(\z;T,1) \right| &= -d(\z;T,1) = \frac{\exp(z_{1}/T)}{\sum_{j=1}^{C}\exp(z_{j}/T)} - \frac{\exp(z_{1})}{\sum_{j=1}^{C}\exp(z_{j})} < 1 - \frac{\exp(z_{1})}{\sum_{j=1}^{C}\exp(z_{j})}\\ &= 1 - \frac{\exp(z_{1})}{\exp(z_{1})\left[\sum_{j=2}^{C}\exp(-(z_1 - z_j)) + 1\right]} = 1 - \frac{1}{\sum_{j=2}^{C}\exp(-(z_1 - z_j)) + 1} \\ &\leq 1 - \frac{1}{(C - 1)\exp(-\Delta z) + 1} = \frac{(C - 1)\exp(-\Delta z)}{(C - 1)\exp(-\Delta z)) + 1}
    \end{align*}
    Overall, we get:
    \[
    \forall M, L^q \in [C] \quad \left| g(\z;T,M) - g(\z;T,L^q) \right| < \frac{(C-1)\exp{(-\Delta z)}}{(C-1)\exp{(-\Delta z)} + 1}
    \]
\end{proof}

}

\newpage

\section{Additional Experimental Details and Results}
\label{app:experiments}

\subsection{Training details}
\label{app:training details}
{For ImageNet models, we utilized pretrained models from the \texttt{TORCHVISION.MODELS} sub-package. For full training details, please refer to the following link:} 

\url{https://github.com/pytorch/vision/tree/8317295c1d272e0ba7b2ce31e3fd2c048235fc73/references/classification}

{For CIFAR-100 and CIFAR-10 models, we use:
Batch size: 128; Epochs: 300; Cross-Entropy loss; Optimizer: SGD; Learning rate: 0.1; Momentum: 0.9; Weight decay: 0.0005.}

\subsection{Experiments compute resources}
\label{Experiments Compute Resources}
{We conducted our experiments using an NVIDIA GeForce GTX 1080 Ti. Given the trained models, each experiment runtime is within a range of minutes.}

\subsection{Temperature scaling calibration}
\label{app:calibration}

As mentioned in Section~\ref{sec:TS}, two popular calibration objectives are the Negative Log-Likelihood (NLL) \citep{hastie2005elements} and the Expected Calibration Error (ECE) \citep{naeini2015obtaining}.

NLL, given by $\mathcal{L} = - \sum\nolimits_{i=1}^n \ln(\tilde{\pi}_{y_i}(\mathbf{x}_i))$, measures the cross-entropy between the true conditional distribution of data (one-hot vector associated with $y_i$) 
and $\tilde{\bpi}(\x_i)$.

ECE aims to approximate $\mathbb{E} \left [ \left | \mathbb{P}\left(\hat{y}(X) = Y | \tilde{\pi}_{\hat{y}(X)}(X) = p \right) - p \right | \right]$.
Specifically, the confidence range $[0,1]$ is divided into $L$ equally sized bins $\{B_l\}$. Each sample $(\x_i,y_i)$ is assigned to a bin $B_l$ according to $\tilde{\pi}_{y_i}(\mathbf{x}_i)$.
The objective is given by
$\text{ECE} = \sum\nolimits_{l=1}^L \frac{|B_l|}{n} \left | \mathrm{acc}(B_l) - \mathrm{conf}(B_l) \right |$,
where $\mathrm{acc}(B_l) = \frac{1}{|B_l|}\sum\nolimits_{i \in B_l} \mathbbm{1}\{\hat{y}(\x_i)=y_i\}$ and $\mathrm{conf}(B_l)=\frac{1}{|B_l|}\sum\nolimits_{i \in B_l} \hat{\pi}_{y_i}(\x_i)$.
Here, $\mathbbm{1}(\cdot)$ denotes the indicator function. 

\subsubsection{ECE vs NLL minimization}
{Above, we defined the two common minimization objectives for the TS calibration procedure. Throughout the paper, we employed the ECE objective. Here, we justify this choice by demonstrating the proximity of the optimal calibration temperature $T^*$ for both objectives.}
\label{ECE vs NLL}
\begin{table}[h]
\centering
\caption{Optimal Temperature for NLL and ECE objectives}
\begin{tabular}{c | >{\centering\arraybackslash}p{2cm}  >{\centering\arraybackslash}p{2cm} }
\hline
    Dataset-Model &  $T^*$ - NLL loss & $T^*$ - ECE loss\\
    \hline\hline
    CIFAR-100, ResNet50 & 1.438 & 1.524 \\
    CIFAR-100, DenseNet121 & 1.380 & 1.469 \\
    
    ImageNet, ResNet152 & 1.207 & 1.227 \\
    ImageNet, DenseNet121 & 1.054 & 1.024 \\

    ImageNet, ViT-B/16 & 1.18 & 1.21 \\
    CIFAR-10, ResNet50 & 1.683 & 1.761 \\
    CIFAR-10, ResNet34 & 1.715 & 1.802 \\
    \hline
\end{tabular}
\label{tab: optimal T}
\end{table}

{Using both objectives, we obtain similar optimal calibration temperatures $T^*$, resulting in minor changes to the values in Tables \ref{table: PS sizes} and \ref{table: Coverage Metrics}, presented in Section~\ref{sec:TS_calibration}. Furthermore, in Section~\ref{sec:TS_beyond}, we examine the effect of TS over a range of temperatures, which naturally includes both optimal temperatures.}

\newpage

\subsubsection{Reliability diagrams}
\label{Reliability diagrams appendix}

\tomt{
The {\em reliability diagram} \citep{degroot1983comparison,niculescu2005predicting}
is a graphical depiction of a model before and after calibration. 
The confidence range $[0, 1]$ is divided into bins and the validation samples (not used in the calibration) are assigned to the bins according to $\hat{\pi}_{\hat{y}(\x)}(\x)$. The average accuracy (Top-1) is computed per bin.
In the case of perfect calibration, the diagram should be aligned with the identity function. Any significant deviation from slope of 1 indicates miscalibration.
}

{Below, we present reliability diagrams for dataset-model pairs examined in our study. We divided the confidence range into 10 bins and displayed the accuracy for each bin as a histogram. The red bars represent the calibration error for each bin.}

\begin{figure}[h]
\begin{center}
    \textbf{ImageNet, ResNet152 \qquad CIFAR-100, ResNet50 \qquad CIFAR-10, ResNet50}
\end{center}
    \centering
    \begin{subfigure}
        \centering
        \includegraphics[width=0.22\textwidth]{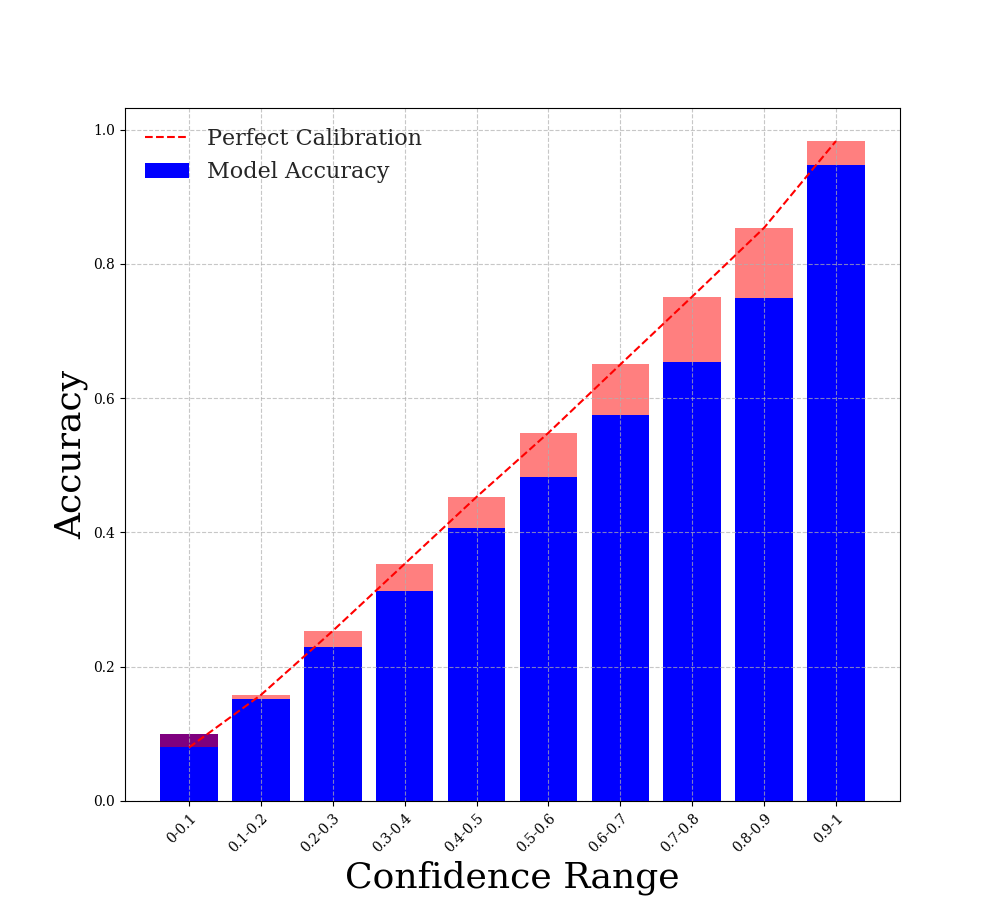}
    \end{subfigure}
    \begin{subfigure}
        \centering
        \includegraphics[width=0.22\textwidth]{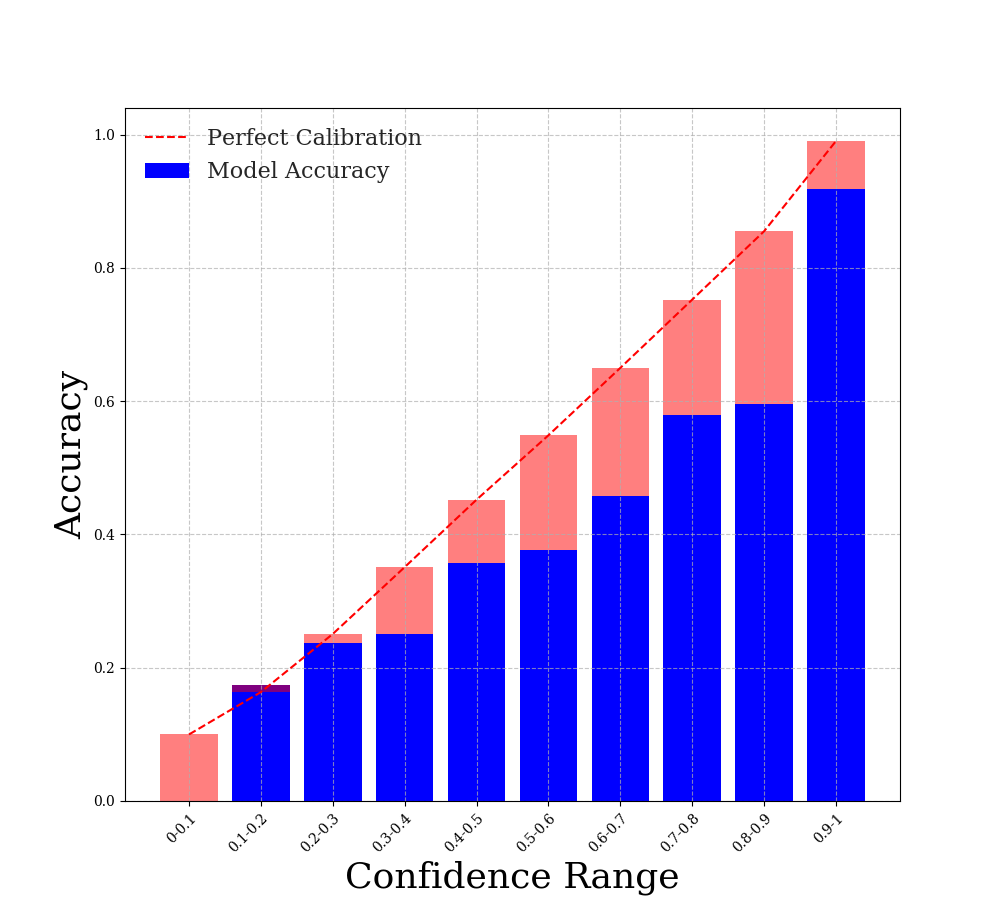}
    \end{subfigure}
    \begin{subfigure}
        \centering
        \includegraphics[width=0.22\textwidth]{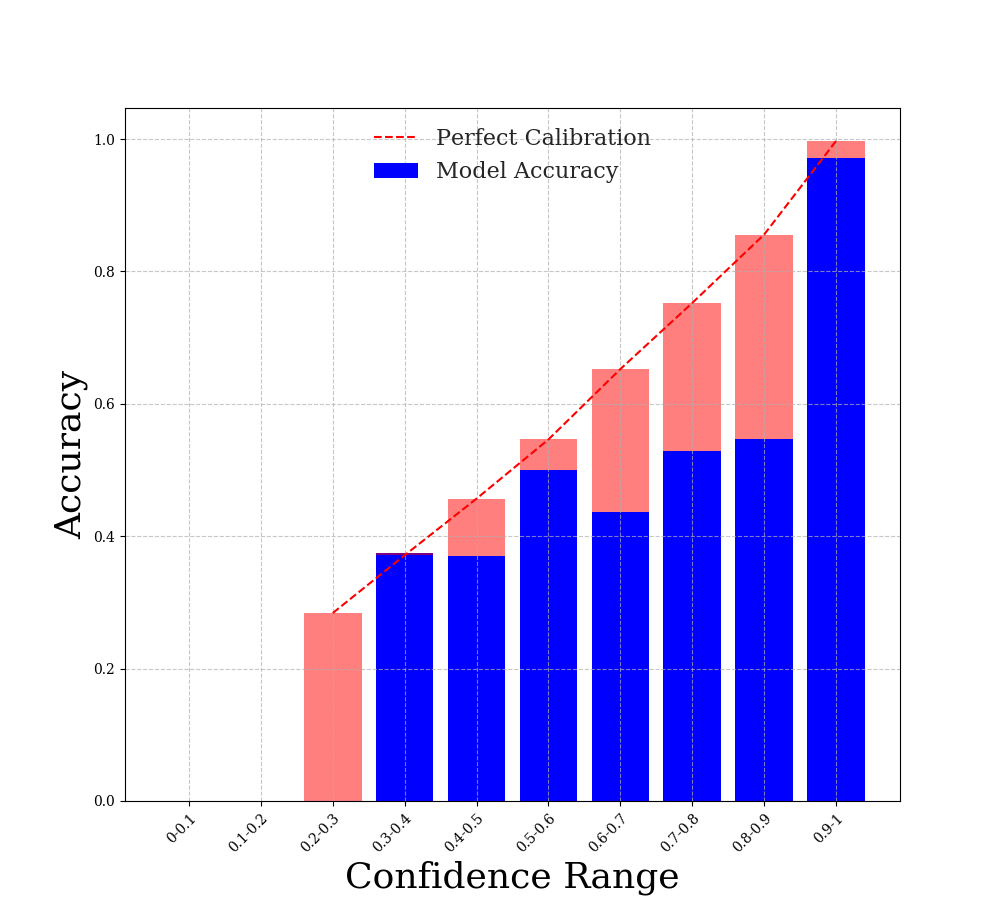}
    \end{subfigure}       

    \centering
    \begin{subfigure}
        \centering
        \includegraphics[width=0.22\textwidth]{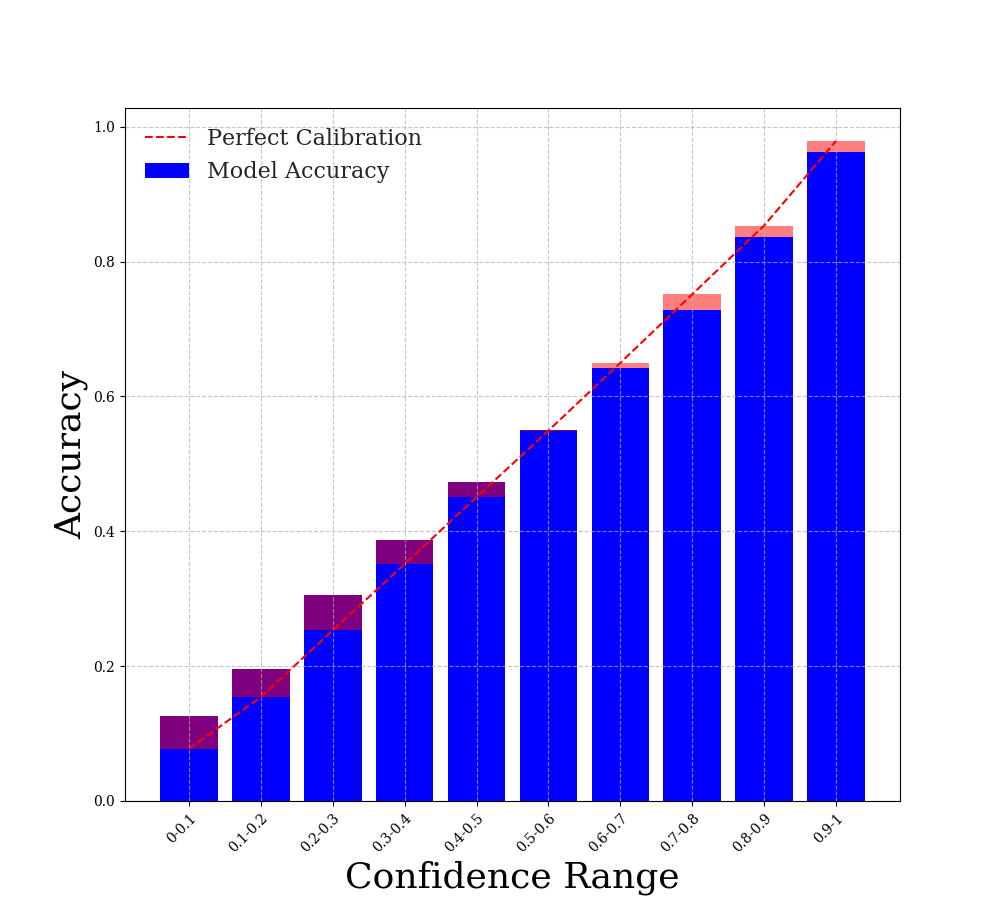}
    \end{subfigure}
    \begin{subfigure}
        \centering
        \includegraphics[width=0.22\textwidth]{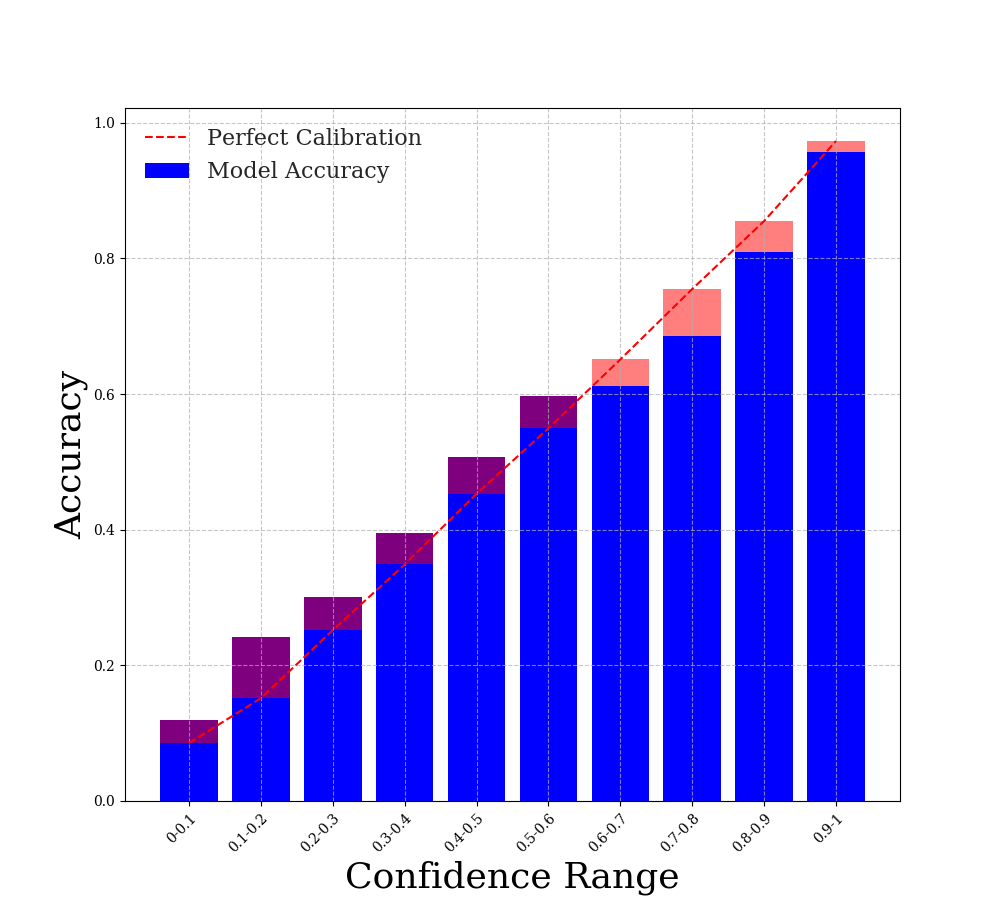}
    \end{subfigure}
    \begin{subfigure}
        \centering
        \includegraphics[width=0.22\textwidth]{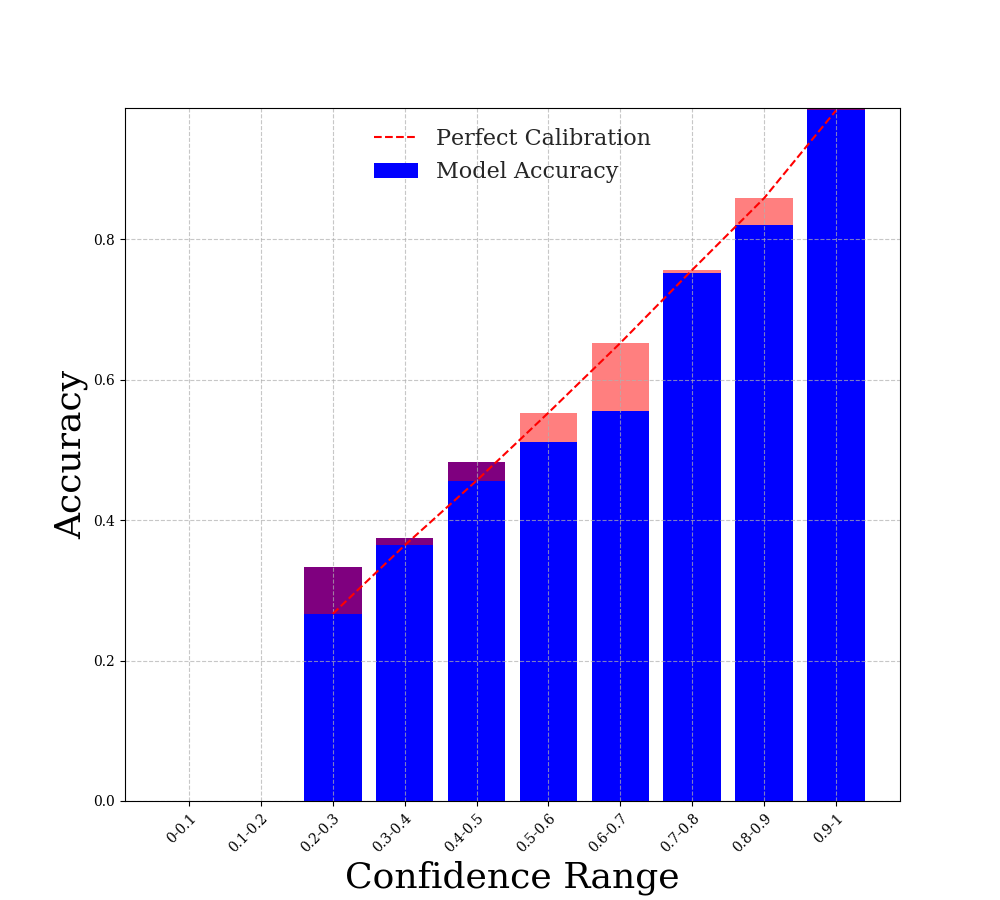}
    \end{subfigure}    
    \caption{Reliability diagrams before (top) and after (bottom) TS calibration with ECE objective.} 

\end{figure}

\vspace{3mm}

\begin{figure}[h]
\begin{center}
    \textbf{ImageNet, DenseNet121 \qquad CIFAR-100, DenseNet121 \qquad CIFAR-10, ResNet34}
\end{center}
    \centering
    \begin{subfigure}
        \centering
        \includegraphics[width=0.22\textwidth]{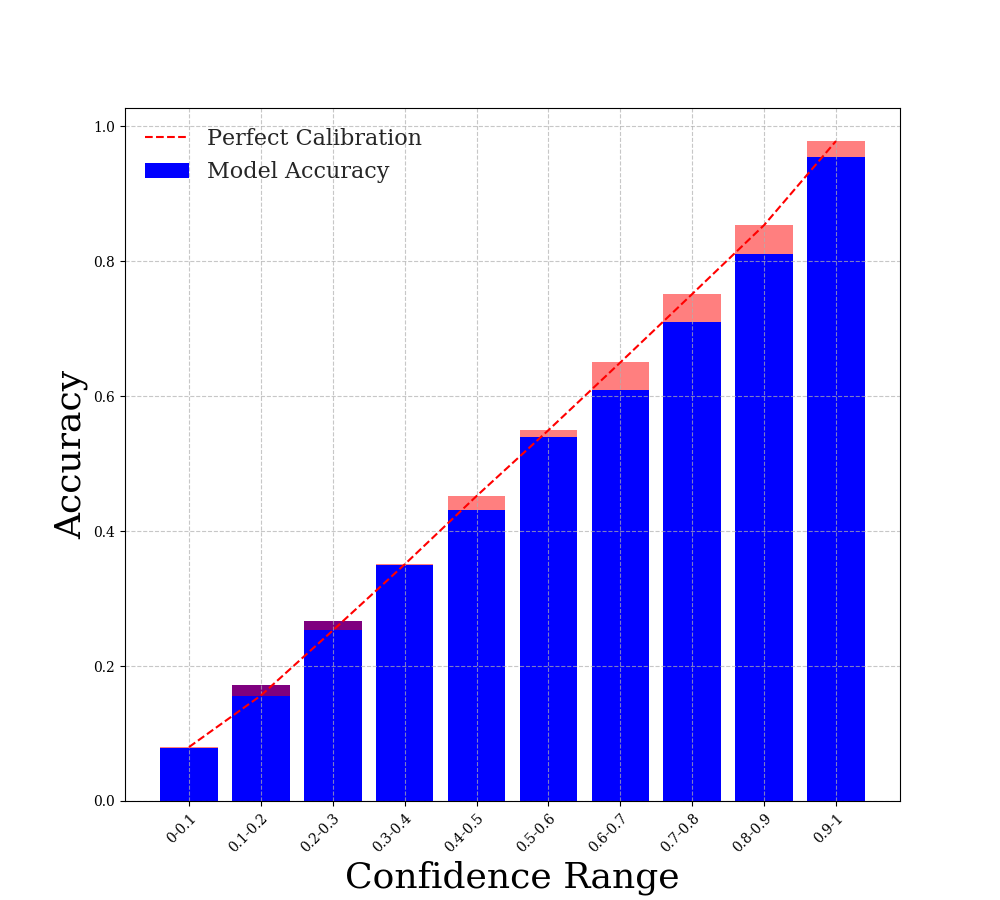}
    \end{subfigure}
    \begin{subfigure}
        \centering
        \includegraphics[width=0.22\textwidth]{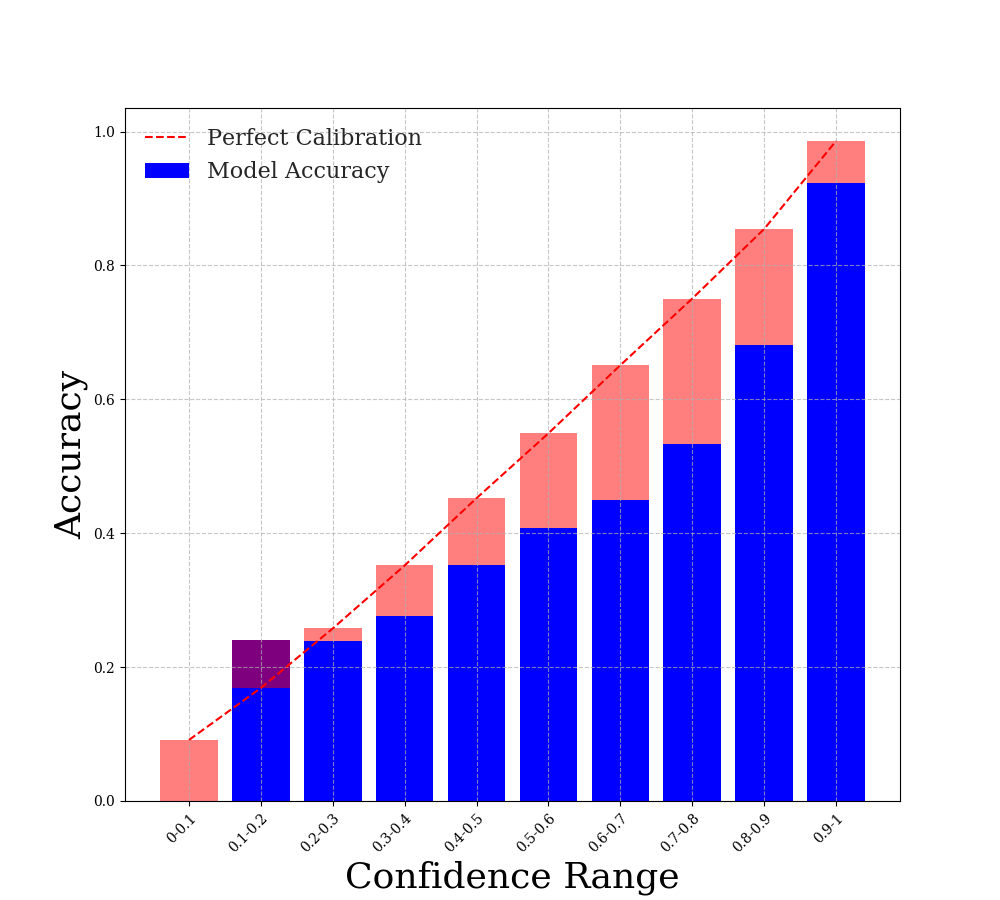}
    \end{subfigure}
    \begin{subfigure}
        \centering
        \includegraphics[width=0.22\textwidth]{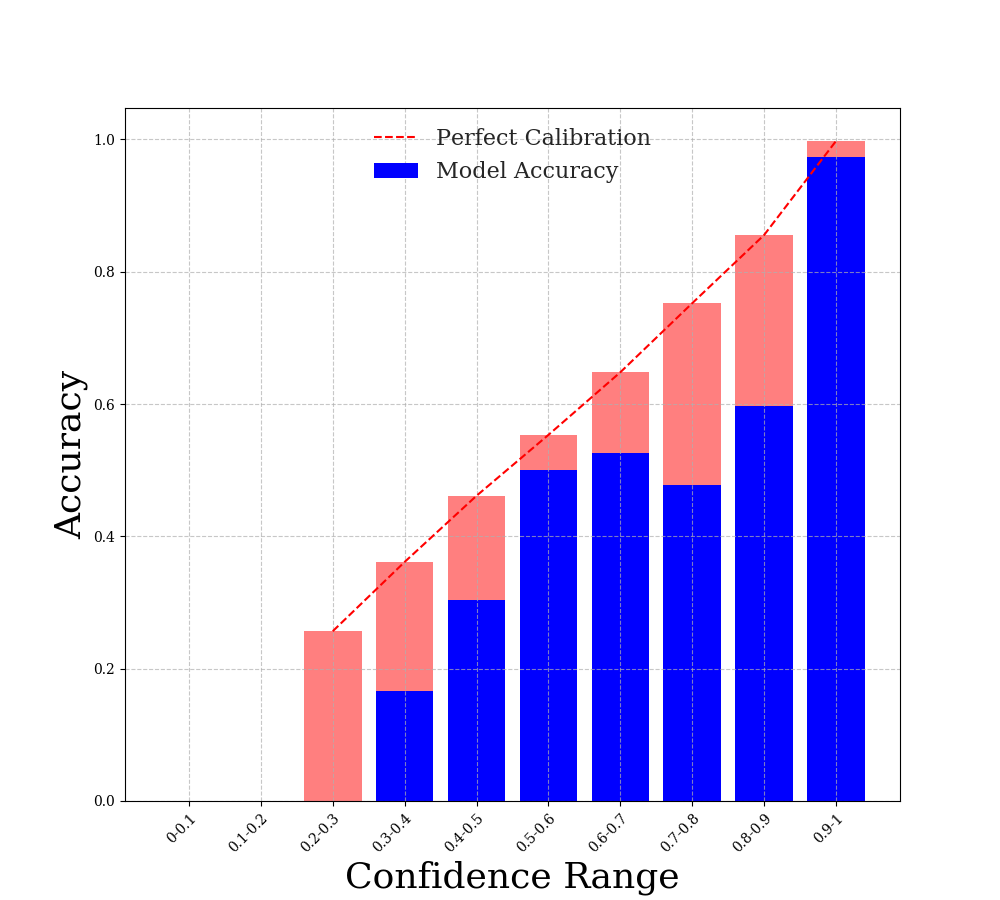}
    \end{subfigure}       

    \centering
    \begin{subfigure}
        \centering
        \includegraphics[width=0.22\textwidth]{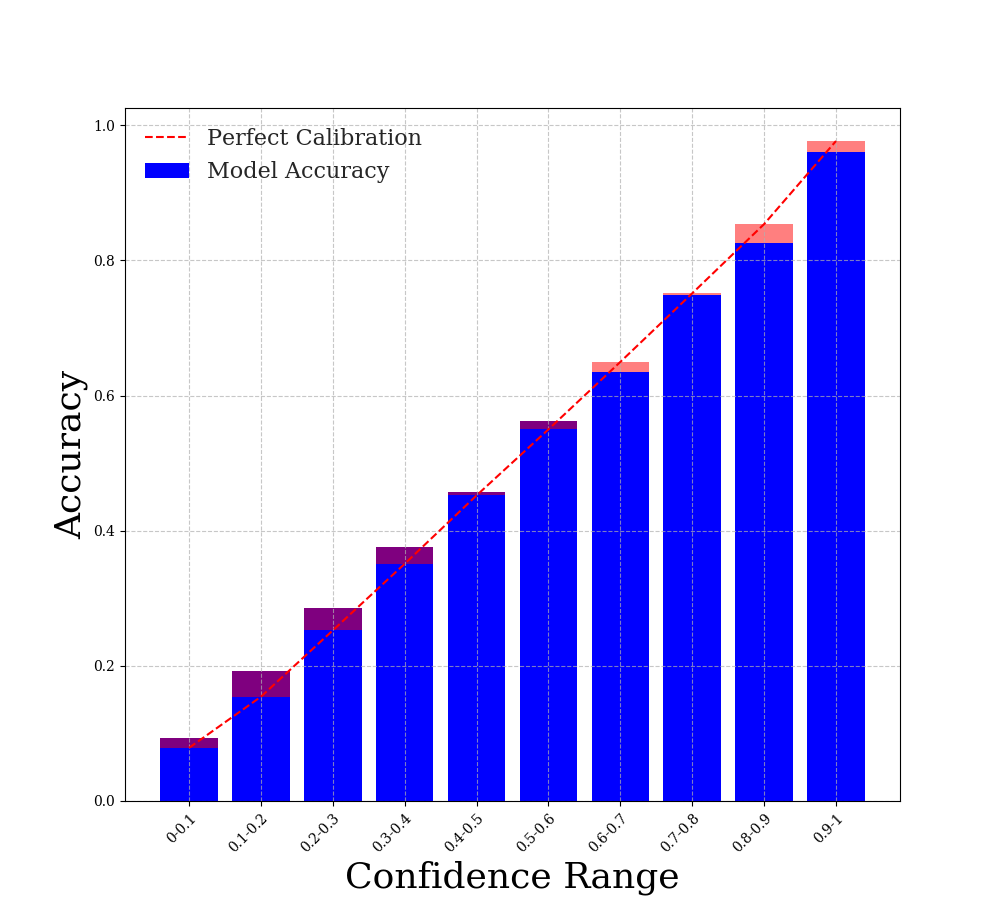}
    \end{subfigure}
    \begin{subfigure}
        \centering
        \includegraphics[width=0.22\textwidth]{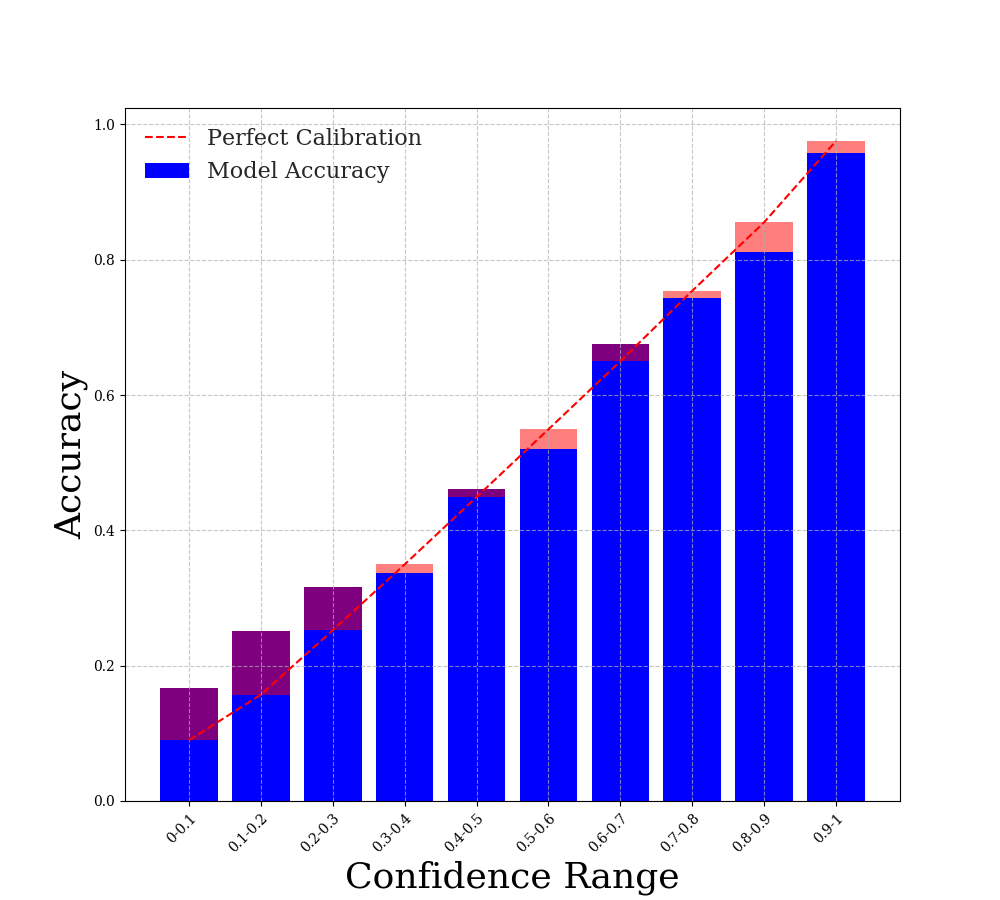}
    \end{subfigure}
    \begin{subfigure}
        \centering
        \includegraphics[width=0.22\textwidth]{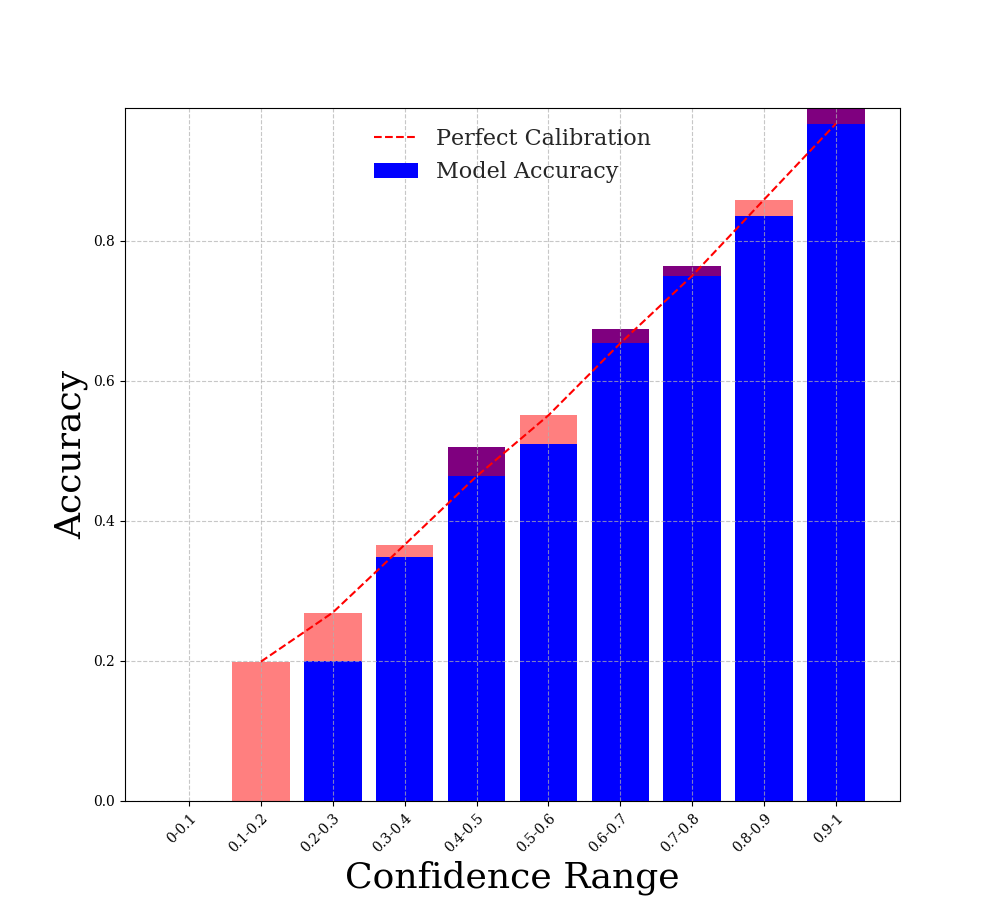}
    \end{subfigure}    
    \caption{Reliability diagrams before (top) and after (bottom) TS calibration with ECE objective.} 

\end{figure}

\newpage

\subsection{The effect of TS calibration on CP methods}
\label{app: additional experiments TS calibration}
{As an extension of Section \ref{sec:TS_calibration}, we provide additional experiments where we examine the effect of TS calibration on CP methods with different settings of hyper-parameters ($\alpha$ and CP set size). The tables below present prediction set sizes and coverage metrics before and after TS calibration for different CP set sizes and an additional CP coverage probability value.}

\begin{table}[h]
\centering
\footnotesize
\caption{\textbf{Prediction Set Size.} AvgSize metric along with $T^*$ and accuracy for dataset-model pairs using LAC, APS, and RAPS algorithms with $\alpha=0.1$, CP set size $5\%$, pre- and post-TS calibration.}

\begin{tabular}{c | c | c  c | c  c  c | c  c  c} %
    \hline
    \multicolumn{1}{c}{} &
    \multicolumn{1}{c}{} &
    \multicolumn{2}{c}{\textbf{Accuracy(\%)}} & \multicolumn{3}{c}{\textbf{AvgSize}} & \multicolumn{3}{c}{\textbf{AvgSize after TS}} \\

    Dataset-Model & $T^*$ & Top-1 &  Top-5 & LAC & APS & RAPS & LAC & APS & RAPS \\ %
    \hline
    
    ImageNet, ResNet152 & 1.227 &  78.3 &  94.0 & 1.94 & 7.24 & 3.20 & 1.95 & 10.3 & 4.35 \\

    ImageNet, DenseNet121 & 1.024 &  74.4 &  91.9 & 2.70 & 10.1 & 4.71 & 2.77 & 11.2 & 4.91 \\

    ImageNet, ViT-B/16 & 1.180 &  83.9 &  97.0 & 2.75 & 10.18 & 2.01 & 2.33 & 19.19 & 2.41 \\

    CIFAR-100, ResNet50 & 1.524 & 80.9 &  95.4 & 1.62 & 5.75 & 2.78 & 1.57 & 9.76 & 4.93 \\

    CIFAR-100, DenseNet121 & 1.469 &  76.1 &  93.5 & 2.10 & 4.30 & 2.99 & 2.08 & 6.61 & 4.38 \\

    CIFAR-10, ResNet50 & 1.761 &  94.6 &  99.7 & 0.92 & 1.05 & 0.95 & 0.91 & 1.13 & 1.01 \\
    
    CIFAR-10, ResNet34 & 1.802 &  95.3 &  99.8 & 0.91 & 1.03 & 0.94 & 0.93 & 1.11 & 1.05 \\

    \hline
\end{tabular}
\end{table}


\begin{table}[h]
\centering
\footnotesize
\caption{\textbf{Prediction Set Size.} AvgSize metric along with $T^*$ and accuracy for dataset-model pairs using LAC, APS, and RAPS algorithms with $\alpha=0.1$, CP set size $20\%$, pre- and post-TS calibration.}

\begin{tabular}{c | c | c  c | c  c  c | c  c  c} %
    \hline
    \multicolumn{1}{c}{} &
    \multicolumn{1}{c}{} &
    \multicolumn{2}{c}{\textbf{Accuracy(\%)}} & \multicolumn{3}{c}{\textbf{AvgSize}} & \multicolumn{3}{c}{\textbf{AvgSize after TS}} \\

    Dataset-Model & $T^*$ & Top-1 &  Top-5 & LAC & APS & RAPS & LAC & APS & RAPS \\ %
    \hline
    
    ImageNet, ResNet152 & 1.227 &  78.3 &  94.0 & 1.95 & 7.34 & 3.30 & 1.92 & 12.5 & 4.40 \\

    ImageNet, DenseNet121 & 1.024 &  74.4 &  91.9 & 2.73 & 13.1 & 4.70 & 2.76 & 13.3 & 4.88 \\

    ImageNet, ViT-B/16 & 1.180 &  83.9 &  97.0 & 2.69 & 10.03 & 1.89 & 2.24 & 19.05 & 2.48 \\

    CIFAR-100, ResNet50 & 1.524 & 80.9 &  95.4 & 1.62 & 5.35 & 2.68 & 1.57 & 9.34 & 4.96 \\

    CIFAR-100, DenseNet121 & 1.469 &  76.1 &  93.5 & 2.13 & 4.36 & 2.95 & 2.06 & 6.81 & 4.37 \\

    CIFAR-10, ResNet50 & 1.761 &  94.6 &  99.7 & 0.91 & 1.04 & 0.98 & 0.91 & 1.13 & 1.05 \\
    
    CIFAR-10, ResNet34 & 1.802 &  95.3 &  99.8 & 0.91 & 1.03 & 0.94 & 0.93 & 1.11 & 1.05 \\

    \hline
\end{tabular}

\end{table}


\begin{table}[h]
\centering
\scriptsize
\caption{\textbf{Coverage Metrics.} MarCovGap and TopCovGap metrics for dataset-model pairs using LAC, APS, and RAPS algorithms with $\alpha=0.1$, CP set size $5\%$, pre- and post-TS calibration.}

\begin{tabular}{c | c c c | c c  c | c  c  c | c  c  c} %
    \hline
    \multicolumn{1}{c}{} &
    \multicolumn{3}{c}{\textbf{MarCovGap(\%)}} &
    \multicolumn{3}{c}{\textbf{MarCovGap TS(\%)}} & \multicolumn{3}{c}{\textbf{TopCovGap(\%)}} & \multicolumn{3}{c}{\textbf{TopCovGap TS(\%)}} \\

    Dataset-Model & LAC & APS & RAPS & LAC & APS & RAPS & LAC & APS & RAPS & LAC & APS & RAPS \\ %
    \hline
    
    ImageNet, ResNet152 & 0 & 0 &  0 & 0 & 0.1 & 0 & 23.5 & 15.7 & 16.9 & 24.1 & 13.6 & 15.0 \\

    ImageNet, DenseNet121 & 0 & 0.1 &  0 & 0 & 0 & 0.1 & 24.9 & 15.7 & 18 & 25.2 & 14.9 & 17.6 \\

    ImageNet, ViT-B/16 & 0 & 0 & 0.1 & 0.1 & 0 & 0 & 24.1 & 14.9 & 14.5 & 24.8 & 12.4 & 12.6 \\

    CIFAR-100, ResNet50 & 0.1 & 0 &  0.1 & 0 & 0.1 & 0 & 13.9 & 11.9 & 10.7 & 12.5 & 8.2 & 7.5 \\

    CIFAR-100, DenseNet121 & 0 & 0 &  0.1 & 0 & 0 & 0.1 & 11.6 & 9.5 & 9.0 & 11.7 & 7.8 & 7.7 \\

    CIFAR-10, ResNet50 & 0 & 0 &  0 & 0 & 0.1 & 0 & 11.1 & 5.0 & 4.8 & 11.2 & 2.4 & 2.6 \\
    
    CIFAR-10, ResNet34 & 0 & 0.1 &  0.1 & 0 & 0 & 0 & 9.5 & 3.0 & 2.8 & 9.1 & 2.2 & 2.2 \\

    \hline
\end{tabular}
\caption{\textbf{Coverage Metrics.} MarCovGap and TopCovGap metrics for dataset-model pairs using LAC, APS, and RAPS algorithms with $\alpha=0.1$, CP set size $20\%$, pre- and post-TS calibration.}

\begin{tabular}{c | c c c | c c  c | c  c  c | c  c  c} %
    \hline
    \multicolumn{1}{c}{} &
    \multicolumn{3}{c}{\textbf{MarCovGap(\%)}} &
    \multicolumn{3}{c}{\textbf{MarCovGap TS(\%)}} & \multicolumn{3}{c}{\textbf{TopCovGap(\%)}} & \multicolumn{3}{c}{\textbf{TopCovGap TS(\%)}} \\

    Dataset-Model & LAC & APS & RAPS & LAC & APS & RAPS & LAC & APS & RAPS & LAC & APS & RAPS \\ %
    \hline
    
    ImageNet, ResNet152 & 0.1 & 0.1 &  0 & 0.1 & 0 & 0 & 23.6 & 16.3 & 17.5 & 23.6 & 13.9 & 15.6 \\

    ImageNet, DenseNet121 & 0 & 0.1 &  0 & 0 & 0 & 0 & 24.9 & 15.7 & 18 & 25.2 & 14.9 & 17.6 \\

    ImageNet, ViT-B/16 & 0 & 0 & 0.1 & 0.1 & 0 & 0 & 23.9 & 15.2 & 14.1 & 24.3 & 12.7 & 12.6 \\

    CIFAR-100, ResNet50 & 0.1 & 0.1 &  0 & 0 & 0.1 & 0 & 11.4 & 9.9 & 10.0 & 13.0 & 7.7 & 8.2 \\

    CIFAR-100, DenseNet121 & 0 & 0 &  0 & 0 & 0 & 0.1 & 11.5 & 9.5 & 9.7 & 12.2 & 7.8 & 8.0 \\

    CIFAR-10, ResNet50 & 0 & 0 &  0 & 0 & 0.1 & 0 & 10.8 & 5.1 & 4.6 & 11.0 & 2.1 & 2.6 \\
    
    CIFAR-10, ResNet34 & 0 & 0 &  0.1 & 0 & 0 & 0 & 9.1 & 3.1 & 2.6 & 9.3 & 2.1 & 2.3 \\

    \hline
\end{tabular}

\end{table}

{We can see from the above tables that the results for different CP sizes are very similar. The goal of the CP set is to be large enough to represent the rest of the data from the same distribution. In our experiments, we see that even 5\% of the validation set is sufficient for this purpose.} %

\begin{table}[h]
\centering
\scriptsize
\caption{\textbf{Prediction Set Size.} AvgSize metric along with $T^*$ and accuracy for dataset-model pairs using LAC, APS, and RAPS algorithms with $\alpha=0.05$, CP set size $10\%$, pre- and post-TS calibration.}
\begin{tabular}{c | c | c  c | c  c  c | c  c  c} %
    \hline
    \multicolumn{1}{c}{} &
    \multicolumn{1}{c}{} &
    \multicolumn{2}{c}{\textbf{Accuracy(\%)}} & \multicolumn{3}{c}{\textbf{AvgSize}} & \multicolumn{3}{c}{\textbf{AvgSize after TS}} \\

    Dataset-Model & $T^*$ & Top-1 &  Top-5 & LAC & APS & RAPS & LAC & APS & RAPS \\ %
    \hline
    
    ImageNet, ResNet152 & 1.227 &  78.3 &  94.0 & 3.28 & 14.9 & 4.10 & 3.22 & 24.1 & 5.1 \\

    ImageNet, DenseNet121 & 1.024 &  74.4 &  91.9 & 3.33 & 20.1 & 5.02 & 3.61 & 22.8 & 5.88 \\

    ImageNet, ViT-B/16 & 1.180 &  83.9 &  97.0 & 2.91 & 22.80 & 4.51 & 3.02 & 39.8 & 5.55 \\

    CIFAR-100, ResNet50 & 1.524 & 80.9 &  95.4 & 3.97 & 11.10 & 3.98 & 2.21 & 16.2 & 6.80 \\

    CIFAR-100, DenseNet121 & 1.469 &  76.1 &  93.5 & 4.89 & 8.81 & 5.01 & 4.23 & 12.16 & 6.11\\

    CIFAR-10, ResNet50 & 1.761 &  94.6 &  99.7 & 1.02 & 1.08 & 1.08 & 1.02 & 1.21 & 1.21 \\
    
    CIFAR-10, ResNet34 & 1.802 &  95.3 &  99.8 & 1.01 & 1.06 & 1.19 & 1.01 & 1.06 & 1.19 \\

    \hline
\end{tabular}
\vspace{2mm}
\centering
\scriptsize
\caption{\textbf{Coverage Metrics.} MarCovGap and TopCovGap metrics for dataset-model pairs using LAC, APS, and RAPS algorithms with $\alpha=0.05$, CP set size $10\%$, pre- and post-TS calibration. }
\begin{tabular}{c | c c c | c c  c | c  c  c | c  c  c} %
    \hline
    \multicolumn{1}{c}{} &
    \multicolumn{3}{c}{\textbf{MarCovGap(\%)}} &
    \multicolumn{3}{c}{\textbf{MarCovGap TS(\%)}} & \multicolumn{3}{c}{\textbf{TopCovGap(\%)}} & \multicolumn{3}{c}{\textbf{TopCovGap TS(\%)}} \\

    Dataset-Model & LAC & APS & RAPS & LAC & APS & RAPS & LAC & APS & RAPS & LAC & APS & RAPS \\ %
    \hline
    
    ImageNet, ResNet152 & 0.1 & 0 &  0 & 0 & 0 & 0 & 16.1 & 11.5 & 14.3 & 16.5 & 10.1 & 12.4 \\

    ImageNet, DenseNet121 & 0 & 0.1 &  0 & 0.1 & 0 & 0 & 15.5 & 12.0 & 15.0 & 16.0 & 11.7 & 14.3 \\

    ImageNet, ViT-B/16  & 0.1 & 0 & 0 & 0 & 0.1 & 0.1 & 14.6 & 11.6 & 11.5 & 15.0 & 9.27 & 9.78 \\

    CIFAR-100, ResNet50 & 0.1 & 0 &  0 & 0 & 0.1 & 0 & 7.51 & 8.81 & 6.82 & 7.28 & 4.9 & 4.48 \\

    CIFAR-100, DenseNet121 & 0 & 0 &  0 & 0 & 0 & 0.1 & 6.72 & 5.91 & 6.50 & 6.50 & 5.41 & 5.41 \\

    CIFAR-10, ResNet50 & 0 & 0 &  0 & 0 & 0.1 & 0 & 6.50 & 4.22 & 4.22 & 7.03 & 2.13 & 1.98 \\
    
    CIFAR-10, ResNet34 & 0 & 0 &  0.1 & 0 & 0 & 0 & 4.12 & 2.71 & 2.73 & 4.17 & 1.27 & 1.29 \\

    \hline
\end{tabular}

\end{table}

{The increased coverage probability is reflected in both prediction set sizes and the conditional coverage metric. We observe an increase in prediction set sizes compared to Table \ref{table: PS sizes}, which is expected due to the stricter coverage probability requirement. Note that the tendency for prediction set sizes to increase with $T$ remains. Regarding the coverage metric TopCovGap, we see an improvement (lower values), which can be explained by the increase in prediction set sizes. Here, the tendency for the metrics to improve as $T$ increases also remains.}

\subsubsection{``Microscopic'' analysis}
\label{app: micro}

In addition to the tables, in order to verify that the increase in the mean set size for APS and RAPS is not caused by a small number of extreme outliers, 
we ``microscopically'' analyze the change per sample.
Specifically, for each sample in the validation set, we compare the prediction set size after the CP procedure with and without the initial TS calibration (i.e., set size with TS calibration minus set size without).
Sorting the differences in a descending order yields a staircase-shaped curve.
The smoothed version of this curve, which is obtained after averaging over the 100 trials, is presented in Figure~\ref{fig:microscope_app} for several dataset-model pairs, both for APS and for RAPS.

\tomt{
For the discussion, let us focus on the CIFAR-100-ResNet50 dataset-model pair in Figure \ref{fig:microscope_app}.
For this pair, we see that approximately one third of the samples experience a negative impact on the
prediction set size due to the TS calibration. For about half of the samples, there is no change in set size. Only the remaining small minority of samples experience improvement but to a much lesser extent than the harm observed for others. Interestingly, the
existence of samples (though few) where the TS procedure causes a decrease in set size indicates that we cannot make a universal (uniform) statement about the impact of TS on
the set size of arbitrary sample, but rather consider a typical/average case.}

\begin{figure}[h]
\begin{center}
    \textbf{\small \qquad \quad ImageNet, ResNet152 \quad ImageNet, DenseNet121 \quad CIFAR-100, ResNet50 \quad CIFAR-100, DenseNet121}
    \vspace{-1mm}
\end{center}
    \centering
    \hspace{-3cm}
    \begin{subfigure}
        \centering
        \includegraphics[width=0.2\textwidth]{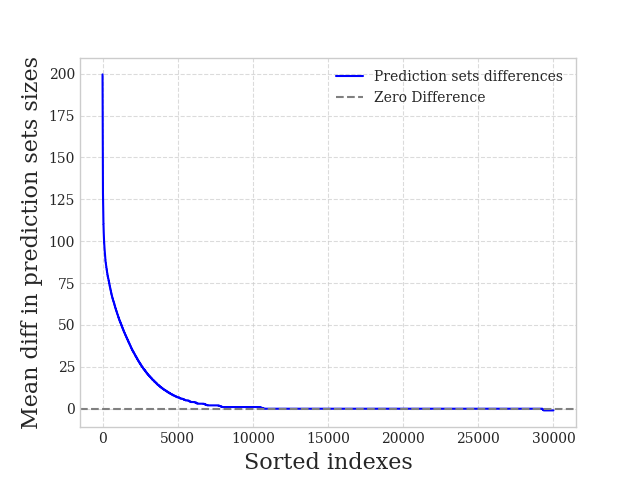}
    \end{subfigure}
    \begin{subfigure}
        \centering
        \includegraphics[width=0.2\textwidth]{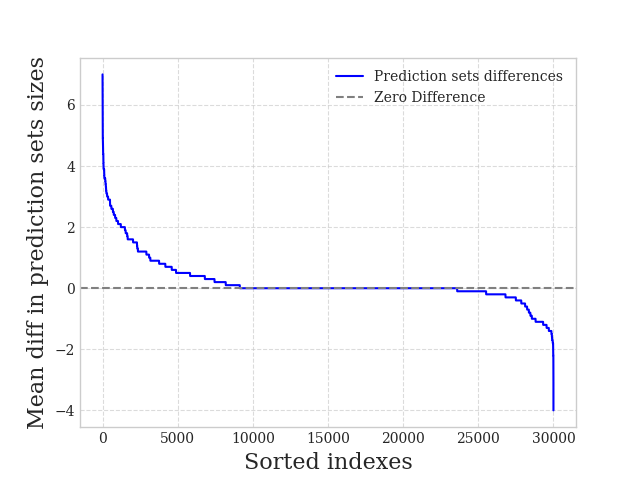}
    \end{subfigure}  
    \begin{subfigure}
        \centering
        \includegraphics[width=0.2\textwidth]{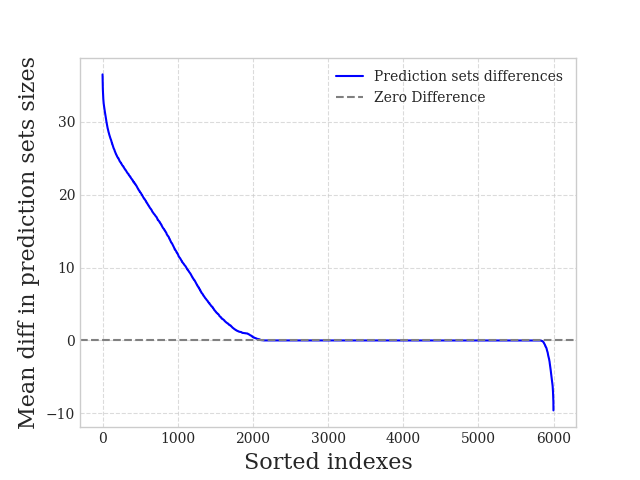}

    \end{subfigure}
    \vspace{-2.7cm}
    \hspace{11.5cm}
    \begin{subfigure}
        \centering
        \includegraphics[width=0.2\textwidth]{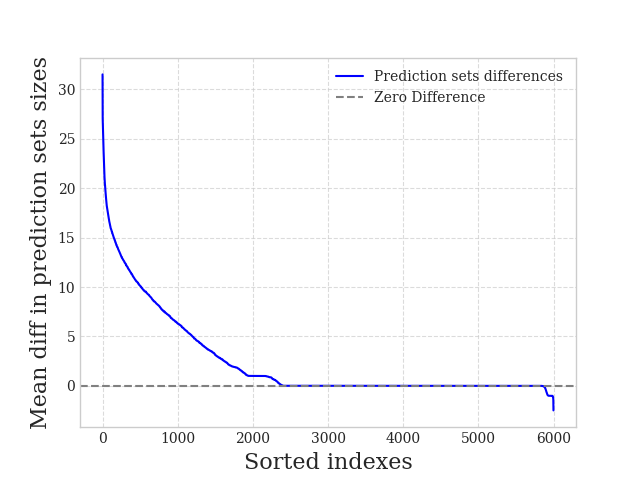}

    \end{subfigure}

\vspace{-3mm}

    \centering
    \hspace{-3cm}
    \begin{subfigure}
        \centering
        \includegraphics[width=0.2\textwidth]{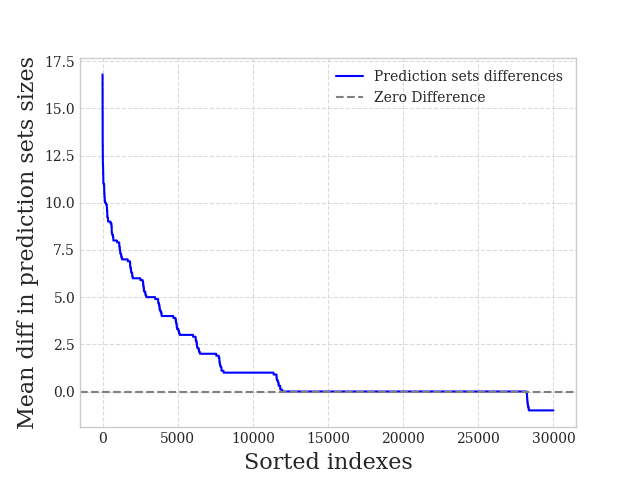}
    \end{subfigure}
    \begin{subfigure}
        \centering
        \includegraphics[width=0.2\textwidth]{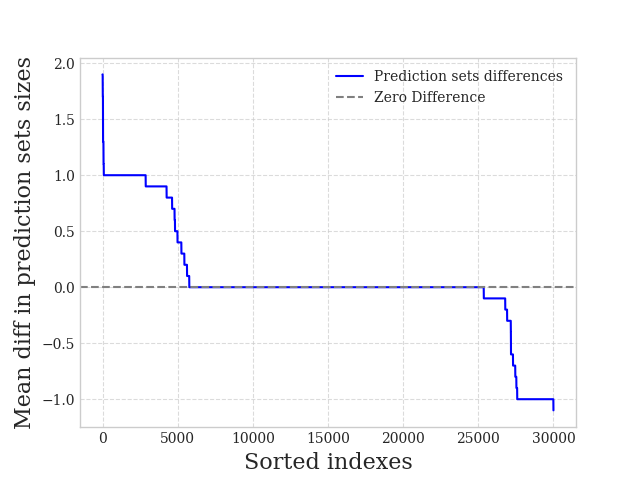}
    \end{subfigure}  
    \begin{subfigure}
        \centering
        \includegraphics[width=0.2\textwidth]{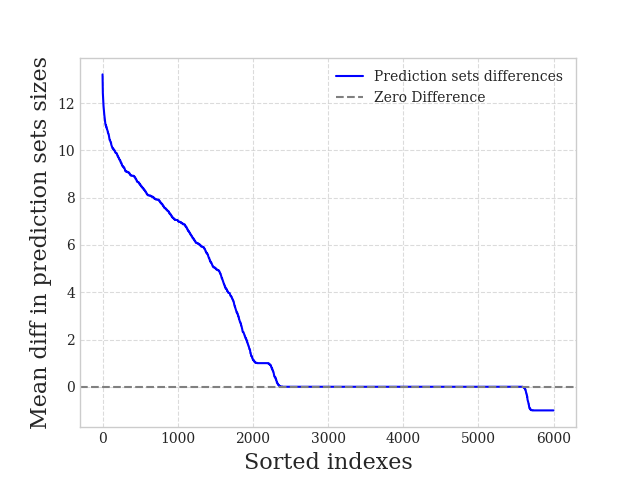}

    \end{subfigure}
    \vspace{-2.7cm}
    \hspace{11.5cm}
    \begin{subfigure}
        \centering
        \includegraphics[width=0.2\textwidth]{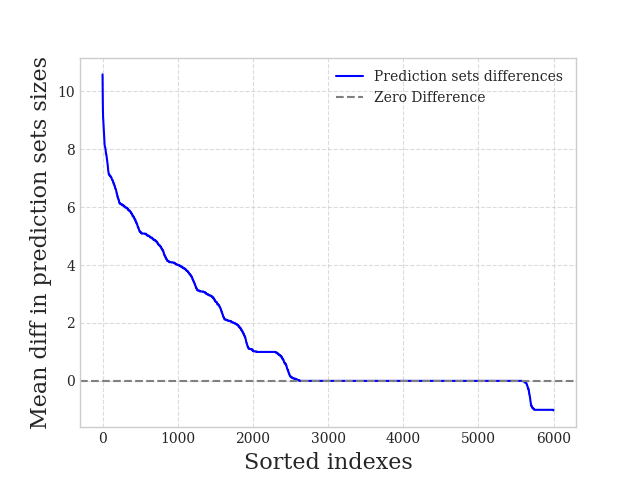}

    \end{subfigure}
\vspace{-2mm}
    \caption{Mean sorted differences in prediction set sizes before and after TS calibration for APS (top) and RAPS (bottom) CP algorithms with $\alpha=0.1$ and CP set size $10\%$.}
    \label{fig:microscope_app}
\end{figure}

\newpage

\subsection{TS beyond calibration}
\label{app: TS beyond}

{As an extension of Section \ref{sec:TS_beyond}, we provide additional experiments with different settings to examine the effect of TS beyond calibration on CP methods. The figures below present prediction set sizes and conditional coverage metrics for a range of temperatures for additional dataset-model pairs, different CP set sizes and an additional CP coverage probability value. Overall, the temperature $T$ allows to trade off between AvgSize and TopCovGap, as discussed in the paper.}

\subsubsection{Prediction set size}

\begin{figure}[h]
\begin{center}
    \textbf{ImageNet, ViT-B/16 \qquad CIFAR-100, ResNet50 \qquad CIFAR-10, ResNet50}
\end{center}
    \centering
    \begin{subfigure}
        \centering
        \includegraphics[width=0.21\textwidth]{ICLR/More_T_5/PS_Size/CP_Set_SIze/10_per/REGULAR/imagenet_vit.png}
    \end{subfigure}
    \begin{subfigure}
        \centering
        \includegraphics[width=0.21\textwidth]{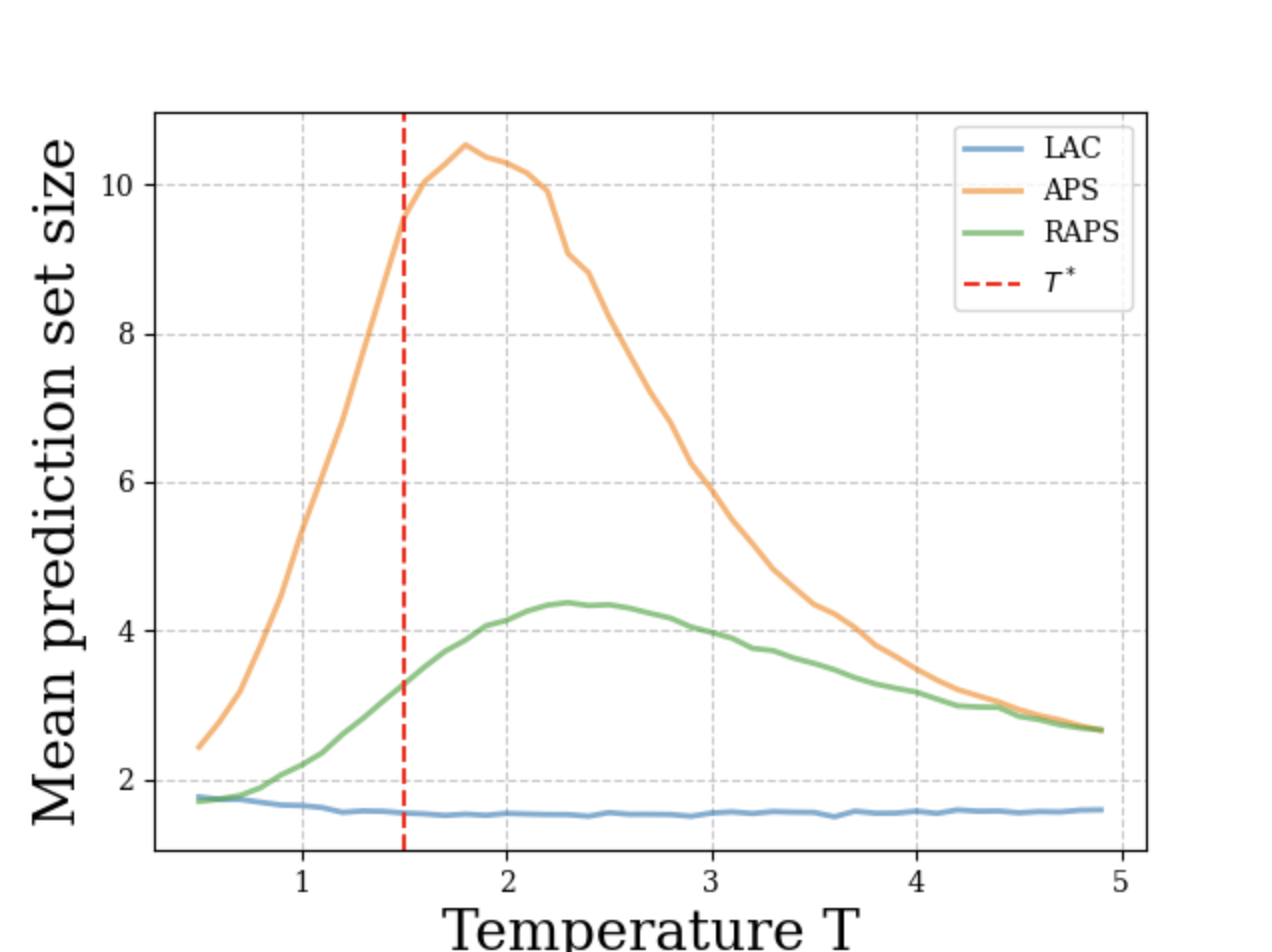}
    \end{subfigure}
    \begin{subfigure}
        \centering
        \includegraphics[width=0.21\textwidth]{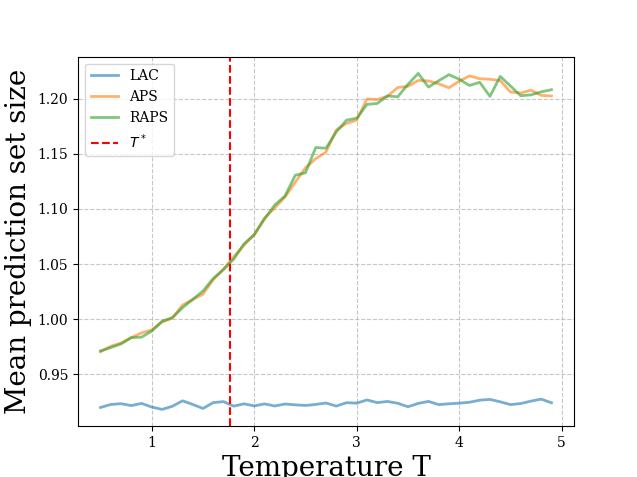}
    \end{subfigure}    

\vspace{3mm}
\begin{center}
    \textbf{ImageNet, ResNet152 \qquad CIFAR-100, DenseNet121 \qquad CIFAR-10, ResNet34}
\end{center}
    \centering
    \begin{subfigure}
        \centering
        \includegraphics[width=0.21\textwidth]{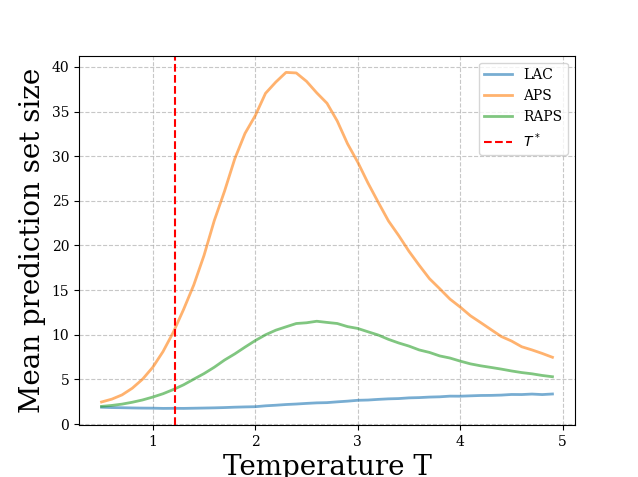}
    \end{subfigure}
    \begin{subfigure}
        \centering
        \includegraphics[width=0.21\textwidth]{ICLR/More_T_5/PS_Size/CP_Set_SIze/10_per/REGULAR/cifar100_densenet121.png}
    \end{subfigure}
    \begin{subfigure}
        \centering
        \includegraphics[width=0.21\textwidth]{ICLR/More_T_5/PS_Size/CP_Set_SIze/10_per/REGULAR/cifar10_resnet34.png}
    \end{subfigure}    
    \caption{ \textbf{Prediction Set Size.} AvgSize using LAC, APS and RAPS with $\alpha=0.1$ and CP set size $10\%$ versus the temperature $T$ for additional dataset-model pairs.} 
\end{figure}

\vspace{2cm}

\begin{figure}[h]
\vspace{-2cm}
\begin{center}
    \textbf{ImageNet, ViT-B/16 \qquad CIFAR-100, ResNet50 \qquad CIFAR-10, ResNet50}
\end{center}
    \centering
    \begin{subfigure}
        \centering
        \includegraphics[width=0.23\textwidth]{ICLR/More_T_5/PS_Size/CP_Set_SIze/10_per/REGULAR/imagenet_vit.png}
    \end{subfigure}
    \begin{subfigure}
        \centering
        \includegraphics[width=0.23\textwidth]{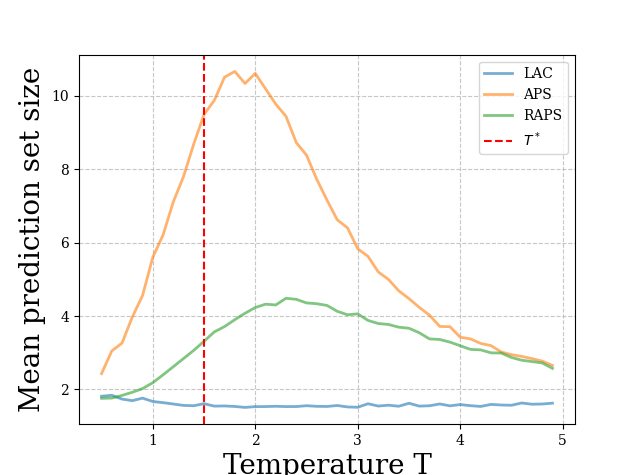}
    \end{subfigure}
    \begin{subfigure}
        \centering
        \includegraphics[width=0.23\textwidth]{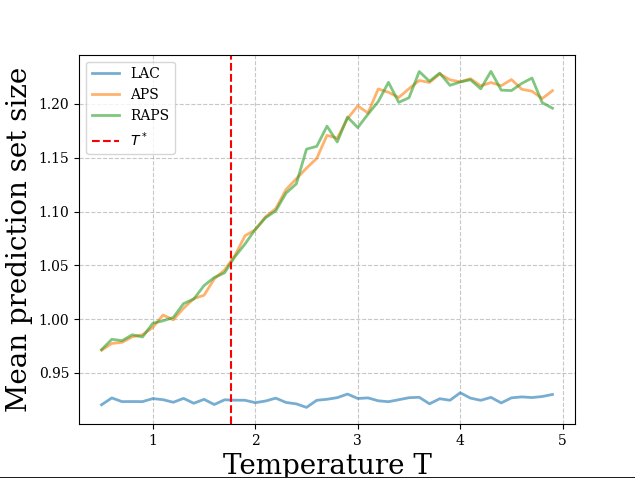}
    \end{subfigure}    

\vspace{3mm}
\begin{center}
    \textbf{ImageNet, ResNet152 \qquad CIFAR-100, DenseNet121 \qquad CIFAR-10, ResNet34}
\end{center}
    \centering
    \begin{subfigure}
        \centering
        \includegraphics[width=0.23\textwidth]{ICLR/More_T_5/PS_Size/CP_Set_SIze/10_per/REGULAR/imagenet_resnet152.png}
    \end{subfigure}
    \begin{subfigure}
        \centering
        \includegraphics[width=0.23\textwidth]{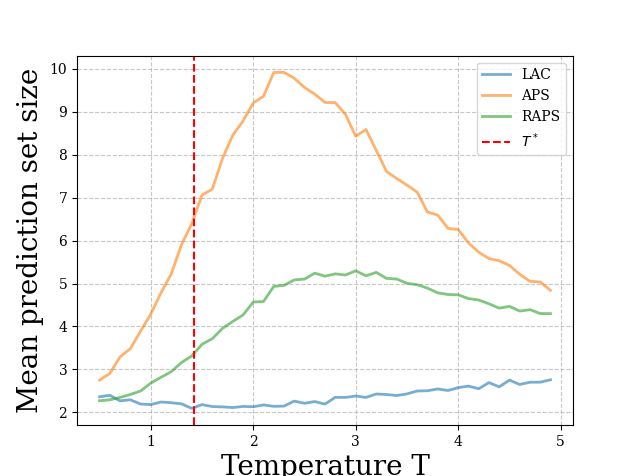}
    \end{subfigure}
    \begin{subfigure}
        \centering
        \includegraphics[width=0.23\textwidth]{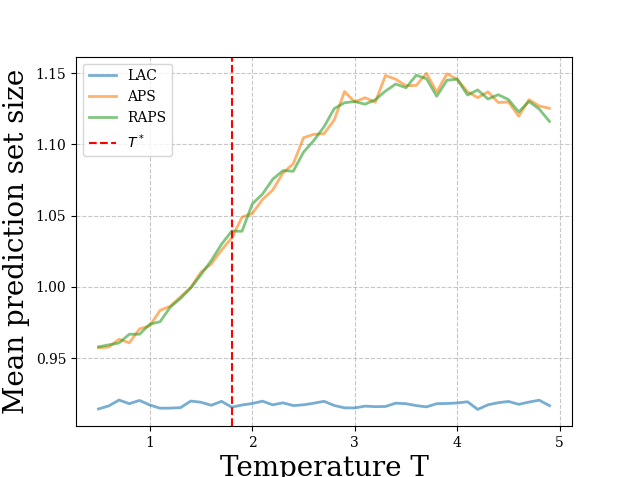}
    \end{subfigure}    
    \caption{ \textbf{Prediction Set Size.} AvgSize using LAC, APS and RAPS with $\alpha=0.1$ and CP set size $5\%$, versus the temperature $T$ for additional dataset-model pairs.}  
\end{figure}

\vspace{1cm}

\begin{figure}[h]
\vspace{1cm}
\begin{center}
    \textbf{ImageNet, ViT-B/16 \qquad CIFAR-100, ResNet50 \qquad CIFAR-10, ResNet50}
\end{center}
    \centering
    \begin{subfigure}
        \centering
        \includegraphics[width=0.23\textwidth]{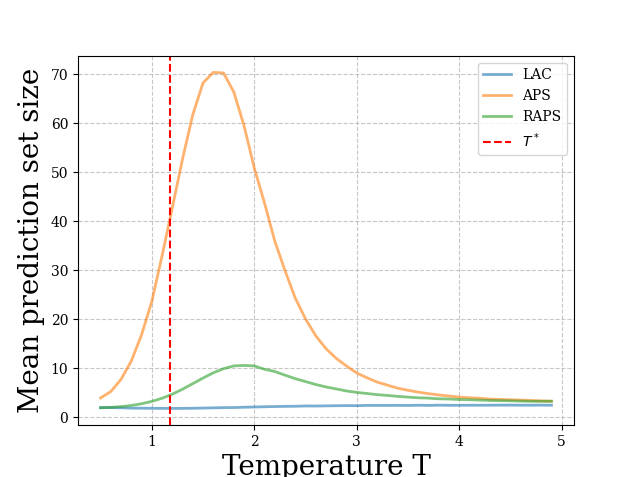}
    \end{subfigure}
    \begin{subfigure}
        \centering
        \includegraphics[width=0.23\textwidth]{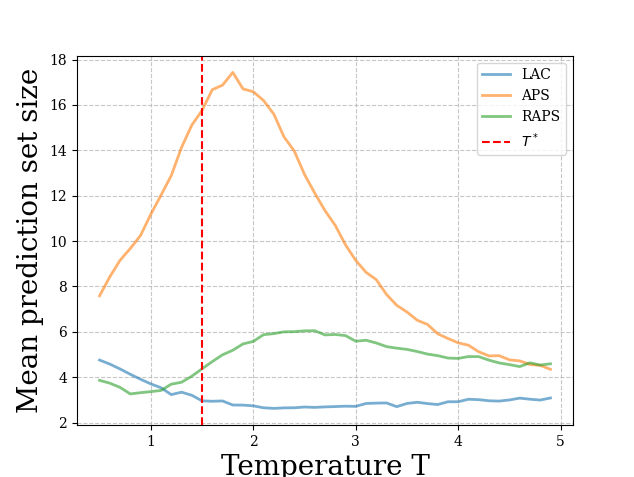}
    \end{subfigure}
    \begin{subfigure}
        \centering
        \includegraphics[width=0.23\textwidth]{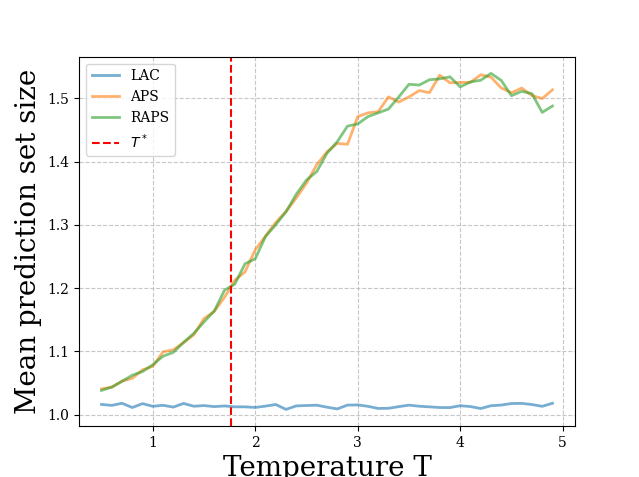}
    \end{subfigure}    

\vspace{3mm}
\begin{center}
    \textbf{ImageNet, ResNet152 \qquad CIFAR-100, DenseNet121 \qquad CIFAR-10, ResNet34}
\end{center}
    \centering
    \begin{subfigure}
        \centering
        \includegraphics[width=0.23\textwidth]{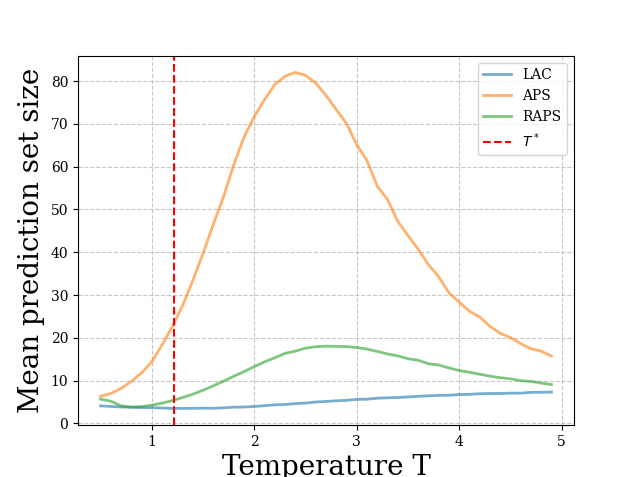}
    \end{subfigure}
    \begin{subfigure}
        \centering
        \includegraphics[width=0.23\textwidth]{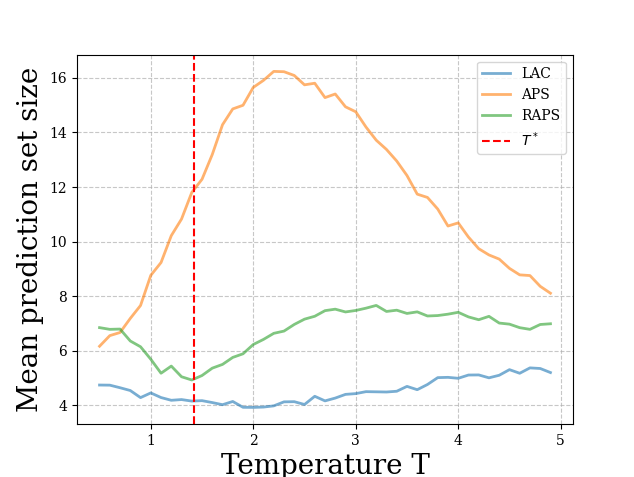}
    \end{subfigure}
    \begin{subfigure}
        \centering
        \includegraphics[width=0.23\textwidth]{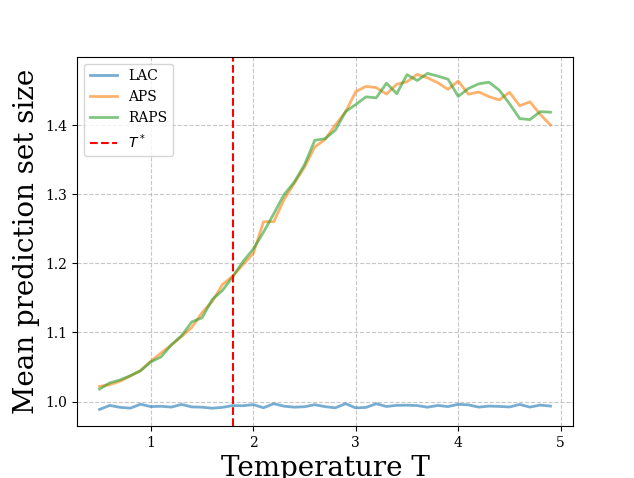}
    \end{subfigure}    
    \caption{ \textbf{Prediction Set Size.} AvgSize using LAC, APS and RAPS with $\alpha=0.05$ and CP set size $10\%$ versus the temperature $T$ for additional dataset-model pairs.} 
\end{figure}

\newpage
\subsubsection{TopCovGap metric}

\vspace{2cm}

\begin{figure*}[h]
\vspace{-2cm}
\begin{center}
    \textbf{ImageNet, ViT-B/16 \qquad CIFAR-100, ResNet50 \qquad CIFAR-10, ResNet50}
\end{center}
    \centering
    \begin{subfigure}
        \centering
        \includegraphics[width=0.23\textwidth]{ICLR/More_T_5/TopCovGap/REGULAR/imagenet_vit.png}
    \end{subfigure}
    \begin{subfigure}
        \centering
        \includegraphics[width=0.23\textwidth]{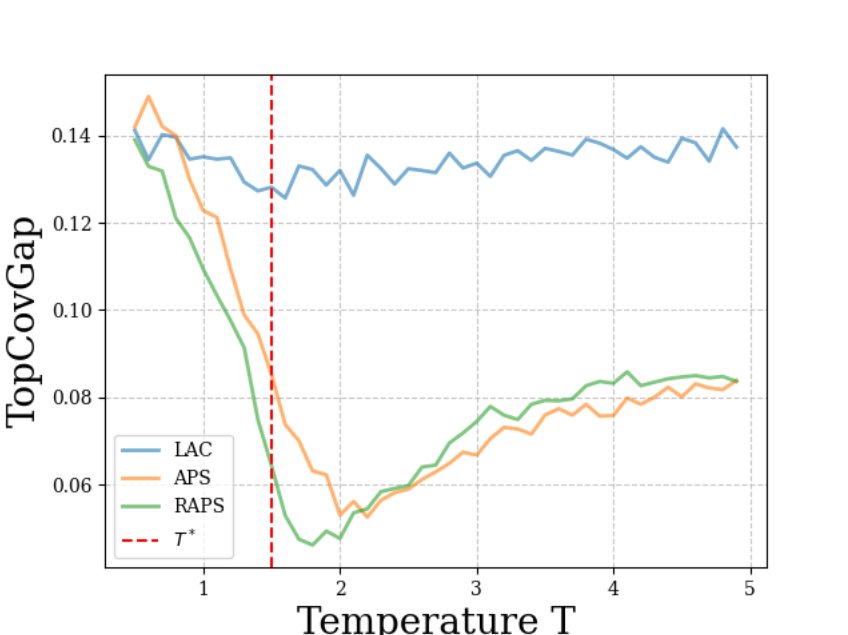}
    \end{subfigure}
    \begin{subfigure}
        \centering
        \includegraphics[width=0.23\textwidth]{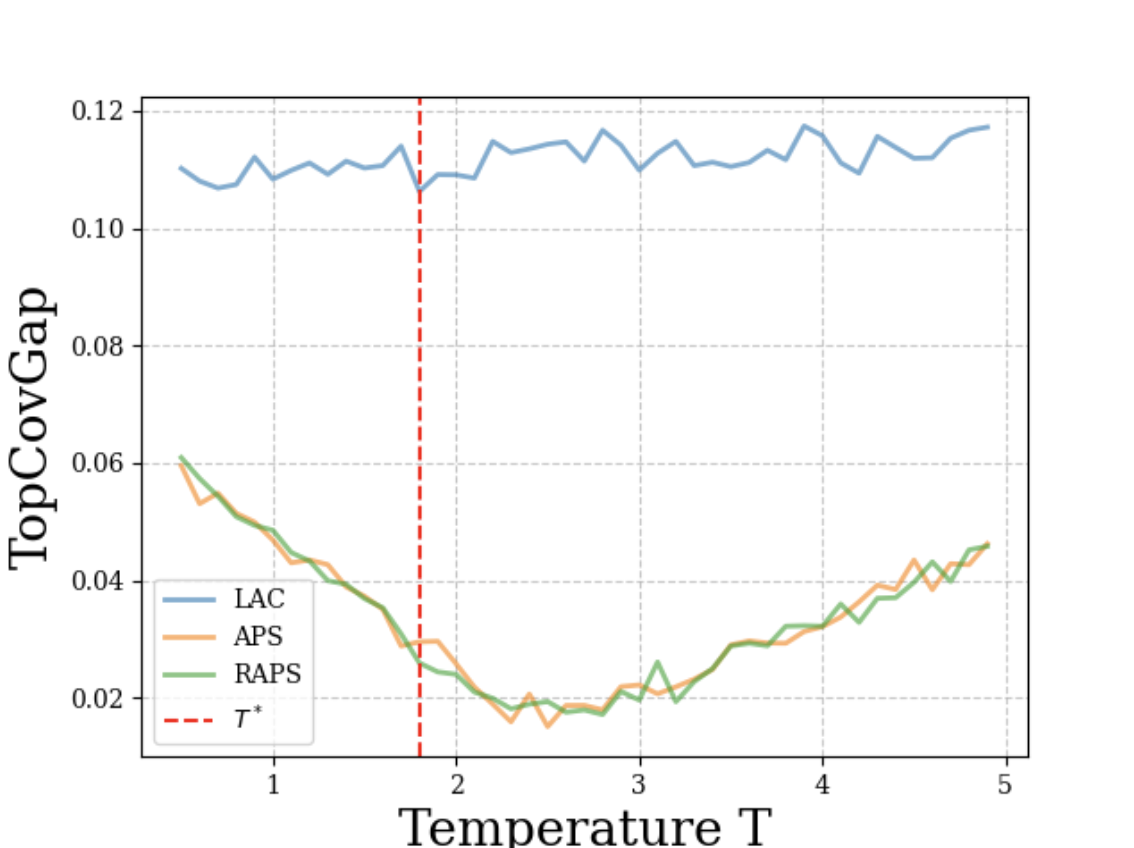}
    \end{subfigure}    

\vspace{3mm}
\begin{center}
    \textbf{ImageNet, ResNet152 \qquad CIFAR-100, DenseNet121 \qquad CIFAR-10, ResNet34}
\end{center}
    \centering
    \begin{subfigure}
        \centering
        \includegraphics[width=0.23\textwidth]{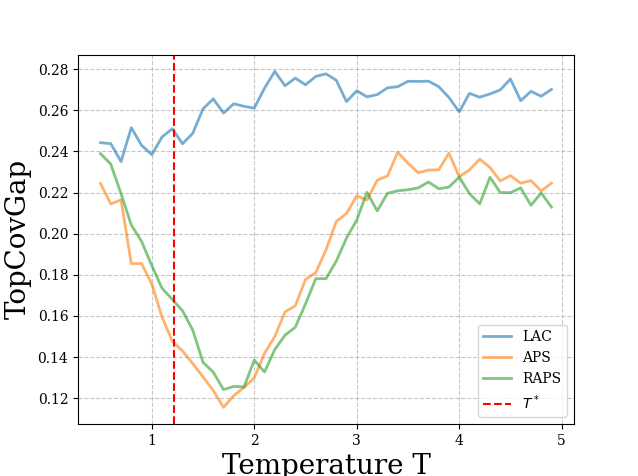}
    \end{subfigure}
    \begin{subfigure}
        \centering
        \includegraphics[width=0.23\textwidth]{ICLR/More_T_5/TopCovGap/REGULAR/cifar100_densenet121.png}
    \end{subfigure}
    \begin{subfigure}
        \centering
        \includegraphics[width=0.23\textwidth]{ICLR/More_T_5/TopCovGap/REGULAR/cifar10_resnet34.png}
    \end{subfigure}    
    \caption{ \textbf{Conditional Coverage Metric.} TopCovGap using LAC, APS and RAPS with $\alpha=0.1$ and  CP set size $10\%$ versus the temperature $T$ for additional dataset-model pairs.} 
\end{figure*}

\begin{figure*}[h]
\begin{center}
    \textbf{ImageNet, ViT-B/16 \qquad CIFAR-100, ResNet50 \qquad CIFAR-10, ResNet50}
\end{center}
    \centering
    \begin{subfigure}
        \centering
        \includegraphics[width=0.23\textwidth]{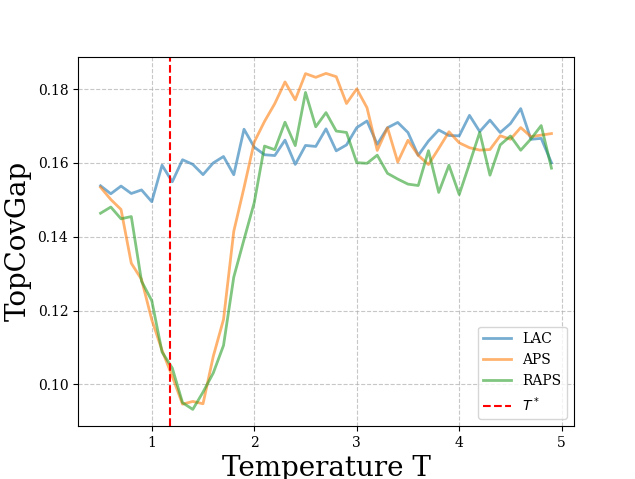}
    \end{subfigure}
    \begin{subfigure}
        \centering
        \includegraphics[width=0.23\textwidth]{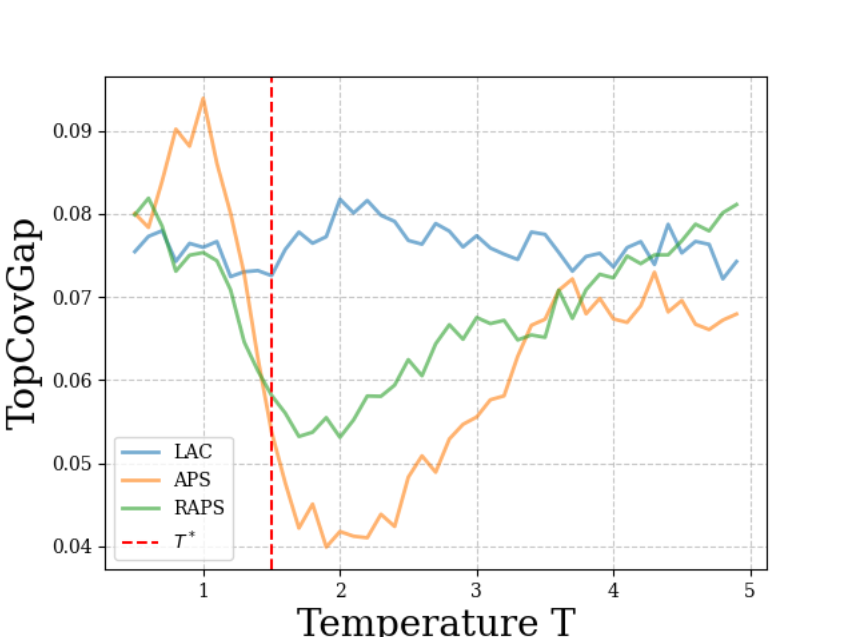}
    \end{subfigure}
    \begin{subfigure}
        \centering
        \includegraphics[width=0.23\textwidth]{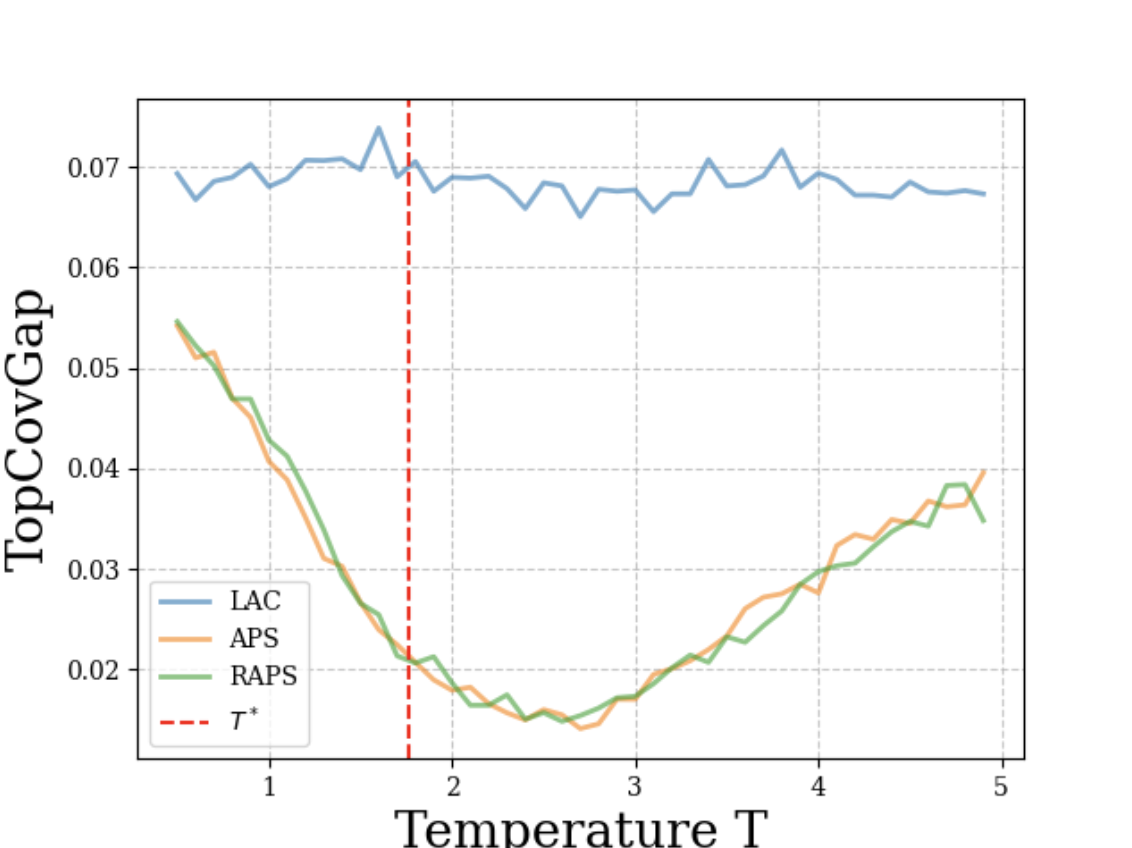}
    \end{subfigure}    

\vspace{3mm}
\begin{center}
    \textbf{ImageNet, ResNet152 \qquad CIFAR-100, DenseNet121 \qquad CIFAR-10, ResNet34}
\end{center}
    \centering
    \begin{subfigure}
        \centering
        \includegraphics[width=0.23\textwidth]{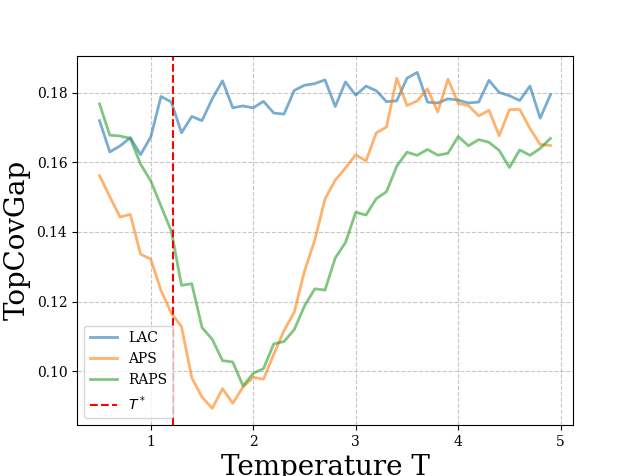}
    \end{subfigure}
    \begin{subfigure}
        \centering
        \includegraphics[width=0.23\textwidth]{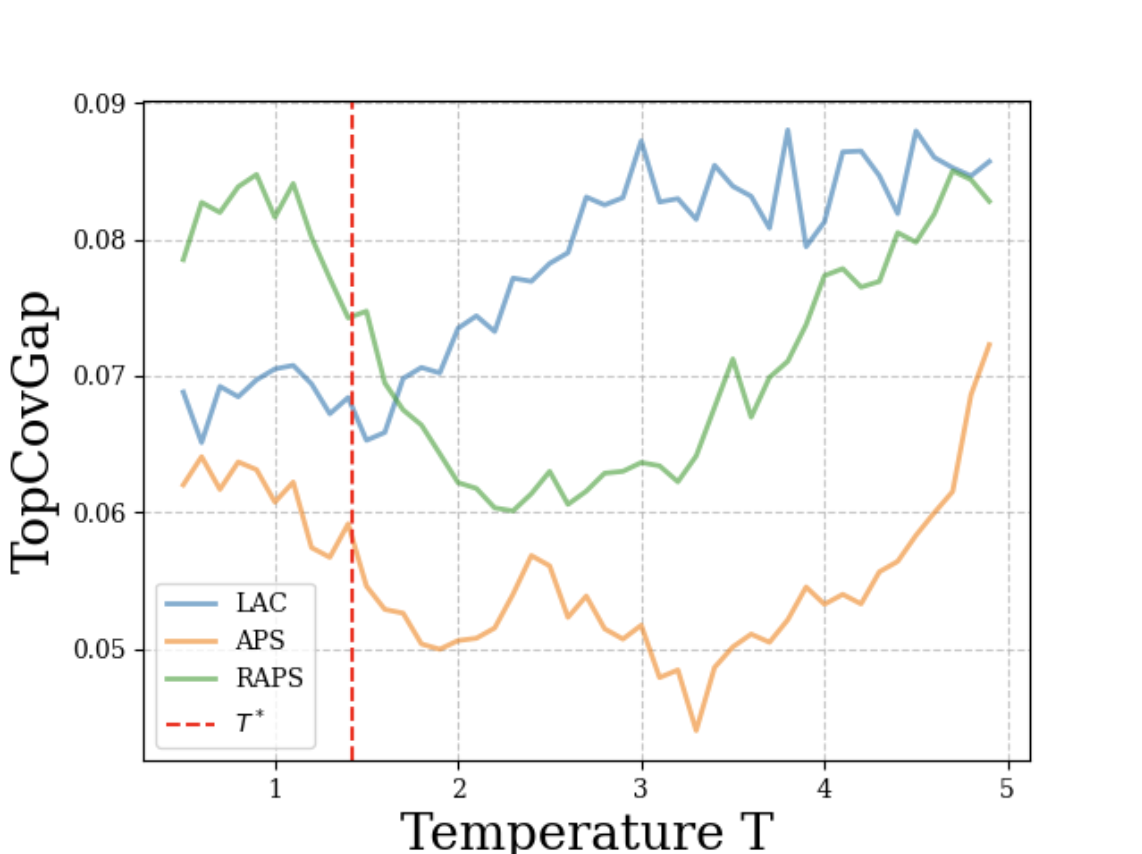}
    \end{subfigure}
    \begin{subfigure}
        \centering
        \includegraphics[width=0.23\textwidth]{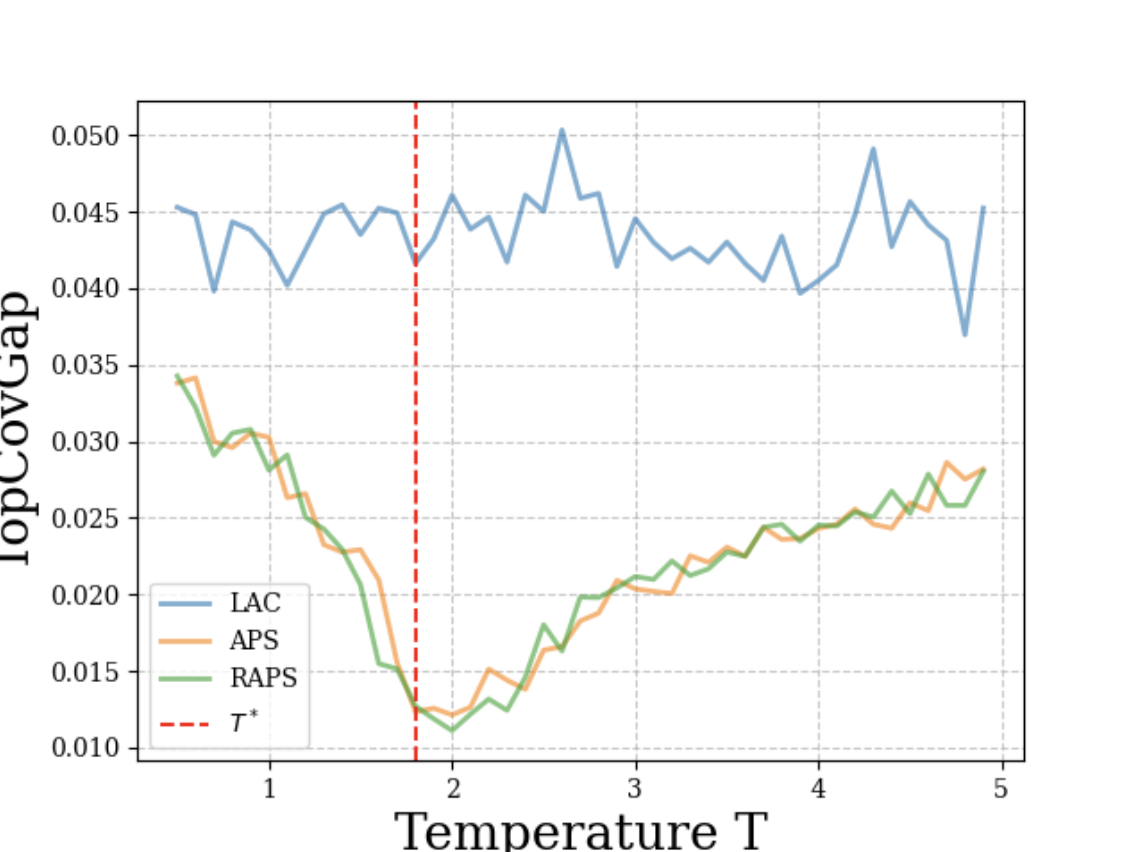}
    \end{subfigure}    
    \vspace{-3mm}
    \caption{ \textbf{Conditional Coverage Metric.} TopCovGap using LAC, APS and RAPS with $\alpha=0.05$ and CP set size $10\%$ versus the temperature $T$ for additional dataset-model pairs.}  
\end{figure*}

\newpage

{

\subsubsection{{AvgCovGap metric}}
\label{app: AvgCovGap}

In this subsection, we define and examine the {\em average class-coverage gap} (AvgCovGap) -- a metric that measures the deviation from the target $1-\alpha$ coverage, averaged across all classes:
\vspace{-1mm}
\begin{equation*}
\label{eq:AvgCovGap}
    \text{AvgCovGap} = \frac{1}{C} \sum_{y=1}^C \left|\frac{1}{|I_y|} \sum_{i \in I_y} \mathbbm{1}\left\{y_i^{(val)} \in C\left(x_i^{(val)}\right)\right\} - (1-\alpha)\right|,
\end{equation*}
where %
$I_y=\{i \in [N_{val}] : y_i^{(val)}=y$\}. 
The following experimental results for AvgCovGap closely mirror those observed with the TopCovGap metric.

\vspace{7mm}

\begin{figure*}[h]
\vspace{-7mm}
\begin{center}
    \textbf{ImageNet, ViT-B/16 \qquad CIFAR-100, ResNet50 \qquad CIFAR-10, ResNet50}
\end{center}
    \centering
    \begin{subfigure}
        \centering
        \includegraphics[width=0.23\textwidth]{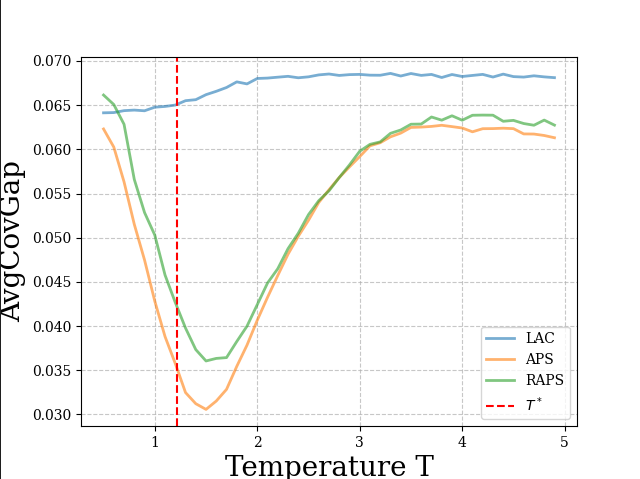}
    \end{subfigure}
    \begin{subfigure}
        \centering
        \includegraphics[width=0.23\textwidth]{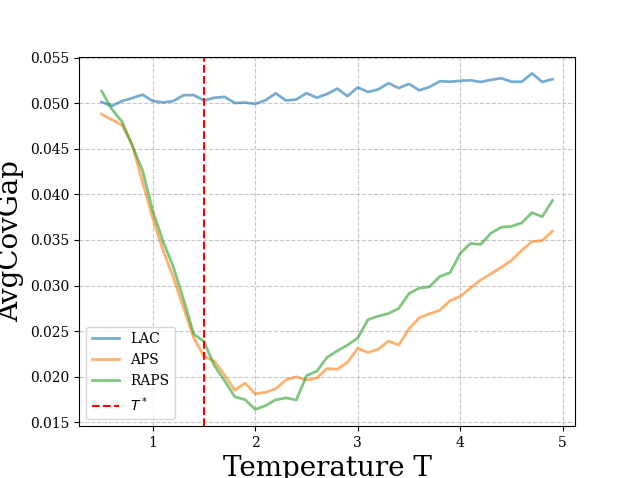}
    \end{subfigure}
    \begin{subfigure}
        \centering
        \includegraphics[width=0.23\textwidth]{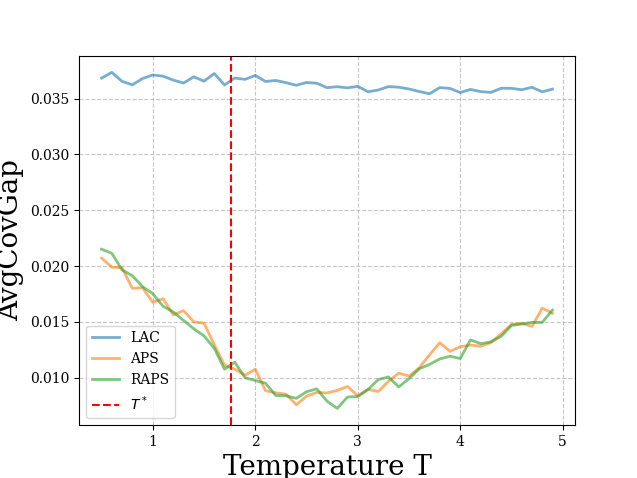}
    \end{subfigure}    

\vspace{3mm}
\begin{center}
    \textbf{ImageNet, ResNet152 \qquad CIFAR-100, DenseNet121 \qquad CIFAR-10, ResNet34}
\end{center}
    \centering
    \begin{subfigure}
        \centering
        \includegraphics[width=0.23\textwidth]{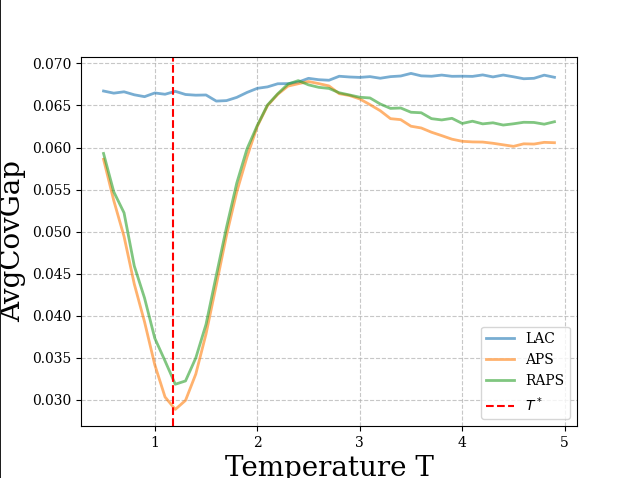}
    \end{subfigure}
    \begin{subfigure}
        \centering
        \includegraphics[width=0.23\textwidth]{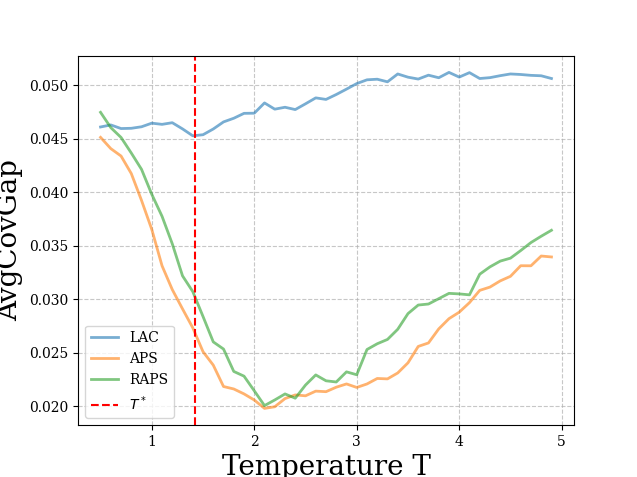}
    \end{subfigure}
    \begin{subfigure}
        \centering
        \includegraphics[width=0.23\textwidth]{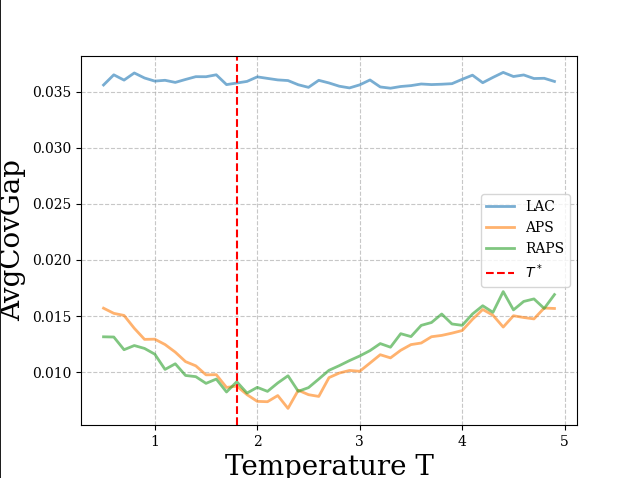}
    \end{subfigure}    
        \vspace{-3mm}
    \caption{ \textbf{Conditional Coverage Metric}. AvgCovGap using LAC, APS and RAPS with $\alpha=0.1$ and CP set size $10\%$ versus the temperature $T$ for additional dataset-model pairs.} 
\end{figure*}

\begin{figure*}[h]
\begin{center}
    \textbf{ImageNet, ViT-B/16 \qquad CIFAR-100, ResNet50 \qquad CIFAR-10, ResNet50}
\end{center}
    \centering
    \begin{subfigure}
        \centering
        \includegraphics[width=0.23\textwidth]{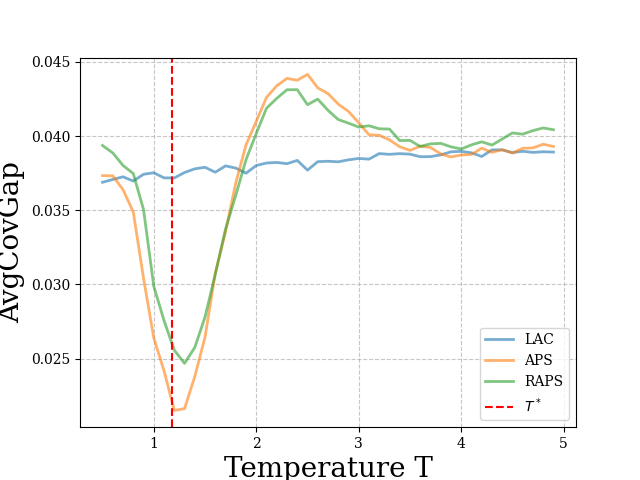}
    \end{subfigure}
    \begin{subfigure}
        \centering
        \includegraphics[width=0.23\textwidth]{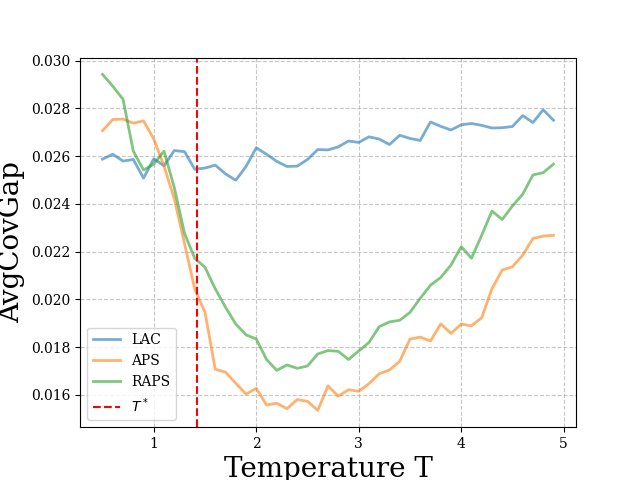}
    \end{subfigure}
    \begin{subfigure}
        \centering
        \includegraphics[width=0.23\textwidth]{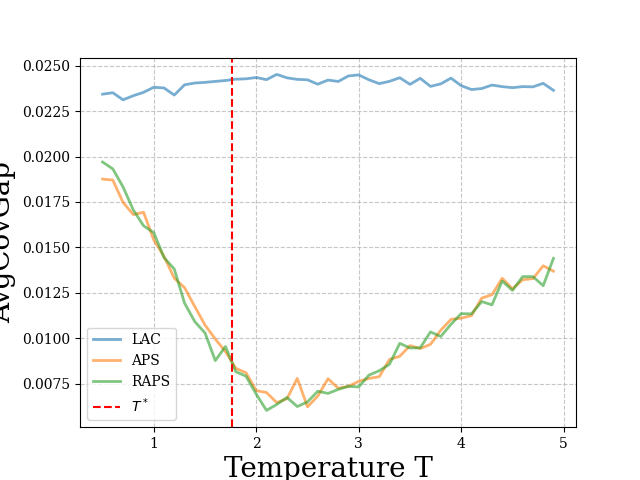}
    \end{subfigure}    

\vspace{3mm}
\begin{center}
    \textbf{ImageNet, ResNet152 \qquad CIFAR-100, DenseNet121 \qquad CIFAR-10, ResNet34}
\end{center}
    \centering
    \begin{subfigure}
        \centering
        \includegraphics[width=0.23\textwidth]{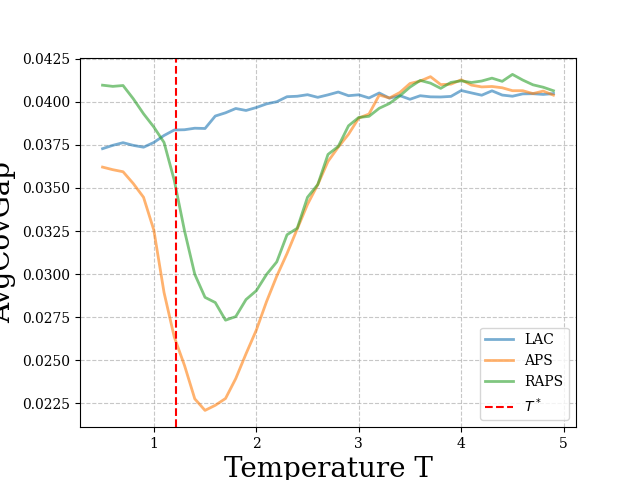}
    \end{subfigure}
    \begin{subfigure}
        \centering
        \includegraphics[width=0.23\textwidth]{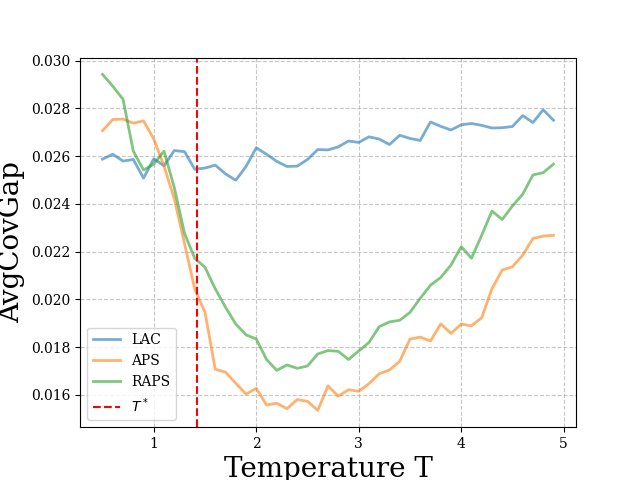}
    \end{subfigure}
    \begin{subfigure}
        \centering
        \includegraphics[width=0.23\textwidth]{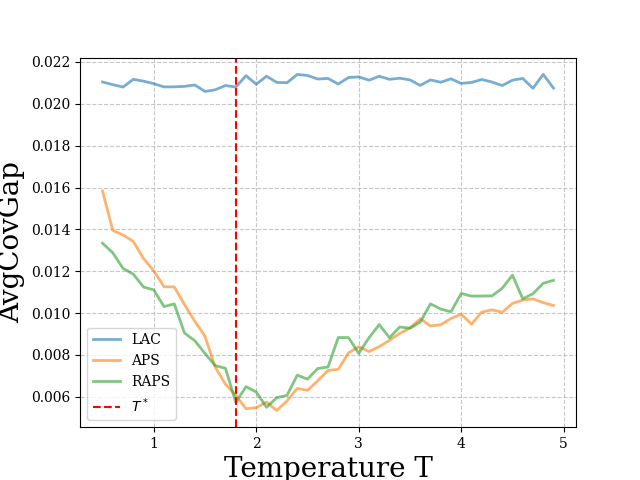}
    \end{subfigure}    
    \caption{\textbf{Conditional Coverage Metric}. AvgCovGap using LAC, APS and RAPS with $\alpha=0.05$ and CP set size $10\%$ versus the temperature $T$ for additional dataset-model pairs.}  
\end{figure*}
}

\newpage

\subsubsection{CP threshold value $\hat{q}$}

\vspace{8mm}

\begin{figure*}[h]
\vspace{-7mm}
\begin{center}
    \textbf{ImageNet, ViT-B/16 \qquad CIFAR-100, ResNet50 \qquad CIFAR-10, ResNet50}
\end{center}
    \centering
    \begin{subfigure}
        \centering
        \includegraphics[width=0.23\textwidth]{ICLR/More_T_5/Threshold/REGULAR/imagenet_vit.png}
    \end{subfigure}
    \begin{subfigure}
        \centering
        \includegraphics[width=0.23\textwidth]{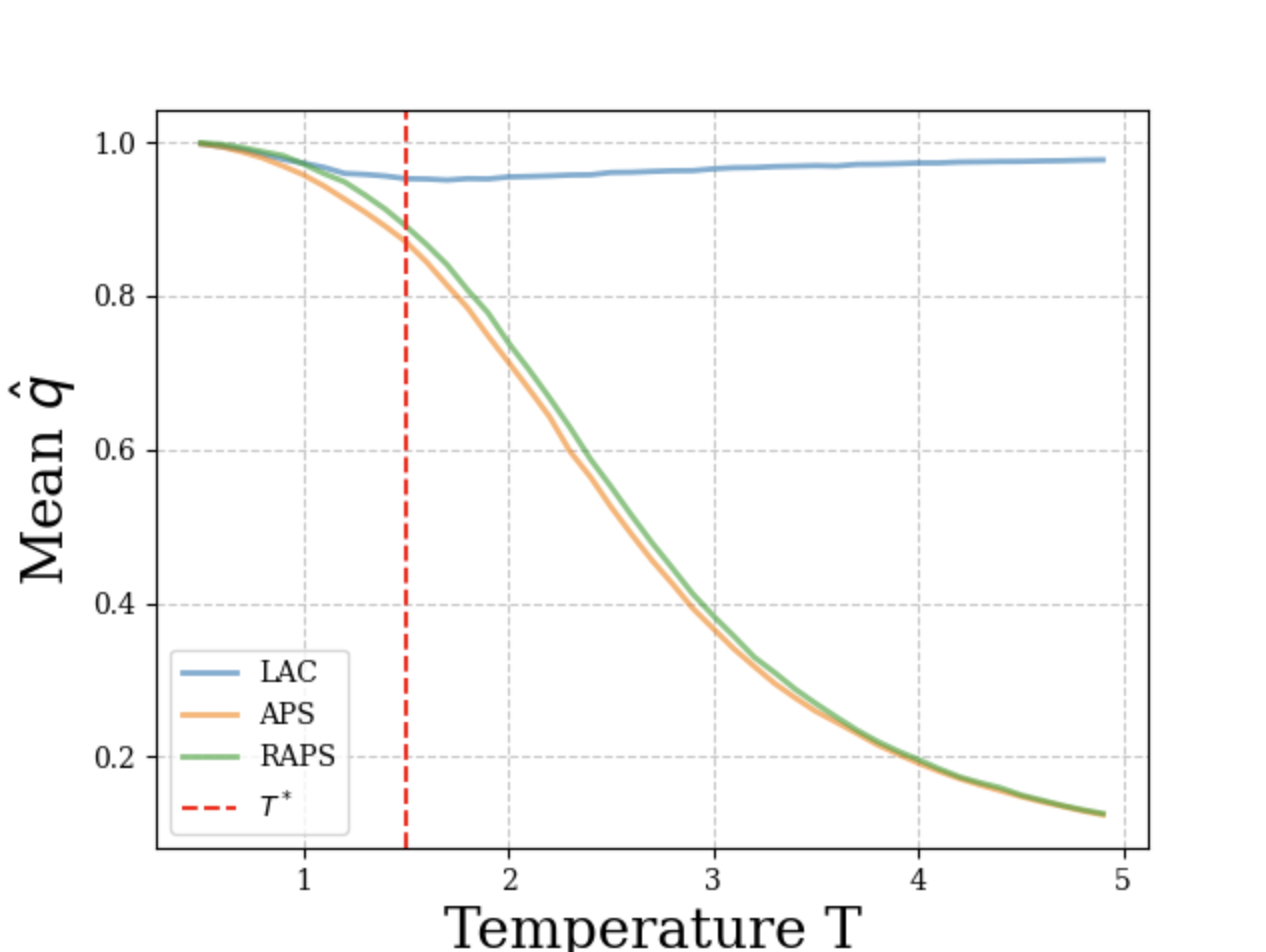}
    \end{subfigure}
    \begin{subfigure}
        \centering
        \includegraphics[width=0.23\textwidth]{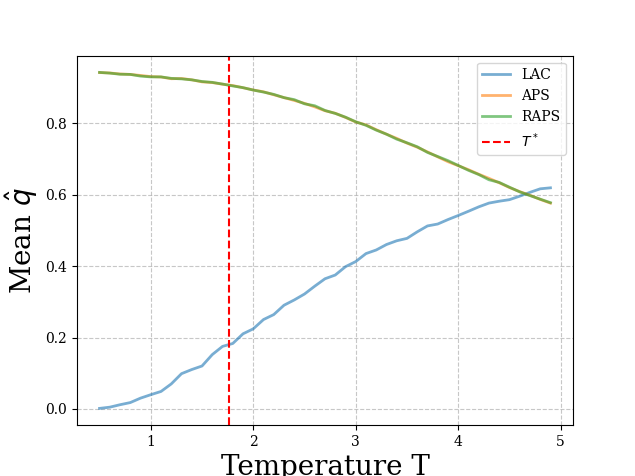}
    \end{subfigure}    

\vspace{3mm}
\begin{center}
    \textbf{ImageNet, ResNet152 \qquad CIFAR-100, DenseNet121 \qquad CIFAR-10, ResNet34}
\end{center}
    \centering
    \begin{subfigure}
        \centering
        \includegraphics[width=0.23\textwidth]{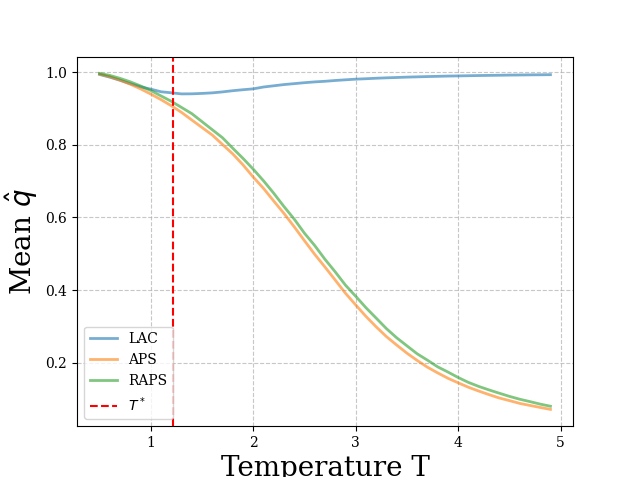}
    \end{subfigure}
    \begin{subfigure}
        \centering
        \includegraphics[width=0.23\textwidth]{ICLR/More_T_5/Threshold/REGULAR/cifar100_densenet121.png}
    \end{subfigure}
    \begin{subfigure}
        \centering
        \includegraphics[width=0.23\textwidth]{ICLR/More_T_5/Threshold/REGULAR/cifar10_resnet34.png}
    \end{subfigure}    
    \caption{ \textbf{CP threshold value} $\hat{q}$ using LAC, APS and RAPS with $\alpha=0.1$ and CP set size $10\%$ versus the temperature $T$ for additional dataset-model pairs.} 
\end{figure*}

\begin{figure*}[h]
\begin{center}
    \textbf{ImageNet, ViT-B/16 \qquad CIFAR-100, ResNet50 \qquad CIFAR-10, ResNet50}
\end{center}
    \centering
    \begin{subfigure}
        \centering
        \includegraphics[width=0.23\textwidth]{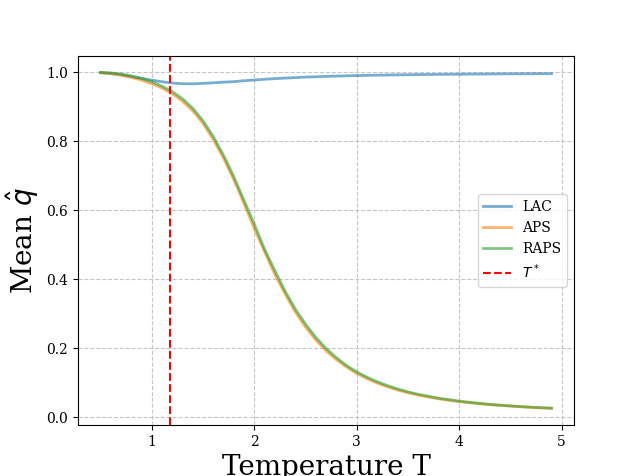}
    \end{subfigure}
    \begin{subfigure}
        \centering
        \includegraphics[width=0.23\textwidth]{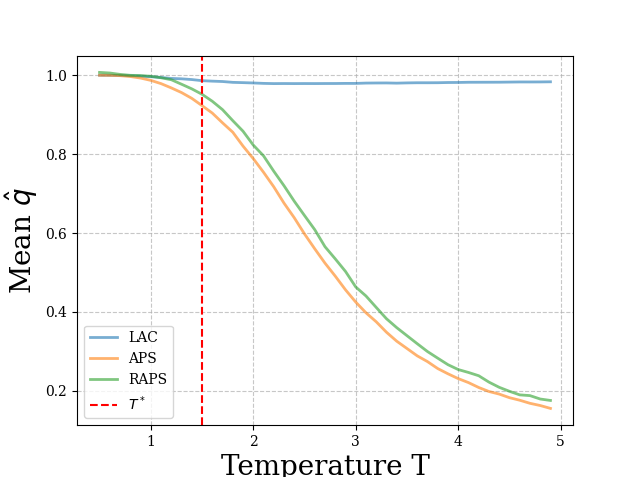}
    \end{subfigure}
    \begin{subfigure}
        \centering
        \includegraphics[width=0.23\textwidth]{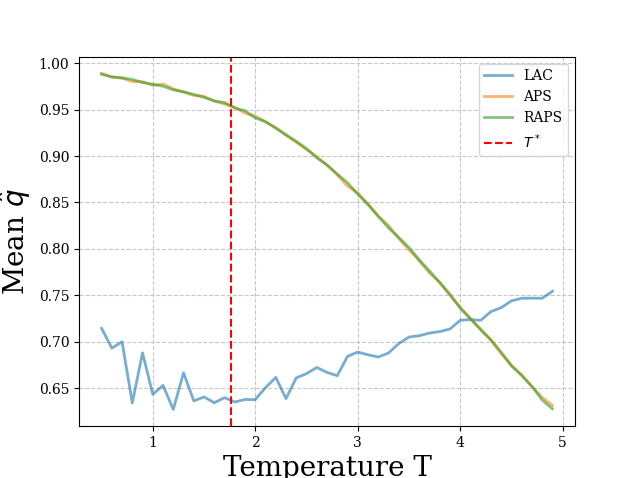}
    \end{subfigure}    

\vspace{3mm}
\begin{center}
    \textbf{ImageNet, ResNet152 \qquad CIFAR-100, DenseNet121 \qquad CIFAR-10, ResNet34}
\end{center}
    \centering
    \begin{subfigure}
        \centering
        \includegraphics[width=0.23\textwidth]{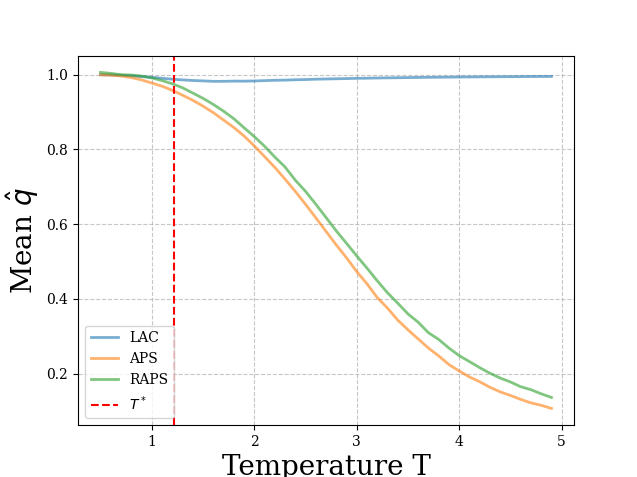}
    \end{subfigure}
    \begin{subfigure}
        \centering
        \includegraphics[width=0.23\textwidth]{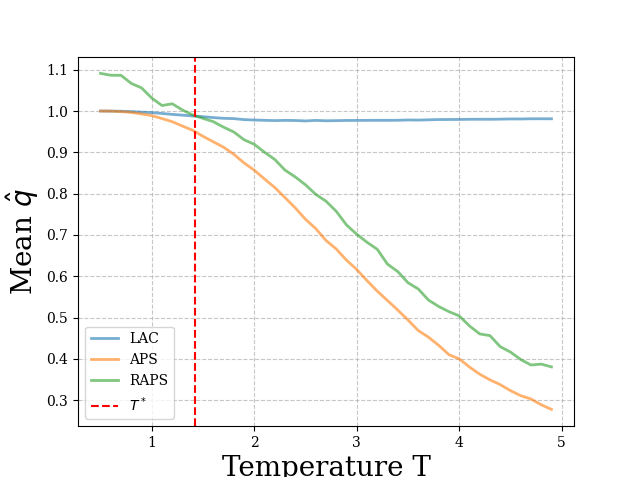}
    \end{subfigure}
    \begin{subfigure}
        \centering
        \includegraphics[width=0.23\textwidth]{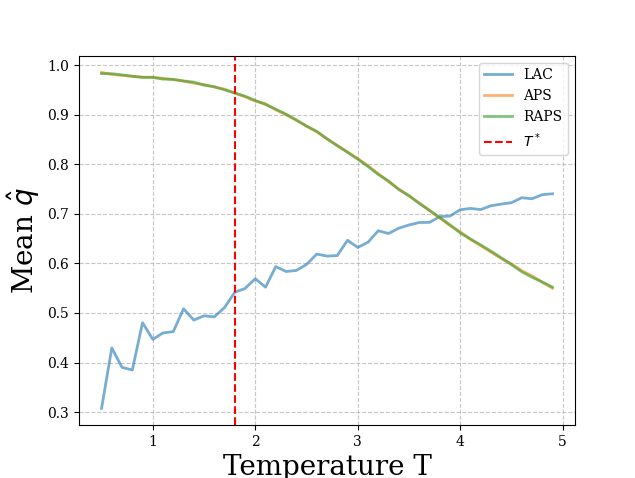}
    \end{subfigure}    
    \caption{\textbf{CP threshold value} $\hat{q}$ using LAC, APS and RAPS with $\alpha=0.05$ and CP set size $10\%$ versus the temperature $T$ for additional dataset-model pairs.}  
\end{figure*}

\newpage

\subsubsection{The impact of TS at extremely low temperatures}

\begin{wrapfigure}{r}{0.4\textwidth}
    \vspace{-1.5cm}
    \centering    
    \includegraphics[width=0.8\linewidth]{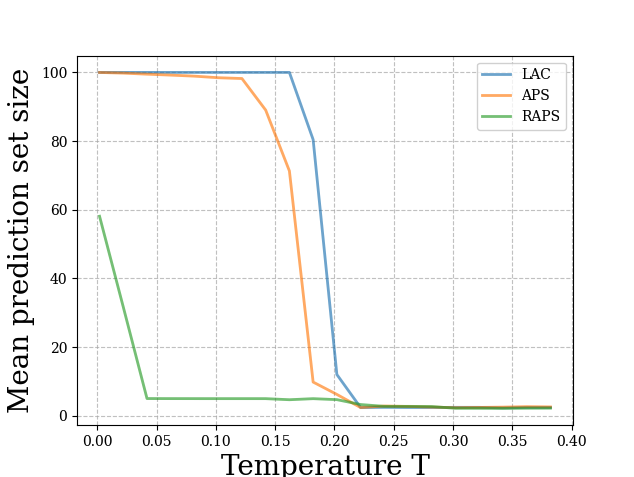}
    \vspace{-2mm}
    \caption{\textbf{Prediction Set Size.} Mean prediction set size for LAC, APS and RAPS versus low temperatures, for CIFAR100-DenseNet121.}
    \label{fig:Extreme temperatures}
    \vspace{-1.5cm}
\end{wrapfigure}

In our  experiments presented in  Section~\ref{fig:metrics_vs_T_main}, we lower bound the temperature range at $T=0.3$. This choice was motivated by the deviation from the desired marginal coverage guarantee observed at extremely small temperatures. Specifically, we observe that for too small $T$ the threshold value reaches maximal value, $\hat{q} \rightarrow 1$, and, presumably due to numerical errors, 
this leads to an impractical CP procedure, with significant over-coverage and excessively large prediction set sizes, as demonstrated in  
Figure \ref{fig:Extreme temperatures} for CIFAR100-DenseNet121, $\alpha=0.1$ and CP set size $10\%$.

{

\vspace{10mm}

\subsection{Illustrating the similarity between $\z^q$ and $\z^q_{T}$}
\label{app: similarity between quantiles}

Figure \ref{Dz vs Samples} in Section \ref{sec: Theoretical analysis size} shows a strong correlation between the APS score and $\Delta z := z_{(1)} - z_{(2)}$, indicating that quantile samples typically have a dominant  entry in their logit vectors ($z_{(1)}^q \gg z_{(2)}^q$).
This behavior persists under TS, i.e., for the logits vector $\z^q_T$ of the quantile sample when TS has been applied with some temperature $T$.
Recall that the softmax operation applied to the logits vectors is invariant to constant shifts and further amplifies the dominance of larger entries ($\pi_i \propto \exp(z_i)$). Thus, having $z^q_{T,(1)} - z^q_{T,(2)} \approx z^q_{(1)} - z^q_{(2)} \gg 1$, practically implies that $\z^q_T \approx \z^q$.

Here, we demonstrate this similarity using concrete examples of the largest 5 elements of these vector for various dataset–model pairs and temperature settings.
To simplify the comparison, we report the values after applying softmax.
That is, we report the values of $\bsigma(\z^q_T)$.

\vspace{4mm}

CIFAR10-ResNet34:

$\bsigma(\z^q)[:5] = [9.9997\mathrm{e-}01, 8.2661\mathrm{e-}06, 7.3281\mathrm{e-}06, 6.7146\mathrm{e-}06, 2.0070\mathrm{e-}06]$

$\bsigma(\z^q_{T = T^*})[:5] = [9.9997\mathrm{e-}01, 1.4801\mathrm{e-}05, 8.0310\mathrm{e-}06, 2.0299\mathrm{e-}06, 1.6542\mathrm{e-}06]$

$\bsigma(\z^q_{T = 0.5})[:5] = [9.9998\mathrm{e-}01, 1.1870\mathrm{e-}05, 4.1955\mathrm{e-}06, 2.0069\mathrm{e-}06, 1.9213\mathrm{e-}06]$

$\bsigma(\z^q_{T = 2})[:5] = [9.9997\mathrm{e-}01, 2.0271\mathrm{e-}05, 2.0588\mathrm{e-}06, 1.7960\mathrm{e-}06, 1.6457\mathrm{e-}06]$

\vspace{4mm}

CIFAR100-DenseNet121:

$\bsigma(\z^q)[:5] = [9.9993\mathrm{e-}01, 5.6212\mathrm{e-}05, 5.1748\mathrm{e-}06, 3.4063\mathrm{e-}06, 1.2153\mathrm{e-}06]$

$\bsigma(\z^q_{T = T^*})[:5] = [9.9992\mathrm{e-}01, 4.9569\mathrm{e-}05, 8.1525\mathrm{e-}06, 4.1283\mathrm{e-}06, 1.8294\mathrm{e-}06]$

$\bsigma(\z^q_{T = 0.5})[:5] = [9.9997\mathrm{e-}01, 1.2354\mathrm{e-}06, 6.5692\mathrm{e-}07, 4.6121\mathrm{e-}07, 4.4120\mathrm{e-}07]$

$\bsigma(\z^q_{T = 2})[:5] = [9.9996\mathrm{e-}01, 7.2431\mathrm{e-}06, 6.0478\mathrm{e-}06, 4.3499\mathrm{e-}06, 3.7969\mathrm{e-}06]$

\vspace{4mm}

ImageNet-ResNet152:

$\bsigma(\z^q)[:5] = [9.9991\mathrm{e-}01, 5.8738\mathrm{e-}05, 1.5782\mathrm{e-}05, 8.8207\mathrm{e-}06, 4.0592\mathrm{e-}06]$

$\bsigma(\z^q_{T = T^*})[:5] = [9.9992\mathrm{e-}01, 3.1237\mathrm{e-}05, 9.8604\mathrm{e-}06, 9.4194\mathrm{e-}06, 3.6732\mathrm{e-}06]$

$\bsigma(\z^q_{T = 0.5})[:5] = [9.9993\mathrm{e-}01, 3.4423\mathrm{e-}05, 2.5229\mathrm{e-}05, 1.9685\mathrm{e-}05, 1.9673\mathrm{e-}05]$

$\bsigma(\z^q_{T = 2})[:5] = [9.9989\mathrm{e-}01, 2.9378\mathrm{e-}04, 6.1594\mathrm{e-}06, 1.5067\mathrm{e-}06, 6.1018\mathrm{e-}07]$

\vspace{4mm}

Across all these dataset–model pairs and temperature settings, the quantile samples exhibit strong similarity. This observation further supports the technical assumption in in Section \ref{sec: Theoretical analysis size} that both $\hat{q}$ and $\hat{q}_T$ correspond to the same underlying sample.
}

\vspace{1cm}

\section{Approximating the Trade-off via the Calibration Set}
\label{user_guidlines_app}

As discussed in Section \ref{sec: guidelines for Practitioners}, exploring the trade-off between prediction set size and class-conditional coverage through the temperature parameter $\hat{T}$ is beneficial when using adaptive CP algorithms.

We propose using TS with two separate temperature parameters on distinct branches: $T^*$, optimized for TS calibration, and $\hat{T}$, which allows for trading prediction set sizes and conditional coverage properties of APS/RAPS to align better with task requirements. One limitation is that the metrics' values for different $\hat{T}$ are not known a priori. However, since we decouple calibration from the CP procedure, the calibration set can be used to evaluate CP algorithms without compromising exchangeability.

The curves in Figure~\ref{fig:metrics_vs_T_main} were generated by evaluating the CP methods on the entire validation set (excluding the calibration set and CP set) and averaging over 100 trials. Both are not feasible in practice, where the practitioner only has the calibration set and the CP set of a single trial. Here, we show that these curves can be approximated using only the calibration set for evaluation. 
In Figure \ref{fig:interp_results_run_1}, we plot the curves using calibration set + CP set, which together are 20\% of the validation set (as in the main body of the paper). Specifically, 10\% of the validation set is used for the CP operation (computing the threshold), and the remaining 10\% (the original calibration set) serves as a ``validation set'' to evaluate the CP performance. 
Therefore, no additional samples are used compared to the common practice of performing calibration and CP calibration sequentially rather than in parallel.

Unlike the curves in Figure~\ref{fig:metrics_vs_T_main}, the curves in Figure \ref{fig:interp_results_run_1} are not averaged over 100 trials but are based on a single trial. Due to randomization, the curves will vary between runs, so to better reflect the practitioner’s experience, we present results from 3 separate runs. 
In each of the runs in Figure \ref{fig:interp_results_run_1} the marginal coverage is preserved and the curves of AvgSize and TopCovGap closely resemble the averaged ones shown in Figure \ref{fig:metrics_vs_T_main}, demonstrating the user’s ability to select $\hat{T}$ based on these single-trial graphs generated using small amount of data.
Additionally, note that the procedure required to produce these approximated curves is executed offline during calibration and has a negligible runtime. 

\begin{figure}[t]
\vspace{1cm}

\begin{center}
    \textbf{Run 1}
\end{center}

\begin{center}
    \textbf{MarCovGap \qquad \qquad \qquad \quad AvgSize \qquad \qquad \qquad \quad TopCovGap}
\end{center}

    \centering
    \begin{subfigure}
        \centering
        \includegraphics[width=0.23\textwidth]{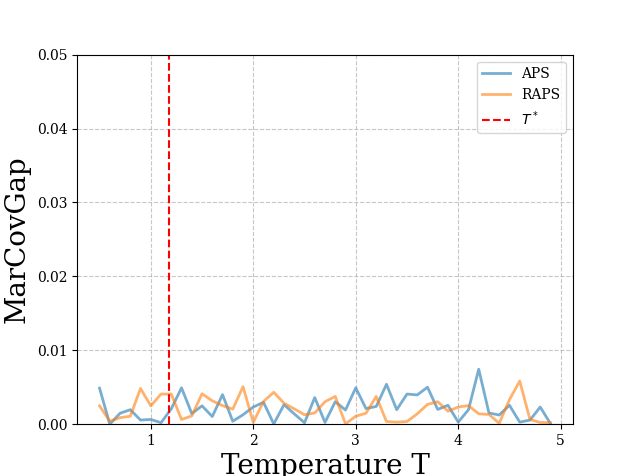}
    \end{subfigure}
    \begin{subfigure}
        \centering
        \includegraphics[width=0.23\textwidth]{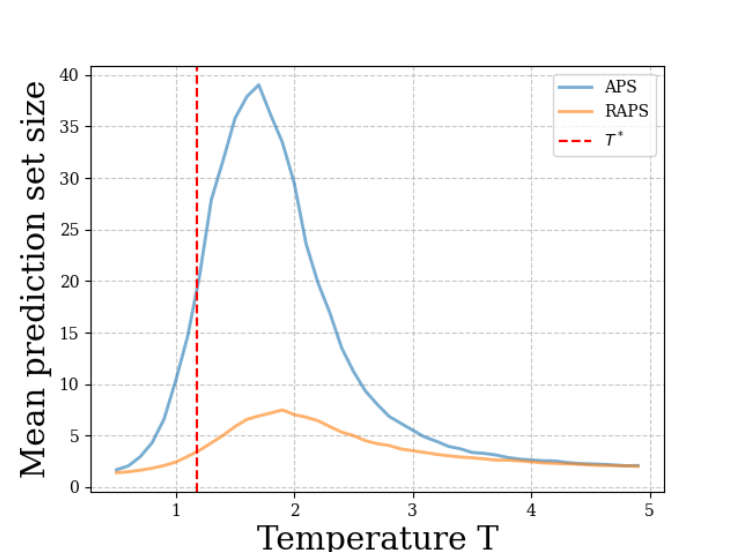}
    \end{subfigure}
    \begin{subfigure}
        \centering
        \includegraphics[width=0.23\textwidth]{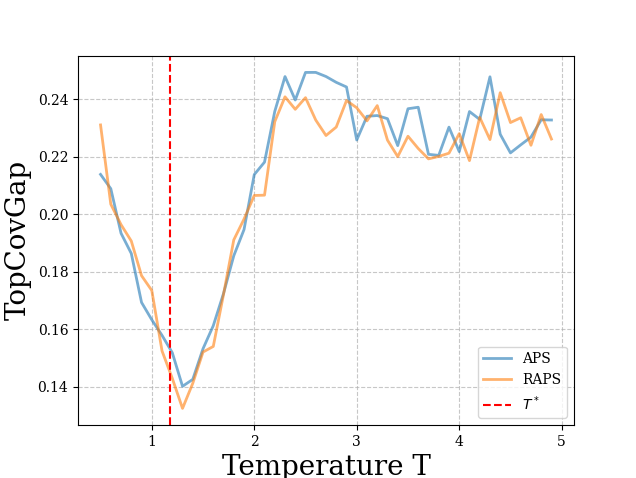}
    \end{subfigure}    

\vspace{5mm}
    \begin{center}
        \textbf{Run 2}
    \end{center}

    \begin{center}
    \textbf{MarCovGap \qquad \qquad \qquad \quad AvgSize \qquad \qquad \qquad \quad TopCovGap}
\end{center}

    \centering
    \begin{subfigure}
        \centering
        \includegraphics[width=0.23\textwidth]{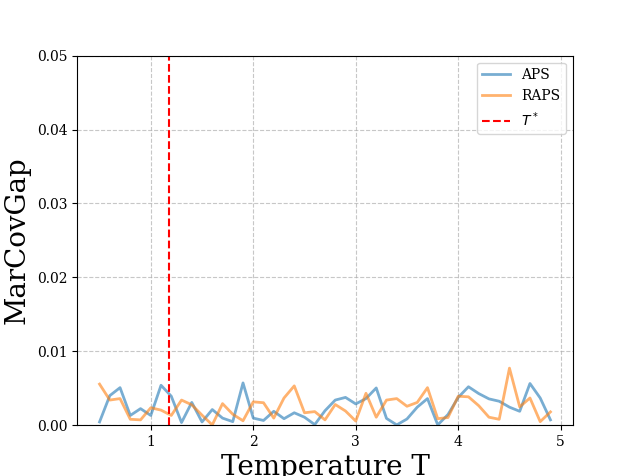}
    \end{subfigure}
    \begin{subfigure}
        \centering
        \includegraphics[width=0.23\textwidth]{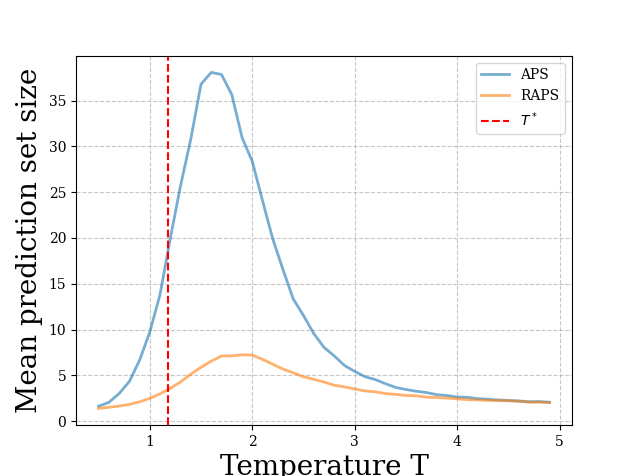}
    \end{subfigure}
    \begin{subfigure}
        \centering
        \includegraphics[width=0.23\textwidth]{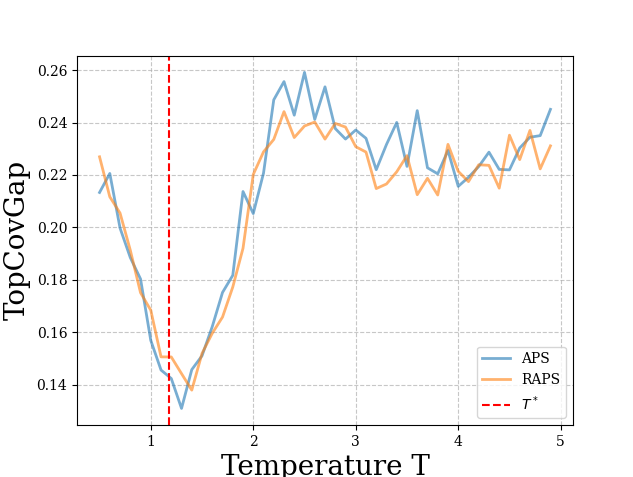}
    \end{subfigure} 

\vspace{5mm}
    \begin{center}
        \textbf{Run 3}
    \end{center}

    \begin{center}
    \textbf{MarCovGap \qquad \qquad \qquad \quad AvgSize \qquad \qquad \qquad \quad TopCovGap}
\end{center}

    \centering
    \begin{subfigure}
        \centering
        \includegraphics[width=0.23\textwidth]{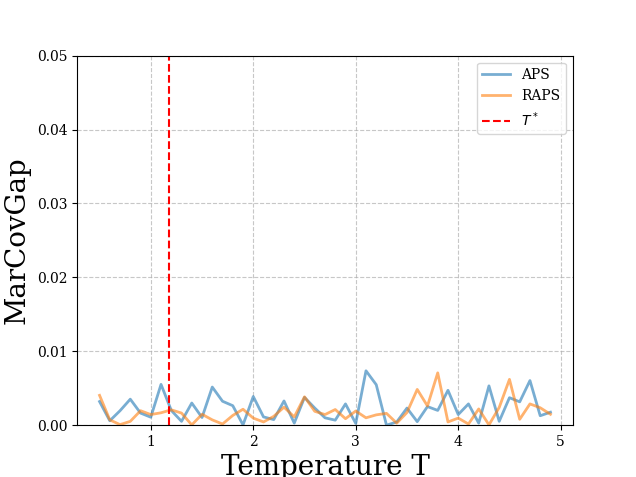}
    \end{subfigure}
    \begin{subfigure}
        \centering
        \includegraphics[width=0.23\textwidth]{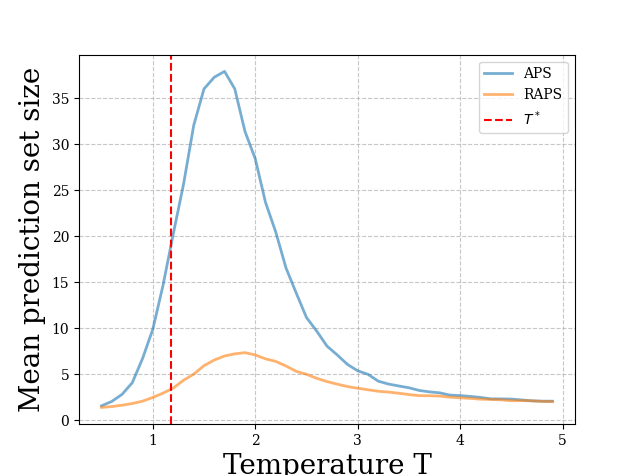}
    \end{subfigure}
    \begin{subfigure}
        \centering
        \includegraphics[width=0.23\textwidth]{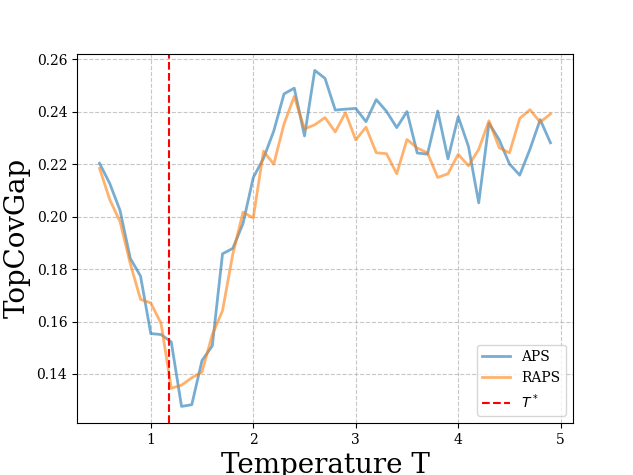}
    \end{subfigure}

    \caption{\textbf{Performance evaluation with small data.} 
    Examining the performance of APS and RAPS with small evaluation data for ImageNet-ViT-B/16 with $\alpha=0.1$ and CP set size 10\%. Each row displays the marginal coverage, prediction size and conditional coverage metrics that are computed over 1 trial using 10\% of the validation set.}
    \label{fig:interp_results_run_1}
    \vspace{-5mm}

\end{figure}

\section{Advantages of the Proposed Guidelines over Mondrian Conformal Prediction}
\label{app:mondrian}

{

In this section, we consider the case of a user that
prioritizes class-conditional coverage.
In this case, our study recommends applying TS 
{ \color{black}
with the temperature of the minimum in the approximated TopCovGap curve, i.e.
$\hat{T} = \arg\min_{T} \text{TopCovGap} (T)$, where TopCovGap is approximated as explained in Appendix \ref{user_guidlines_app}
}
followed by an adaptive CP method like RAPS. Recall that using TS with such temperature can yield high AvgSize, due to the trade-off, but the emphasize here is on  prioritizing low TopCovGap.

An existing alternative is to use the Mondrian Conformal Prediction (MCP) approach \citep{vovk2012conditional}.
MCP aims to construct prediction sets with group-conditional coverage guarantees. 
Considering the groups to be the classes, 
the method is based on partitioning the data used for calibration (i.e., the CP set) by classes and obtaining a threshold per class. At deployment, the thresholds are used in a classwise manner.
However, a major drawback of MCP is its limited applicability to classification tasks with many classes, since its performance degrades when the number of samples used for calibrating CP per class is small \citep{ding2024class}.

Note that in our experiments, we consider CIFAR-100 that has 100 classes and CP set (used to calibrate the CP) of size up to 2000 (20\% of the validation set), and ImageNet that has 1000 classes and CP set of size up to 10000 (20\% of the validation set). This means that, approximately, we have up to 20 samples per class to calibrate CP for CIFAR-100 and up to 10 samples per class to calibrate CP for ImageNet.

Table \ref{table: CC comparison} presents the metrics AvgSize, MarCovGap and TopCovGap for our proposed approach and for MCP, when both utilize RAPS, for the dataset-model  pairs  CIFAR100-ResNet50 and  ImageNet-ViT.
The results demonstrate the superiority of using TS with $\hat{T} = \arg\min_{T} \text{TopCovGap} (T)$ (computed based on approximated curve) %
over MCP across all metrics.
Recall that we consider the case where class-conditional coverage is preferred, and indeed, TS with such temperature constantly yields better TopCovGap than MCP. 
Yet, interestingly, it outperforms MCP also at AvgSize.

To conclude, the experiments presented in this section further shows the practical significance of our guidelines.

\begin{table}
\centering
\caption{Comparison between MCP vs TS with \color{black}{$\hat{T} = \arg\min_{T} \text{TopCovGap} (T)$} \tomt{(enhancing conditional coverage)}, both based on RAPS, for CIFAR100-ResNet50 and ImageNet-ViT. \tomt{Recall that lower metrics imply better performance.}}
\vspace{5mm}
\label{table: CC comparison}
\begin{tabular}{c | c c | c c || c c | c c }
 & \multicolumn{4}{c}{\textbf{CIFAR100-ResNet50}} & \multicolumn{4}{c}{\textbf{ImageNet-ViT}} \\
 & \multicolumn{2}{c}{\textbf{MCP}} & \multicolumn{2}{c}{\textbf{TS with $\hat{T}$}} & \multicolumn{2}{c}{\textbf{MCP}} & \multicolumn{2}{c}{\textbf{TS with $\hat{T}$}} \\
\diagbox{Metric}{CP set size (\%)} & 10\% & 20\% & 10\% & 20\% & 10\% & 20\% & 10\% & 20\% \\
\hline
AvgSize & 4.5 & 8.1 & 2.9 & 3.0 & 7.4 & 6.2 & 2.9 & 3.0 \\  
MarCovGap & 0.04 & 0.05 & 0.04 & 0.06 & 0.09 & 0.07 & 0.06 & 0.06 \\  
TopCovGap & 0.29 & 0.125 & 0.12 & 0.11 & 0.62 & 0.31 & 0.10 & 0.11 \\

\end{tabular}

\end{table}
}

\end{document}